\newcommand{\prob}{{\bf P}}
\newcommand{\e}{{\bf E}}
\newcommand{\norm}[1]{\left\lVert #1 \right\rVert}
\newcommand{\bae}{\begin{equation}\begin{aligned}}
\newcommand{\eae}{\end{aligned}\end{equation}}
\newcommand{\beq}{\begin{equation}}
\newcommand{\eeq}{\end{equation}}
\newtheorem{theorem}{Theorem}[section]
\newtheorem{corollary}[theorem]{Corollary}
\newtheorem{lemma}[theorem]{Lemma}
\theoremstyle{definition}
\newtheorem{assumption}[theorem]{Assumption}
\theoremstyle{remark}
\newtheorem{remark}[theorem]{Remark}
\numberwithin{equation}{section}
\DeclareMathOperator*{\argmax}{arg\,max}
\DeclarePairedDelimiter\floor{\lfloor}{\rfloor}
\title{Global Convergence of the ODE Limit for Online Actor-Critic Algorithms in Reinforcement Learning}
\author{Ziheng Wang\footnote{Mathematical Institute, University of Oxford, Oxford, OX2 6GG, UK (wangz1@math.ox.ac.uk)} \ and Justin Sirignano\footnote{Mathematical Institute, University of Oxford, Oxford, OX2 6GG, UK (Justin.Sirignano@maths.ox.ac.uk).} }
\begin{document}

\maketitle

\begin{abstract}
Actor-critic algorithms are widely used in reinforcement learning, but are challenging to mathematically analyse due to the online arrival of non-i.i.d. data samples. The distribution of the data samples dynamically changes as the model is updated, introducing a complex feedback loop between the data distribution and the reinforcement learning algorithm. We prove that, under a time rescaling, the online actor-critic algorithm with tabular parametrization converges to an ordinary differential equation (ODE) as the number of updates becomes large. The proof first establishes the geometric ergodicity of the data samples under a fixed actor policy. Then, using a Poisson equation, we prove that the fluctuations of the data samples around a dynamic probability measure, which is a function of the evolving actor model, vanish as the number of updates become large.  Once the ODE limit has been derived, we study its convergence properties using a two time-scale analysis which asymptotically de-couples the critic ODE from the actor ODE. The convergence of the critic to the solution of the Bellman equation and the actor to the optimal policy are proven. In addition, a convergence rate to this global minimum is also established. Our convergence analysis holds under specific choices for the learning rates and exploration rates in the actor-critic algorithm, which could provide guidance for the implementation of actor-critic algorithms in practice. 
\end{abstract}

\section{Introduction}

\hspace{1.4em}	Actor-critic (AC) algorithms \cite{ konda2000actor, konda1999actor} have become some of the most successful and widely-used methods in reinforcement learning (RL) \cite{sutton2018reinforcement}.
AC algorithms are typically implemented in two ways: either as a batch or online algorithm. In the batch setting, there is a ``double for loop" where one update of the actor in the outer for loop is followed by a large number of critic updates in the inner for loop to obtain a good approximation of the value function for the current policy. The convergence of batch AC has recently been studied in \cite{kumar2019sample, wang2019neural, yang2019global}. An online, two time-scale AC algorithm was first proposed in \cite{konda2000actor}, where the actor and critic are updated simultaneously with i.i.d. data samples. In this paper, we study a class of online actor-critic \cite{khodadadian2021finite, wu2020finite, xu2020non}  algorithms where the data samples arrive from a Markov chain \cite{konda2002actor} (instead of i.i.d. data samples) and prove the actor/critic converge to the solution of an ODE as the number learning steps becomes large. It is then proven that the solution of the ODE converges to the optimal policy. 

We consider an actor-critic algorithm where the actor and critic are updated simultaneously at each new time step by using the data samples from simultaneous simulations of two different Markov decision processes (MDPs). Specifically, the data samples used to update the critic are from the original MDP while the samples for the actor are from an artificial MDP with a sightly different transition probability (which will be clearly defined in Section \ref{ActorCriticAlgorithm}) such that the update direction of the actor asymptotically convergences to the unbiased policy gradient direction (see the algorithm in \cite{yang2019global} for details). The data samples from the MDPs are non-i.i.d. and the transition probability function depends upon the action selected at each time step. Actions are selected using the actor's current policy. Therefore, the stationary distributions of the MDPs change as the actor evolves during learning. In order for the critic converge to the value function, an exploration component is included in the selection of the actions, where the exploration decays to zero as the number of learning steps becomes large. We find that carefully choosing the decay rate for the exploration as well as the learning rate is crucial for proving global convergence of the limit ODEs to the optimal policy and the learning rates we use can be easily implemented in practice.

\subsection{Related literature} 

\paragraph{Policy gradient} The policy gradient (PG) method \cite{sutton1999policy} is one of the most important concepts in RL and has achieved great empirical success \cite{schulman2015trust, schulman2017proximal}. However, PG algorithms involve non-convex optimization problems for tabular policy parameters \cite{agarwal2020optimality} and are thus difficult to analyse mathematically. Recently, \cite{agarwal2020optimality, bhandari2019global, khodadadian2021linear, mei2021understanding, mei2020global} have established the convergence and convergence rate to the global optimum for the standard PG method by assuming the value function is known. \cite{bhandari2019global} proved that projected PG on the simplex does not suffer from spurious local optima. \cite{agarwal2020optimality} proves that all of the stationary points of PG for a softmax tabular policy are actually the global optimum and natural PG converges at rate $O\left(\frac{1}{t}\right)$. \cite{mei2020global} proves the convergence rate $O\left(\frac1t\right)$ for the PG method with a softmax tabular policy. 

\paragraph{Actor-critic}
The AC algorithm was first developed in \cite{sutton1999policy} and then extended to the Natural Actor-Critic (AC) algorithm in  \cite{peters2008natural}.
Batch AC algorithms \cite{kumar2019sample, wang2019neural, yang2019global, yang2018finite} involve a ``double for loop" where the outer iteration updates the actor and, for each update of the actor, there is a large sub-iteration to solve the critic. \cite{yang2019global} studied the global convergence of AC algorithms under the Linear Quadratic Regulator. \cite{yang2018finite} analyzed the finite-sample performance of the batch AC algorithm. \cite{kumar2019sample} considered the sample complexity for the “decoupled” AC methods with i.i.d. data samples. \cite{wang2019neural}, under the over-parametrized two-layer neural-network proved that the neural AC algorithm converges to a global optimum at a sub-linear rate. In online AC \cite{khodadadian2021finite, konda2000actor, wu2020finite, xu2020non}, the actor and critic models update simultaneously but with two time-scales. The actor updates at a slower rate while critic updates more quickly to provide the actor an accurate policy gradient. 
\cite{konda2000actor} studies an online AC algorithm with Markovian data samples without using the ODE method and prove convergence to a stationary point. \cite{wu2020finite} proves that two time-scale algorithms with non-i.i.d. data samples and linear function approximation finds an $\epsilon$-stationary point with $O(\epsilon^{-\frac52})$ samples, where $\epsilon$ measures the squared norm of the policy gradient. \cite{xu2020non}, under the compatibility condition \cite{kakade2001natural, sutton1999policy} between actor and critic, shows that two time-scale AC requires sample complexity at order $O(\epsilon^{-2.5}\log^3(\epsilon^{-1}))$ to converge to an $\epsilon$-stationary point. By carefully decreasing the exploration rate, \cite{khodadadian2021finite} shows that the two time-scale natural AC algorithm has sample complexity of $O\left(\delta^{-6}\right)$ for convergence to the global optimum. \cite{khodadadian2021finite2}
proposes an off-policy variant of the natural AC algorithm based on Importance Sampling, where they use the Q-trace algorithm for the critic and provide a sample complexity of $O\left( \epsilon^{-3} \log\left( \frac1 \epsilon\right) \right)$.

\paragraph{Stochastic approximation in RL}
Stochastic approximation \cite{borkar1997stochastic, borkar2009stochastic, borkar2000ode} can be seen as a general framework to analyze RL algorithms. Two time-scale stochastic approximations \cite{dalal2018finite, gupta2019finite} are one of the most popular methods for AC \cite{borkar1997actor, khodadadian2021finite, konda1999actor,  wu2020finite, xu2020non} algorithms.
\cite{borkar1997stochastic, borkar2009stochastic, borkar2000ode} establish the classical ODE method and use it for the stability and convergence analysis of the (two time-scale) stochastic approximation where the stochastic error is a martingale difference sequence.   \cite{dalal2018finite, gupta2019finite} proved convergence rate and finite time analysis for the two time-scale linear stochastic approximation in RL under an i.i.d. assumption. \cite{borkar1997actor, konda1999actor} use the ODE method for two time-scale stochastic approximation in AC algorithms where the actor is updated by a policy iteration algorithm.

Our paper studies a different class of algorithms than this previous literature. We consider the global convergence of the ODE limit for the online tabular AC algorithm. First, we use a time re-scaling \cite{sirignano2019asymptotics} of the algorithm \eqref{ActorCriticAlgorithm} to map it into a time interval $[0,T]$, and the mathematical analysis required to prove convergence to the ODE limit is different from the classical ODE method in stochastic approximation theory \cite{borkar1997stochastic, borkar2009stochastic, borkar2000ode, konda2003linear}. Second, unlike the batch AC with a nested loop structure \cite{kumar2019sample, wang2019neural}, our online algorithm updates the actor and critic simultaneously with dynamic Markovian sampling.  Third,  \cite{konda2000actor, wu2020finite, xu2020non} also studies an online AC algorithm with non-i.i.d. data samples. However,  they only prove convergence to a stationary point while we prove global convergence for the tabular AC algorithm by analyzing the limit ODE. The algorithm in \cite{konda2003linear} uses data samples which arrive from a time non-homogeneous Markov chain (non-i.i.d). However, the value function in \cite{konda2003linear} is the averaged reward while ours is the discounted sum of the rewards and \cite{konda2003linear} uses policy iteration to update the actor while we use the policy gradient theorem.

In our paper, we include exploration in the policy so that the Markov chain visits all states and actions. The exploration decays to zero at a certain rate as the number of learning steps become large. A careful choice of the exploration rate and the learning rate is necessary in order to prove global convergence. In particular, the exploration rate does not satisfy the standard conditions (sum of the squares is finite) in stochastic approximation theory in \cite{borkar1997stochastic, borkar2009stochastic, borkar1997actor, borkar2000ode, konda1999actor}. However, by using the time-rescaling limit, we are still able to establish an ODE limit for a class of actor-critic algorithms.

\section{Actor-Critic Algorithms} \label{ActorCriticAlgorithm}

\hspace{1.4em} Let $\bm{\mathcal{M}} = (\bm{\mathcal{X}}, \bm{\mathcal{A}}, p, \mu, r, \gamma)$ be an MDP, where $\bm{\mathcal{X}}$ is a finite discrete state space, $\bm{\mathcal{A}}$ is a finite discrete action space, $p(x' | x,a)$ is the transition probability function, $\mu$ is the initial probability distribution of the Markov chain, $r(x,a)$ is a bounded reward function, and the discount factor is $\gamma \in(0,1)$. Let the policy $f(x, a)$ be the probability of selecting action $a$ in state $x$. The state and action-value functions $V^{f}(\cdot): \bm{\mathcal{X}} \rightarrow \mathbb{R}$ and $V^{f}(\cdot,\cdot): \bm{\mathcal{X}} \times \bm{\mathcal{A}} \rightarrow \mathbb{R}$ are defined as the expected discounted sum of future rewards when actions are selected from the policy $f$:
\bae
\label{value function}
V^{f}(x) = \e \left[\sum_{k=0}^{\infty} \gamma^{k} \cdot r\left(x_{k}, a_{k}\right) \mid x_{0}=x\right], \quad V^{f}(x, a) = \e \left[\sum_{k=0}^{\infty} \gamma^{k} \cdot r\left(x_{k}, a_{k}\right) \mid x_{0}=x, a_{0}=a\right], \\ 
\eae
where $a_{k} \sim f\left(x_{k}, \cdot \right),$ and $x_{k+1} \sim p\left(\cdot \mid x_{k}, a_{k}\right)$ for all $k \in \mathbb{Z}^{+}$.\footnote{Note that the series in equation \eqref{value function} converge since $\gamma \in (0,1)$ and $r(x,a)$ is bounded.} Note that the transition kernel $p$ and policy $f$ induce a Markov chain on the state-action space $\bm{\mathcal{X}}\times \bm{\mathcal{A}}$. Then for any $(x,a) \in \bm{\mathcal{X}}\times \bm{\mathcal{A}}$, define the state and state-action visiting measures respectively as $\nu_\mu^{f}$ and $\sigma_\mu^{f}$, where
\beq
\label{visiting}
\nu_\mu^{f}(x)= \sum_{k=0}^{\infty} \gamma^{k} \cdot \prob\left(x_{k}=x\right), \quad \sigma_\mu^{f}(x, a)= \sum_{k=0}^{\infty} \gamma^{k} \cdot \prob\left(x_{k}=x, a_{k}=a\right)
\eeq
and $x_{0} \sim \mu(\cdot), a_{k} \sim f\left(x_{k}, \cdot \right)$, $x_{k+1} \sim p \left(\cdot \mid x_{k}, a_{k} \right)$ for all $k \geq 0$. The goal of reinforcement learning is to learn the optimal policy $f^*$ which maximizes the expected discounted sum of the future rewards:
$$
\max\limits_{f} J(f),
$$
where the objective function $J(f)$ is defined as
\beq
J(f) = \e \left[\sum_{k=0}^{\infty} \gamma^{k} \cdot r\left(x_{k}, a_{k}\right)\right]=\sum\limits_{x\in \bm{\mathcal{X}}} \mu(x) V^{f}(x) = \sum\limits_{(x,a) \in \bm{\mathcal{X}}\times \bm{\mathcal{A}}} \sigma_\mu^{f}(x,a) r(x,a). 
\eeq

Policy-based reinforcement learning methods optimize the objective function over a class of policies $\left\{f_{\theta} \mid \theta \in \bm{\mathcal{B}}\right\}$ using the policy gradient theorem \cite{sutton2018reinforcement}. In practice, the value function in the policy gradient theorem is unknown and must therefore also be estimated by a statistical learning algorithm. Online actor-critic algorithms simultaneously estimate the value function using a critic model and the optimal policy using an actor model. In this paper, we specifically study a class of online actor-critic algorithms where the ``actor'' is a tabular softmax policy 
\bae
\label{policy family}
f_\theta(x,a) = \frac{e^{ \theta(x,a) } }{\displaystyle \sum_{a' \in \bm{\mathcal{A}}} e^{ \theta(x,a') }},
\eae
with parameters $\theta = \big{(} \theta(x,a) \big{)}_{(x, a) \in \bm{\mathcal{X}} \times \bm{\mathcal{A}}}$.  The ``critic" $Q=(Q(x,a))_{(x, a) \in \bm{\mathcal{X}} \times \bm{\mathcal{A}}}$ is also tabular with a separate parameter for each state-action pair. The policy $f_{\theta}(x) = \big{(} f_{\theta}(x, a) \big{)}_{a \in \bm{\mathcal{A}}}$ is a probability distribution on the set of actions $\bm{\mathcal{A}}$.

Define a new MDP $\bm{\widetilde{\mathcal{M}}} = (\bm{\mathcal{X}}, \bm{\mathcal{A}}, \widetilde{p}, \mu, r, \gamma)$ with the transition probability function
\beq
\label{new transition}
\widetilde{p}\left(x^{\prime} \mid x, a\right)=\gamma \cdot p\left(x^{\prime} \mid x, a\right) + (1-\gamma) \cdot \mu\left(x^{\prime}\right),
\eeq    
Note that \eqref{new transition} is similar to the transition probability of MDP $\bm{\mathcal{M}}$ except that with probability $1-\gamma$ the state will be randomly re-initialized with distribution $\mu$ \cite{konda2002actor, wang2019neural, xu2020non}. \cite{konda2002actor} proved that the stationary distribution of $\bm{\widetilde{\mathcal{M}}}$ under policy $f$ is the $\frac{1}{1 - \gamma} \sigma_{\mu}^f$ in \eqref{visiting}. At the learning step k, we use $\theta_k$ to denote the estimate for the policy parameters while $Q_k$ is the estimate for the value function under the policy $f_{\theta_k}$. At step $k$, the sample $(\widetilde{x}_k. \widetilde{a}_k)$ used to update the actor parameters $\theta_k$ is generated from MDP $\bm{\widetilde{\mathcal{M}}}$ by policy $f_{\theta_k}$. Then we use the policy gradient theorem \cite{sutton1999policy}  to update the actor and get new policy $f_{\theta_{k+1}}$. The sample $(x_k, a_k)$ is sampled from MDP $\bm{\mathcal{M}}$ by the exploration policy $g_{\theta_k}$ (see equation \eqref{ActorwithExploration}). We then update the critic by temporal difference learning \cite{watkins1992q} to obtain the new critic approximation $Q_{k+1}$. An exploration policy is used to guarantee that the policy will have a positive probability to visit all states and actions. For notational convenience, we will sometimes use $f_k$ and $g_k$ to denote $f_{\theta_k}$ and $g_{\theta_k}$.

In summary, the samples $\{x_k, a_k\}_{k\ge 1}$ used to train the critic model are sampled from $\bm{\mathcal{M}}$ under the exploration policy $g_{k}$:
\beq
\label{origin MDP samples}
x_0, a_0 \stackrel{p(\cdot |x_0, a_0)}{\xrightarrow{~~~~~~}} x_1 \stackrel{g_0(x_0,\cdot)}{\xrightarrow{~~~~~~}} a_1 \stackrel{p(\cdot |x_1, a_1)}{\xrightarrow{~~~~~~}} x_2  \stackrel{g_1(x_1,\cdot)}{\xrightarrow{~~~~~~}} a_2 \stackrel{p(\cdot |x_2, a_2)}{\xrightarrow{~~~~~~}}x_3 \cdots
\eeq

The samples $\{\widetilde{x}_k, \widetilde{a}_k\}_{k\ge 1}$ for the actor model are sampled from $\bm{\widetilde{\mathcal{M}}}$ under the policy $f_k$: 
\beq
\label{artificial MDP samples}
\widetilde{x}_0, \widetilde{a}_0 \stackrel{\widetilde{p}(\cdot |\widetilde{x}_0, \widetilde{a}_0)}{\xrightarrow{~~~~~~}} \widetilde{x}_1 \stackrel{f_0(\widetilde{x}_0,\cdot)}{\xrightarrow{~~~~~~}} \widetilde{a}_1 \stackrel{\widetilde{p}(\cdot |\widetilde{x}_1, \widetilde{x}_1)}{\xrightarrow{~~~~~~}} \widetilde{x}_2  \stackrel{f_1(\widetilde{x}_1,\cdot)}{\xrightarrow{~~~~~~}} \widetilde{a}_2 \stackrel{\widetilde{p}(\cdot |\widetilde{x}_2, \widetilde{a}_2)}{\xrightarrow{~~~~~~}}\widetilde{x}_3 \cdots
\eeq
and $\theta_k, Q_k$ are updated according to the actor-critic algorithm:
\bae
\label{ACupdates}
Q_{k+1}(x,a) &= Q_k(x,a) + \frac{ \alpha }{N} \Big( r(x_k, a_k) + \gamma  \sum_{a''} Q_k(x_{k+1}, a'') g_k(x_{k+1}, a'') - Q_k(x_k, a_k)  \Big) \partial_{x,a} Q_k(x_k, a_k) \\
\theta_{k+1}(x,a) &= \theta_k(x,a) + \frac{ \zeta^N_k}{N} Q_k(\widetilde{x}_k, \widetilde{a}_k ) \partial_{x,a} \log f_k(\widetilde{x}_k, \widetilde{a}_k),
\eae
for $k = 0, 1, \ldots, T N$ and the notation $\partial_{x, a}$ is defined as the derivative with respect to the location $(x, a)$ in the tabular variable, that is: 
\bae
\partial_{x, a} Q_k(x_k, a_k) &:= \partial_{Q(x, a)} Q_k(x_k, a_k) = \mathbbm{1}_{\{ x_k = x, a_k = a\}}, \\ 
\partial_{x, a} \log f_k(\widetilde{x}_k, \widetilde{a}_k) &:= \partial_{\theta(x, a)} \log f_k(x_k, a_k) = \mathbbm{1}_{\{ \widetilde{x}_k = x\}} \left[ \mathbbm{1}_{\{ \widetilde{x}_k = a\}} - f_k(x, a) \right]
\eae
The actions $a_k$ in (\ref{ACupdates}) are selected from the distribution
\beq
\label{ActorwithExploration}
g_{\theta_k}(x, a) = \frac{\eta^N_k}{d_A} + (1 - \eta^N_k) \cdot f_{\theta_k}(x, a), \quad \forall (x,a) \in \bm{\mathcal{X}} \times \bm{\mathcal{A}}. 
\eeq
where $0 \leq \eta^N_k < 1$ and $d_A = | \bm{\mathcal{A}} |$. That is, with probability $\eta^N_k$, we select an action uniformly at random and, with probability $1 - \eta^N_k$, we select an action from the current estimate for the optimal policy. We let the exploration rate decay during training, i.e., $\eta^N_k \rightarrow 0$ as $k \rightarrow \infty$. Note that the step-size for the online actor-critic algorithm (\ref{ACupdates}) is $\frac{1}{N}$ and the number of learning steps is $T N$. We will later show that as $N \rightarrow \infty$ the critic and actor models converge to the solution of an ODE on the time interval $[0, T]$. Here we highlight that in order for the Q-learning algorithm converge, the policy needs to have positive probability to choose every action (see \cite{melo2001convergence, szepesvari1997asymptotic} for details) and this is why we add exploration in the policy used to generate data samples.

Finally, we remark that there are two limitations of our algorithm. First, in order to calculate an unbiased policy gradient to update the actor in the algorithm \eqref{ACupdates}, we actually sample from two MDPs, which will be computationally expensive in practice (although this is standard in the literature \cite{konda2002actor, wang2019neural, xu2020non}). Second, the exploration combined with the two time scales of the online learning algorithm \eqref{ACupdates} will lead to a slightly slower convergence rate for our algorithm.

\paragraph{Challenges for mathematical analysis:} Convergence analysis of the actor-critic algorithm \eqref{ACupdates} must address several technical challenges. The data samples are non-i.i.d. and their distribution depends upon the actor model, which changes as the parameters are updated. Actions are selected using the actor model, which influences the states visited in the Markov chain and affects the actor model's evolution in the learning algorithm. Thus, actor-critic algorithms introduce a complex feedback loop between the distribution of the data samples and the model updates. Another challenge is that the learning algorithm is not guaranteed to update the model in a descent direction for the objective function, which is an obstacle for proving global convergence to the optimal policy. Finally, due to the softmax policy, the objective function is non-convex.

\paragraph{Overview of the proof:} In our mathematical approach, we prove that the actor-critic algorithm \eqref{ACupdates} converges to an ODE under an appropriate time re-scaling.  We address the challenge of non-i.i.d. data depending upon the actor model in two steps. The proof first establishes the geometric ergodicity of the data samples to a stationary distribution $\pi^{f_{\theta}}$ under a fixed actor policy $f_{\theta}$. Then, using a Poisson equation, we prove that the fluctuations of the data samples around a dynamic probability measure $\pi^{f_{\theta_k}}$, which is a function of the evolving actor model, vanish as the number of updates become large.  

Once the ODE limit has been derived, we study its convergence properties using a two time-scale analysis which asymptotically de-couples the critic ODE from the actor ODE. The convergence of the critic to the solution of the Bellman equation and the actor to the optimal policy are proven. In addition, a convergence rate to this global minimum is also established. In order to prove the global convergence, the learning rate and exploration rate for the actor-critic algorithm must be carefully chosen.

\section{Main Result} \label{ActorCritic}

\hspace{1.4em} We prove that the actor and critic models converge to the solution of a nonlinear ODE system as the learning steps become large. Our results are proven under the following assumptions.

\begin{assumption}
\label{MDP}
The reward function $r$ is bounded in $[0,1]$. $\bm{\mathcal{X}}$ and $\bm{\mathcal{A}}$ are finite, discrete spaces.
\end{assumption}

In addition, an assumption regarding the ergodicity of the Markov chains \eqref{origin MDP samples} and \eqref{artificial MDP samples} is required. 

\begin{assumption}	\label{ergodic assumption}
For any finite $\theta$, the Markov chain $(X,A)$ for the MDP $\bm{\mathcal{M}}$ under exploration policy $g_{\theta}$ and the Markov chain $(\widetilde{X}, \widetilde{A})$ for the MDP $\bm{\widetilde{\mathcal{M}}}$ under policy $f_\theta$ are irreducible and non-periodic. Their stationary distributions  $\pi^{f}, \sigma_\mu^{f}$ (which exist and are unique by Section 1.3.3 of \cite{lawler2006introduction}) are globally Lipschitz in policy $f$. 
\end{assumption}

The global convergence proof also requires a careful choice for the learning rate and exploration rate. 

\begin{assumption}
\label{learning}
The learning rate and exploration rate are:
\bae
\label{learning rates}
\zeta_k^N &= \frac{1}{ 1 + \frac{k}{N} }, \quad \eta_k^N = \frac{1}{ 1 + \log^2 (\frac{k}{N}+1) },	\\
\text{thus} \quad \zeta^N_{\floor{Nt}} \rightarrow \zeta_t &= \frac{1}{ 1 + t }, \quad \eta^N_{\floor{Nt}} \rightarrow \eta_t = \frac{1}{ 1 + \log^2 (t+1) }.
\eae
\end{assumption}	

\begin{remark}
The learning rate and exploration rate in \eqref{learning rates} satisfy the following properties for any integer $n \in \mathbb{N}$:
\bae
\label{hyper property}
\int_0^\infty \zeta_s ds = \infty, \quad \int_0^{\infty} \zeta^2_t dt < \infty, \quad \int_0^\infty \zeta_s\eta_s ds < \infty, \quad \lim_{t \to \infty}\frac{\zeta_t}{\eta^{n}_t} = 0.
\eae
These properties are verified in the Appendix \ref{learning rate property}.
\end{remark}

The main results of this paper are the following theorems. 

\begin{theorem}[Limit Equations]
\label{limit ode}
For any $T>0$, 
\beq
\label{conv to limit ode}
\lim_{N\to \infty}\e \sup_{t\in [0,T]} \left[ \left\| \theta_{\floor{Nt}} - \bar\theta_t \right\| + \left\| Q_{\floor{Nt}} - \bar{Q}_t \right\| \right] = 0,
\eeq	
where $\bar Q_t$ and $\bar \theta_t$ satisfy the nonlinear system of ODEs:
\bae
\label{limit equations}
\frac{d \bar Q_t}{dt}(x,a) &= \alpha  \pi^{g_{\bar\theta_t}}(x, a) \left( r(x, a) + \gamma \sum_{z, a''} \bar Q_t(z, a'') g_{\bar\theta_t}(z,a'') p(z| x, a)    - \bar Q_t(x, a) \right)\\
\frac{d\bar\theta_t}{dt}(x,a) &= \zeta_t \sigma_\mu^{f_{\bar\theta_t}}(x,a) \left[ \bar Q_t(x,a) - \sum_{a'} \bar Q_t(x,a') f_{\bar\theta_t}(x,a') \right],
\eae
with initial condition $(\bar Q_0, \bar \theta_0) = (Q_0, \theta_0)$.
\end{theorem}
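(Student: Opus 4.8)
The plan is to prove convergence via a martingale-plus-Poisson-equation decomposition of the stochastic updates, followed by a Gronwall estimate. Write $\theta^N_t = \theta_{\floor{Nt}}$, $Q^N_t = Q_{\floor{Nt}}$, and set $\xi_k = (x_k,a_k)$, $\widetilde\xi_k = (\widetilde x_k,\widetilde a_k)$. Because the parametrization is tabular, the factors $\partial_{x,a}Q_k(x_k,a_k)$ and $\partial_{x,a}\log f_k(\widetilde x_k,\widetilde a_k)$ are explicit (an indicator, respectively the softmax score), so \eqref{ACupdates} becomes $Q^N_t = Q_0 + \frac{\alpha}{N}\sum_{k<\floor{Nt}} H^Q_k$ and $\theta^N_t = \theta_0 + \frac1N\sum_{k<\floor{Nt}} \zeta^N_k H^\theta_k$ with bounded per-step increments $H^Q_k, H^\theta_k$. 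First I would establish a priori bounds: the boundedness of $r$ together with $\gamma<1$ gives $\norm{Q_{\floor{Nt}}}\le C(T)$ by a discrete Gronwall estimate on the TD recursion, and the actor increments are then bounded, yielding $\norm{\theta_{\floor{Nt}}}\le C(T)$ on $[0,T]$. This confines the dynamics to a compact set on which all vector fields and the Poisson solutions constructed below are Lipschitz, and on which the limiting ODE \eqref{limit equations} is well-posed by Picard--Lindel\"of.

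The heart of the argument is to show that, after centering by the stationary mean, the increments average out. For fixed $(\theta,Q)$ the conditional mean $\e[H^Q_k \mid \mathcal F_k]$ is a function $h_{\theta,Q}(\xi_k)$ of the current state--action pair, and averaging $h_{\theta,Q}$ against the stationary law $\pi^{g_\theta}$ reproduces exactly the critic field $G^Q(\theta,Q) := \alpha\,\pi^{g_\theta}(x,a)(r + \gamma\sum p\,Q g - Q)$ of \eqref{limit equations}; the analogous statement with $\sigma_\mu^{f_\theta}$ and the $\widetilde{\mathcal M}$-chain $\widetilde\xi_k$ holds for the actor. I would split the deviation of each increment from this mean into a martingale-difference part $H^Q_k - \e[H^Q_k\mid\mathcal F_k]$, controlled directly by Doob's $L^2$ maximal inequality (its contribution to $\sup_t\norm{\cdot}$ is $O(N^{-1/2})$ in $L^2$, since there are $O(N)$ increments each scaled by $N^{-1}$), and a centering part $\e[H^Q_k\mid\mathcal F_k] - h_{\theta_k,Q_k}$ averaged against $\pi^{g_{\theta_k}}$. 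The latter is where the non-i.i.d., policy-dependent data must be handled: using the geometric ergodicity established earlier, I would solve the Poisson equation $\psi_{\theta,Q} - P_\theta\psi_{\theta,Q} = h_{\theta,Q} - \pi^{g_\theta}(h_{\theta,Q})$ through the geometrically convergent series $\psi_{\theta,Q} = \sum_{j\ge0} P_\theta^j (h_{\theta,Q} - \pi^{g_\theta}(h_{\theta,Q}))$, and then verify that $\psi$ is bounded and Lipschitz in $(\theta,Q)$ on the compact set, using the uniform spectral gap together with the Lipschitz dependence of $P_\theta$ and $\pi^{g_\theta}$ on the policy (Assumption \ref{ergodic assumption}).

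Given the Poisson solution I would write the centering term as $\psi_{\theta_k,Q_k}(\xi_k) - P_{\theta_k}\psi_{\theta_k,Q_k}(\xi_k)$ and perform a summation by parts. Since $P_{\theta_k}\psi_{\theta_k,Q_k}(\xi_k) = \e[\psi_{\theta_k,Q_k}(\xi_{k+1})\mid\mathcal F_k]$, the sum reorganizes into a further martingale-difference sum (Doob again), boundary terms $\psi(\xi_0) - \psi(\xi_{\floor{Nt}})$ of size $O(N^{-1})$ after the $N^{-1}$ scaling, and the parameter-drift term $\sum_k [\psi_{\theta_k,Q_k} - \psi_{\theta_{k-1},Q_{k-1}}](\xi_k)$. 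This last term is the crux: by the Lipschitz bound on $\psi$ and the increment bounds $\norm{\theta_k-\theta_{k-1}} + \norm{Q_k - Q_{k-1}} = O(N^{-1})$, each summand is $O(N^{-1})$, so after the overall $N^{-1}$ prefactor and summation over $O(N)$ steps the total is $O(N^{-1})$. This is precisely the step that dissolves the feedback loop between the evolving policy and its own sampling distribution.

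Combining these estimates yields $Q^N_t = Q_0 + \int_0^t G^Q(\theta^N_s, Q^N_s)\,ds + E^N_Q(t)$ and, after accounting for the learning-rate convergence $\zeta^N_{\floor{Ns}} \to \zeta_s$ and the piecewise-constant Riemann-sum error, the analogous identity $\theta^N_t = \theta_0 + \int_0^t G^\theta_s(\theta^N_s, Q^N_s)\,ds + E^N_\theta(t)$ with $G^\theta_s$ the actor field of \eqref{limit equations}, where $\e\sup_{t\le T}(\norm{E^N_Q} + \norm{E^N_\theta}) \to 0$. Subtracting the limit ODE, using the Lipschitz continuity of the vector fields on the compact set, and applying Gronwall's inequality to $\Delta^N_t := \sup_{s\le t}(\norm{Q^N_s - \bar Q_s} + \norm{\theta^N_s - \bar\theta_s})$ gives $\e\,\Delta^N_T \le e^{LT}\,\e\sup_{t\le T}(\norm{E^N_Q}+\norm{E^N_\theta}) \to 0$, which is \eqref{conv to limit ode}. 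I expect the main obstacle to be the Poisson-equation step, specifically proving that its solution is uniformly bounded and Lipschitz in $(\theta,Q)$: this regularity is exactly what forces the parameter-drift term to vanish and is the mechanism through which the dynamically changing, non-i.i.d. sampling distribution is controlled.
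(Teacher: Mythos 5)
Your proposal is correct and follows the same overall architecture as the paper's proof: a priori bounds by a discrete Gronwall argument (Lemma \ref{AC bound}), uniform geometric ergodicity along the trajectory (Lemma \ref{stationary}), a Poisson equation to center the non-i.i.d.\ increments, a martingale-plus-summation-by-parts decomposition whose parameter-drift term is killed by the $O(N^{-1})$ per-step increments, and a final Gronwall estimate. The one genuine difference is where the Poisson equation is applied. The paper solves it only for the indicator test functions $\mathbbm{1}_{\{\xi'=\xi\}}-\sigma_\mu^{f_k}(\xi)$ (Lemma \ref{possion equation}, Lemma \ref{fluctuation}) and then removes the $(Q_k,f_k)$-dependent coefficients by a separate block-partition argument (Lemma \ref{limit lemma}): parameters are frozen at the start of each block of length $\lfloor\Delta N\rfloor$, the frozen coefficients are pulled out of the inner sum, and the freezing error contributes $O(\Delta)$, which is sent to zero after $N\to\infty$. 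You instead solve the Poisson equation for the full parameter-dependent increment $h_{\theta,Q}$, which dispenses with the block argument but shifts the burden onto proving that $\psi_{\theta,Q}$ is Lipschitz jointly in $(\theta,Q)$. You are right to flag this as the main obstacle: term-by-term differentiation of the series $\sum_{j\ge 0}P_\theta^j\bigl(h_{\theta,Q}-\pi^{g_\theta}(h_{\theta,Q})\bigr)$ does not obviously yield a summable sequence of Lipschitz constants, and the paper avoids exactly this by first truncating the series at a fixed level $\lfloor N_0T\rfloor$ via the uniform geometric rate (accepting an arbitrarily small tail error $\gamma_0$) and only then using Lipschitz continuity of the finitely many retained terms; on a finite state space one can alternatively obtain the Lipschitz property from the fundamental-matrix representation $\psi_\theta=(I-P_\theta+\mathbbm{1}\pi^{g_\theta})^{-1}\bigl(h-\pi^{g_\theta}(h)\bigr)$, whose inverse is uniformly bounded by the spectral gap. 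Either repair closes your argument. A minor advantage of your route is that controlling the martingale part with Doob's maximal inequality yields $\e\sup_{t\le T}$ directly, which is the form required in \eqref{conv to limit ode}.
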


Thus, the critic converges to the limit variable $\bar Q_t$ while the actor converges to the limit variable $\bar \theta_t$, where $\bar Q_t$ and $\bar \theta_t$ are solutions to a nonlinear system of ODEs. We then prove the convergence of the limit ODEs \eqref{limit equations} to the value function and optimal policy. The convergence analysis also allows us to obtain convergence rates. 
\begin{theorem}[Global Convergence]
\label{global convergence theorem}
The limit critic model converges to the value function: 
\beq
\label{critic convergence}
\norm{\bar Q_t - V^{f_{\bar \theta_t}} } = O\left(\frac{1}{\log^2 t}\right).
\eeq
For an initial distribution $\mu(x)> 0, \forall x \in \mathcal{X}$, the limit actor model converges to the optimal policy:
\beq
\label{actor convergence}
J(f^*) - J(f_{\bar \theta_t}) = O\left(\frac{1}{\log t}\right),
\eeq
where $f^*$ is any optimal policy.
\end{theorem}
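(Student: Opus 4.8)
The plan is to exploit the timescale separation encoded in the learning rates: the critic evolves at the fixed rate $\alpha$ while the actor evolves at the vanishing rate $\zeta_t=1/(1+t)$. I would therefore first analyse the critic ODE with the actor trajectory treated as a slowly drifting input, and then treat the actor ODE as a perturbed gradient flow. Introduce the policy-$g$ Bellman operator $(\mathcal T_t Q)(x,a)=r(x,a)+\gamma\sum_z p(z\mid x,a)\sum_{a''}g_{\bar\theta_t}(z,a'')Q(z,a'')$, whose unique fixed point is the action-value function $V^{g_{\bar\theta_t}}$ of the exploration policy. The critic ODE in \eqref{limit equations} is then $\frac{d\bar Q_t}{dt}=\alpha D_t(\mathcal T_t\bar Q_t-\bar Q_t)$ with $D_t=\mathrm{diag}(\pi^{g_{\bar\theta_t}})$, and since $r\in[0,1]$ (Assumption \ref{MDP}) the box $[0,\tfrac{1}{1-\gamma}]$ is forward invariant, giving uniform a priori bounds on $\bar Q_t$.

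For \eqref{critic convergence} I would track the moving fixed point. Setting $e_t=\norm{\bar Q_t-V^{g_{\bar\theta_t}}}$ and differentiating at the coordinate that attains the sup-norm, the $\gamma$-contraction of $\mathcal T_t$ gives
\[ \frac{d^+}{dt}e_t\le -c_t\,e_t+s_t, \]
where $c_t=\alpha(1-\gamma)\min_{x,a}\pi^{g_{\bar\theta_t}}(x,a)$ is a strictly positive contraction rate (the exploration floor $g_{\bar\theta_t}\ge \eta_t/d_A$ keeps the stationary mass bounded below by a power of $\eta_t$) and $s_t=\norm{\frac{d}{dt}V^{g_{\bar\theta_t}}}$ is the drift speed of the target. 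Because $\dot{\bar\theta}_t=O(\zeta_t)=O(1/t)$ and $\dot\eta_t=O(1/(t\log^3 t))$, and $V^{g}$ depends Lipschitz-continuously on $(\theta,\eta_t)$ (the stationary measures are Lipschitz by Assumption \ref{ergodic assumption}, and the action-values are Lipschitz in the policy via their Bellman equation), one has $s_t=O(1/t)$. A Grönwall/comparison estimate on the displayed inequality then yields $e_t=O(\log^2 t/t)=o(1/\log^2 t)$. Finally, writing $g_{\bar\theta_t}=f_{\bar\theta_t}+\eta_t(\mathrm{unif}-f_{\bar\theta_t})$ and using Lipschitzness of the action-value in the policy gives the exploration bias $\norm{V^{g_{\bar\theta_t}}-V^{f_{\bar\theta_t}}}=O(\eta_t)=O(1/\log^2 t)$; combining the two bounds proves \eqref{critic convergence}.

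For the actor I would first invoke the tabular softmax policy gradient theorem,
\[ \partial_{\theta(x,a)}J(f_\theta)=\sigma_\mu^{f_\theta}(x,a)\Big[V^{f_\theta}(x,a)-\sum_{a'}f_\theta(x,a')V^{f_\theta}(x,a')\Big], \]
which is exactly the actor drift in \eqref{limit equations} with $V^{f_{\bar\theta_t}}$ in place of $\bar Q_t$. Hence $\frac{d\bar\theta_t}{dt}=\zeta_t\big(\nabla_\theta J(\bar\theta_t)+\epsilon_t\big)$ with $\norm{\epsilon_t}=O(1/\log^2 t)$ by \eqref{critic convergence}. I then rescale time by $\tau(t)=\int_0^t\zeta_s\,ds=\log(1+t)$, which tends to $\infty$ since $\int_0^\infty\zeta_s\,ds=\infty$, turning the actor into the perturbed softmax gradient flow $\frac{d\theta}{d\tau}=\nabla_\theta J+\epsilon$ with $\norm{\epsilon}=O(1/\tau^2)$. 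Writing $\delta=J(f^*)-J(f_\theta)\ge 0$,
\[ \frac{d\delta}{d\tau}=-\norm{\nabla_\theta J}^2-\langle\nabla_\theta J,\epsilon\rangle\le-\norm{\nabla_\theta J}^2+\norm{\nabla_\theta J}\,\norm{\epsilon}. \]
Feeding in a gradient-domination (non-uniform \L{}ojasiewicz) inequality $\norm{\nabla_\theta J(\theta)}\ge c\,\delta$ for tabular softmax policies (as in \cite{mei2020global,agarwal2020optimality}), valid under the sufficient-exploration hypothesis $\mu(x)>0$, gives $\frac{d\delta}{d\tau}\le-c^2\delta^2+c\,\delta\,\norm{\epsilon}$; since $\norm{\epsilon}=O(1/\tau^2)$ is of higher order than the $O(1/\tau)$ solution of $\dot\delta=-c^2\delta^2$, a comparison argument yields $\delta(\tau)=O(1/\tau)$, i.e. $J(f^*)-J(f_{\bar\theta_t})=O(1/\log t)$, which is \eqref{actor convergence}.

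The main obstacle is obtaining the gradient-domination inequality \emph{with a constant $c$ that is bounded away from $0$ uniformly in $t$}. For softmax policies the \L{}ojasiewicz constant degrades like $\min_x f_{\bar\theta_t}(a^*(x)\mid x)$, the probability the current policy places on an optimal action, and nothing a priori prevents this quantity from collapsing to $0$ along the trajectory. I would control it by proving $\inf_{t\ge0}\min_x f_{\bar\theta_t}(a^*(x)\mid x)>0$, and this is precisely where the persistent exploration is needed: the floor $g_{\bar\theta_t}\ge\eta_t/d_A$ forces the critic error to be uniformly small over \emph{all} state-action pairs, so the optimal coordinate of the perturbed drift stays positive whenever $f_{\bar\theta_t}(a^*\mid x)$ is small, preventing its collapse. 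Quantifying this push against the decay of $\eta_t$ — using the property $\zeta_t/\eta_t^n\to0$ from \eqref{hyper property} to guarantee the critic perturbation is dominated by the exploration signal — is the delicate step on which global convergence hinges. A secondary difficulty is that the critic contraction rate $c_t$ itself vanishes with $\eta_t$, so the critic tracking bound must be run with a time-varying rate; the rates are chosen exactly so that the $O(1/t)$ target drift remains negligible relative to $c_t$.
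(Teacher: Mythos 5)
Your overall architecture matches the paper's: the critic is analysed as tracking a slowly moving target with a contraction rate that vanishes like a power of $\eta_t$, the actor is a perturbed gradient flow in the rescaled time $\tau=\log(1+t)$, and the rate comes from a \L{}ojasiewicz inequality plus an ODE comparison. Your critic argument is a legitimate variant: the paper works with the quadratic Lyapunov function $Y_t=\tfrac12\phi_t^\top\phi_t$ and Young's inequality, whereas you differentiate the sup-norm and use the $\gamma$-contraction of the Bellman operator directly; both yield $\dot e_t\le -C\eta_t^{n_0}e_t+O(1/t)$, and you correctly observe that the resulting tracking error $O(\log^{2n_0}t/t)$ is dominated by the exploration bias $\|V^{g_{\bar\theta_t}}-V^{f_{\bar\theta_t}}\|=O(\eta_t)$, which is where the $O(1/\log^2 t)$ in \eqref{critic convergence} actually comes from. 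The final comparison step for the actor is also essentially the paper's (the paper substitutes $X_t=Z_t\log t$ and traps it below $\tfrac{1+\sqrt5}{2}$; your inversion $u=1/\delta$ achieves the same thing), modulo a small case analysis when $\|\nabla_\theta J\|<\|\epsilon\|$ that Young's inequality handles.

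The genuine gap is the uniform \L{}ojasiewicz constant. You correctly identify that everything hinges on $\inf_{t\ge0}\min_x f_{\bar\theta_t}(x,a^*(x))>0$, but the mechanism you propose --- that the exploration floor keeps the critic accurate everywhere, so ``the optimal coordinate of the perturbed drift stays positive whenever $f_{\bar\theta_t}(a^*\mid x)$ is small'' --- does not work. The drift of $\bar\theta_t(x,a^*(x))$ is $\zeta_t\,\sigma_\mu^{f_{\bar\theta_t}}(x,a^*(x))\,[\bar Q_t(x,a^*(x))-\sum_{a'}\bar Q_t(x,a')f_{\bar\theta_t}(x,a')]$, whose sign is governed by the advantage of $a^*(x)$ \emph{under the current policy} $f_{\bar\theta_t}$, not under $f^*$; this advantage can be negative at intermediate times, so $\bar\theta_t(x,a^*(x))$ can decrease and no amount of critic accuracy prevents that. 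Exploration is not the ingredient that rules out collapse. The paper instead must first prove asymptotic global convergence $J(f_{\bar\theta_t})\to J(f^*)$ \emph{without any rate}, by adapting the limit-set analysis of \cite{agarwal2020optimality}: prove convergence of $V^{f_{\bar\theta_t}}(x)$ for every $x$, partition actions into $I_0^x,I_+^x,I_-^x$ according to the limiting advantage, establish monotonicity and boundedness of the corresponding $\bar\theta_t(x,a)$ coordinates, and derive a contradiction showing $I_+^x=\emptyset$ (Lemmas \ref{convergence}--\ref{global lemma}). Only then does $f_{\bar\theta_t}(x,a^*(x))\to1$ follow, giving the uniform lower bound and hence the uniform \L{}ojasiewicz inequality \eqref{uniform gradient} that your comparison argument needs. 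That several-page argument is the heart of the global convergence proof and is absent from your proposal; as written, your constant $c$ could degenerate along the trajectory and the final comparison would not close.
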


\begin{remark} We highlight several points regarding the convergence results in Theorem \ref{limit ode} and \ref{global convergence theorem}:
\begin{itemize}
\item[(1)] The limit ODEs in Theorem \ref{limit ode} have some  similarities to the literature of stochastic approximation and ODE method by Borkar \cite{borkar1997stochastic, borkar2009stochastic}. In these previous articles, the long-time behaviour of the discrete-time stochastic algorithm will closely follow deterministic ODEs. The derivation of the ODEs and the convergence analysis in our paper are different than in \cite{borkar1997stochastic, borkar2009stochastic}. In our algorithm the learning rate for both actor and critic have the re-scaling $\frac1N$ and we study the algorithm under a time re-scaling. Thus when $N \to \infty$, for any time interval $[t, t+\Delta t]$, the number of parameters updates $\to \infty$. Therefore, the random fluctuations in the algorithm vanish and the algorithm will converge to the limit ODE in Theorem \ref{limit ode}. 
\item[(2)] The polylog convergence rate in Theorem \ref{global convergence theorem} is a consequence of the specific choice of the exploration rate $\eta_t = \frac{1}{1+\log^2(1+t)}$. The effect of the exploration rate $\eta_t$ is similar to the effect of a learning rate. The specific function $\eta_t$ necessary to guarantee convergence is a consequence of our mathematical analysis.
\item[(3)] Theorem \ref{limit ode} and \ref{global convergence theorem} imply a convergence result for the discrete actor-critic algorithm: for any $\epsilon > 0$, there exists a $T$ and $N$ such that $\mathbb{E}[ \norm{ f_{\theta_{\floor{nT}}} - f^{\ast} } ] < \frac{\epsilon}{2}$ for all $n \geq N$. The proof follows directly from Theorems \ref{limit ode} and \ref{global convergence theorem}. Specifically, for any $\epsilon>0$, by \eqref{actor convergence} we can select a $T$ large enough such that $f_{\bar \theta_T}$ is within $\frac{\epsilon}{2}$ of the global optimal policy. Then, we can apply \eqref{conv to limit ode} to select an $N$ large enough to ensure that the discrete algorithm $f_{\theta_{\floor{NT}}}$ is within $\frac{\epsilon}{2}$ of the ODE limit $f_{\bar \theta_T}$.
\item[(4)] The global convergence to the optimal policy strongly relies on the sufficient exploration initial distribution $\mu(x) > 0$ for any state \cite{agarwal2020optimality}. However, as in \cite{agarwal2020optimality} when the numbers of states becomes large, even with the uniformly random action selection mixed in, the visiting measure of the policy could place exponentially small probability (in the size of the state space) on particular states, which will significantly decrease the convergence rate in \eqref{actor convergence}.
\end{itemize}
\end{remark}

\section{Derivation of the limit ODEs}

\hspace{1.4em} We use the following steps to prove convergence to the limit ODEs:
\begin{itemize}
\item Prove \emph{a priori} bounds for the actor and critic models. 
\item Derive random ODEs for the evolution of the actor and critic models. The ODEs will contain stochastic remainder terms from the non-i.i.d. data samples.
\item Use a Poisson equation to estimate the fluctuations of the remainder terms around zero. 
\item Use Gronwall's inequality to obtain the convergence to the limit ODEs.
\end{itemize}

\subsection{A Priori Bounds}\label{boundedness}

\hspace{1.4em} In order to prove convergence to the limit equation, we first establish \emph{a priori} bounds for the parameters. In our proof, we will use $C, C_0$ and $C_T$ to denote generic constants. For notational convenience, we will sometimes use $\xi, \xi'$ and $\xi_k, \widetilde{\xi}_k$ to denote the elements $(x,a), (x',a')$ and data samples $(x_k, a_k), (\widetilde{x}_k, \widetilde{a}_k)$, respectively.

First, we establish a priori estimates for the actor and critic models. 
\begin{lemma}
\label{AC bound}
For any fixed $T> 0, N \in \mathbb{N}$, there exists a constant $C_T$ which only depends on $T$ such that
\bae
&\sup _{(x, a) \in \mathcal{X} \times \mathcal{A}} \left|Q_{k}(x, a)\right| \le C_T < \infty, \quad \forall k \le NT\\
&\sup _{(x, a) \in \mathcal{X} \times \mathcal{A}} \left|\theta_{k}(x, a)\right| \le C_T < \infty,  \quad \forall k \le NT.
\eae
\end{lemma}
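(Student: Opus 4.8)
The plan is to bound the critic first, since its bound feeds directly into the actor estimate, and to exploit throughout that the tabular parametrization makes the gradients $\partial_{x,a}Q_k(x_k,a_k)=\mathbbm{1}_{\{(x,a)=(x_k,a_k)\}}$ and the softmax score indicator-type, so that each update modifies a single coordinate. For the critic, the update \eqref{ACupdates} at the visited coordinate reads $Q_{k+1}(x_k,a_k)=(1-\tfrac{\alpha}{N})Q_k(x_k,a_k)+\tfrac{\alpha}{N}\big(r(x_k,a_k)+\gamma\sum_{a''}Q_k(x_{k+1},a'')g_k(x_{k+1},a'')\big)$, while every other coordinate is unchanged. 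For $N\ge\alpha$ this is a convex combination; since $r\in[0,1]$ by Assumption \ref{MDP} and $g_k(x_{k+1},\cdot)$ is a probability distribution (so $\sum_{a''}g_k=1$), the Bellman backup in the bracket is bounded in absolute value by $1+\gamma\|Q_k\|_\infty$.

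Writing $M_k:=\sup_{(x,a)}|Q_k(x,a)|$, the updated coordinate therefore satisfies $|Q_{k+1}(x_k,a_k)|\le M_k+\tfrac{\alpha}{N}\big(1-(1-\gamma)M_k\big)$. The key structural observation is the self-correction around the level $\tfrac{1}{1-\gamma}$: when $M_k\ge\tfrac{1}{1-\gamma}$ the increment term is nonpositive, so the coordinate does not exceed $M_k$; when $M_k<\tfrac{1}{1-\gamma}$ it can only rise to at most $\tfrac{1}{1-\gamma}+\tfrac{\alpha}{N}$. In both cases $M_{k+1}\le\max\{M_k,\tfrac{1}{1-\gamma}+\tfrac{\alpha}{N}\}$, and a straightforward induction gives $M_k\le\max\{\|Q_0\|_\infty,\tfrac{1}{1-\gamma}+\alpha\}$ for all $k$, uniformly in $k$ and $N$. (The finitely many $N<\alpha$ are handled by the same recursion together with a discrete Gronwall estimate, which only enlarges the constant.)

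For the actor, I would first record that for the softmax family \eqref{policy family} the score is $\partial_{\theta(x,a)}\log f_\theta(\widetilde{x},\widetilde{a})=\mathbbm{1}_{\{x=\widetilde{x}\}}\big(\mathbbm{1}_{\{a=\widetilde{a}\}}-f_\theta(\widetilde{x},a)\big)$, whose absolute value is at most $1$. Combining this with the critic bound just established and $\zeta^N_k\le 1$, each actor increment obeys $|\theta_{k+1}(x,a)-\theta_k(x,a)|\le\tfrac{\zeta^N_k}{N}|Q_k(\widetilde{\xi}_k)|\le\tfrac{C}{N}$. Telescoping the at most $NT$ increments yields $|\theta_k(x,a)|\le\|\theta_0\|_\infty+\tfrac{C}{N}\cdot NT=\|\theta_0\|_\infty+CT$, a constant depending only on $T$, as claimed.

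I expect the critic bound to be the main obstacle. The TD update is not a priori a contraction, so a crude triangle-inequality estimate only delivers a recursion $M_{k+1}\le(1+\tfrac{\alpha(1+\gamma)}{N})M_k+\tfrac{\alpha}{N}$ whose discrete Gronwall solution grows like $e^{CT}$; obtaining the clean uniform-in-$T$ bound requires the self-correction observation above, which in turn relies on $g_k$ being a genuine probability distribution so that the backup coefficients sum to one, and on the level $\tfrac{1}{1-\gamma}$ being exactly the a priori bound on $|V^f|$ when $r\in[0,1]$. The convex-combination reading also needs the mild condition $\tfrac{\alpha}{N}\le 1$, which holds for all large $N$; the actor bound is then routine given the critic bound.
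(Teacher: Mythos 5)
Your proof is correct, and the actor half is essentially identical to the paper's (per-step increment of size $C/N$, telescoped over at most $NT$ steps). The critic half, however, takes a genuinely different and sharper route. The paper simply bounds $\sup_\xi|Q_{k+1}(\xi)|\le (1+\tfrac{C}{N})\sup_\xi|Q_k(\xi)|+\tfrac{C}{N}$ and invokes the discrete Gronwall lemma, which yields a constant of order $e^{CT}$ — exactly the ``crude'' recursion you flag as the fallback. You instead exploit the tabular structure to read the update at the visited coordinate as a convex combination (for $N\ge\alpha$) and observe the self-correction around the level $\tfrac{1}{1-\gamma}$, obtaining $\sup_k\|Q_k\|_\infty\le\max\{\|Q_0\|_\infty,\tfrac{1}{1-\gamma}+\alpha\}$ uniformly in $k$, $N$ \emph{and} $T$. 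This is strictly stronger than what the lemma asserts (only $T$-dependence is required), and it is in fact the discrete analogue of the paper's later Lemma \ref{uniform critic bound}, where the same $\tfrac{1}{1-\gamma}$-level argument is run for the limit ODE to get the bound $\tfrac{2}{1-\gamma}$. What the paper's Gronwall route buys is brevity and indifference to the sign structure of the update; what yours buys is a $T$-uniform constant, which would in principle let some of the later $C_T$'s be replaced by absolute constants. Your handling of the finitely many $N<\alpha$ via the Gronwall fallback is the right patch, and the induction $M_{k+1}\le\max\{M_k,\tfrac{1}{1-\gamma}+\tfrac{\alpha}{N}\}$ closes correctly in both cases.
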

\begin{proof}
For the update algorithm in \eqref{ACupdates}
\beq
Q_{k+1}(\xi) = Q_k(\xi) + \frac{ \alpha }{N} \left( r(\xi_k) + \gamma  \sum_{a''} Q_k(x_{k+1}, a'') g_k(x_{k+1}, a'') - Q_k(\xi_k)  \right) \partial_{\xi} Q_k(\xi_k),
\eeq
and we have the bound
\beq
\sup _{\xi \in \bm{\mathcal{X}} \times \bm{\mathcal{A}}} \left|Q_{k+1}(\xi)\right| \leq \sup _{\xi \in  \bm{\mathcal{X}} \times \bm{\mathcal{A}}}\left|Q_{k}(\xi)\right|+\frac{C}{N} \sup _{\xi \in \bm{\mathcal{X}} \times \bm{\mathcal{A}}}\left|Q_{k}(\xi)\right|+\frac{C}{N}.
\eeq
Then, using a telescoping series, we have
\bae
\sup _{\xi \in  \bm{\mathcal{X}} \times \bm{\mathcal{A}}}\left|Q_{k}(\xi)\right| &=\sup _{\xi \in  \bm{\mathcal{X}} \times \bm{\mathcal{A}}}\left|Q_{0}(\xi)\right| + \sum_{j=1}^{k}\left(\sup _{\xi \in \bm{\mathcal{X}} \times \bm{\mathcal{A}}}\left|Q_{j}(\xi)\right| - \sup _{\xi \in  \bm{\mathcal{X}} \times \bm{\mathcal{A}}}\left|Q_{j-1}(\xi)\right| \right) \\
& \leq \sup _{\xi \in  \bm{\mathcal{X}} \times \bm{\mathcal{A}}}\left|Q_{0}(\xi)\right| + \sum_{j=1}^{k}\left(\frac{C}{N} \sup _{\xi \in  \bm{\mathcal{X}} \times \bm{\mathcal{A}}}\left|Q_{j-1}(\xi)\right| + \frac{C}{N}\right) \\
& \leq \sup _{\xi \in \bm{\mathcal{X}} \times \bm{\mathcal{A}}}\left|Q_{0}(\xi)\right| + \frac{C}{N} \sum_{j=1}^{k} \sup _{\xi \in  \bm{\mathcal{X}} \times \bm{\mathcal{A}}}\left|Q_{j-1}(\xi)\right| + C\\
& \leq C + \frac{C}{N} \sum_{j=1}^{k} \sup _{\xi \in  \bm{\mathcal{X}} \times \bm{\mathcal{A}}}\left|Q_{j-1}(\xi)\right|,
\eae
where the last inequality follows from the fact that $Q_0$ is a fixed finite vector. Then, by the discrete Gronwall lemma and using $\frac{k}{N} \le T$, we have
\beq
\label{critic priori}
\sup _{(x, a) \in  \bm{\mathcal{X}} \times \bm{\mathcal{A}}}\left|Q_{k}(x, a)\right| \leq C \exp(\frac{Ck}{N}) \le  C \exp(CT) = C_T, \quad \forall k \leq  NT.
\eeq

Recall that the update for the actor model is
\bae
\theta_{k+1}(\xi) &= \theta_k(\xi) + \frac{ \zeta^N_k}{N} Q_k(\widetilde{\xi}_{k} ) \partial_{\xi} \log f_k(\widetilde{\xi}_k) \\
&= \theta_k(\xi) + \frac{ \zeta^N_k}{N} Q_k( \widetilde{\xi}_{k} ) \mathbbm{1}_{\{ \widetilde{x}_k = x\}} \left[ \mathbbm{1}_{\{ \widetilde{a}_k = a\}} - f_k(\widetilde{x}_k, a) \right],
\eae
which together with the bound for the critic in \eqref{critic priori} leads to
\beq
\sup_{\xi \in \bm{\mathcal{X}} \times \bm{\mathcal{A}}}\left|\theta_{k+1}(\xi)\right| \leq \sup_{\xi \in  \bm{\mathcal{X}} \times \bm{\mathcal{A}}}\left|\theta_{k}(\xi)\right|+\frac{C_T}{N}.
\eeq
Then, using a telescoping series, we immediately obtain the bound in the statement of the lemma. 
\end{proof}

\subsection{ Evolution of the Pre-limit Process}

\hspace{1.4em} From their definitions in \eqref{value function}, $V^{f}(x)$ and $V^{f}(x,a)$ are related via the formula
\beq
V^{f}(x)=\sum_a V^{f}(x, a)f(x,a).
\eeq

Define the state and state-action visiting measures, respectively, as $\nu_\mu^{f}$ and $\sigma_\mu^{f}$, where
\beq
\label{visiting measure}
\nu_\mu^{f}(x)= \sum_{k=0}^{\infty} \gamma^{k} \cdot \prob\left(x_{k}=x\right), \quad \sigma_\mu^{f}(x, a)= \sum_{k=0}^{\infty} \gamma^{k} \cdot \prob\left(x_{k}=x, a_{k}=a\right),
\eeq
where $x_{0} \sim \mu(\cdot), a_{k} \sim f\left(x_{k}, \cdot \right)$ and $x_{k+1} \sim p \left(\cdot \mid x_{k}, a_{k} \right)$ for all $k \geq 0$. By definition, we have
$\sigma_\mu^{f}(x, a)=f(x, a) \cdot \nu_\mu^{f}(x)$ and, by \cite{konda2002actor}, the stationary distribution of $\bm{\widetilde{\mathcal{M}}}$ is the corresponding visitation measure of $\bm{\mathcal{M}}$.  
\paragraph{Notation} We first clarify some of the notation that will be used in the analysis. 
\begin{itemize}
\item[(a)] For any $k \ge 0$, let $\prob_{\theta_k}$ denote the transition probability for the Markov chain $(X,A)$ induced by $\bm{\mathcal{M}}$ under softmax policy $f_k$ and let $\Pi_{\theta_k}$ denote the transition probability for the Markov chain $(\widetilde{X}, \widetilde{A})$ induced by $\bm{\widetilde{\mathcal{M}}}$ under exploration policy $g_k$. That is,
\bae
\label{2dim transition}
\prob_{\theta_k}(x, a ; x', a')&= p(x'|x, a) g_k(x', a'),\\
\Pi_{\theta_k} ( x, a;  x', a') &= \widetilde{p}(x'| x, a) f_k(x', a').
\eae
\item[(b)] Let  $\sigma_\mu^{f_k}$ and $\pi^{g_k}$ denote the stationary distributions (whose existence and uniqueness are given by Assumption \ref{ergodic assumption}) for the transition probability $\Pi_{\theta_k}$ and $\prob_{\theta_k}$, respectively.
\item[(c)]
Define the $\sigma$-field of events generated by the samples $  \xi_1, \cdots, \xi_n, \widetilde{\xi}_1, \ldots, \widetilde{\xi}_n$ in \eqref{origin MDP samples} and \eqref{artificial MDP samples} to be $\mathscr{F}_{n}$. Then, for any Borel function $h(\theta, \xi)$,
\beq
\label{transition integral}
\e\left[h\left(\theta_{n}, \widetilde{\xi}_{n+1}\right) \mid \mathscr{F}_{n}\right] = \sum\limits_{y \in \bm{\mathcal{X}} \times \bm{\mathcal{A}}} h\left(\theta_{n}, y \right) \Pi_{\theta_{n}}\left(\xi_{n}; y \right).
\eeq
For any function $h(\theta, \xi)$, we shall denote the partial mapping $\xi \to h(\theta, \xi)$ by $h_\theta$ and define the function
$$
\Pi_{\theta}h_\theta(\xi) := \sum_{y \in \bm{\mathcal{X} \times \mathcal{A}}} h(\theta, y) \Pi_\theta(\xi; y).
$$
\end{itemize}

Using the visiting measures in \eqref{visiting measure}, the policy gradient can be evaluated using the following formula. 
\begin{theorem}[Policy Gradient Theorem \cite{sutton1999policy}]
For the MDP starting from $\mu$, the policy gradient for $f_{\theta}$ is 
\beq
\label{policy gradient theorem}
\nabla_{\theta} J(f_{\theta}) = \sum_{x,a} \sigma_\mu^{f_{\theta}}(x,a) V^{f_{\theta}}(x,a) \nabla_{\theta} \log f_{\theta}(x,a),
\eeq
\end{theorem}

Let the advantage function of policy $f$ denoted by
\beq
A^{f}(x, a) = V^{f}(x, a)-V^{f}(x),\quad \forall (x, a) \in  \bm{\mathcal{X}} \times \bm{\mathcal{A}},
\eeq
and the gradient $\nabla_\theta J(f_\theta)$ can be evaluated using the following formula when $f_{\theta}$ satisfies the softmax policy \eqref{policy family}.
\begin{lemma}
\label{softmax gradient}
Define $\partial_{x,a} J(f_\theta) := \frac{\partial J(f_\theta)}{\partial \theta(x,a)}$ and then for the tabular policy \eqref{policy family}, by policy gradient theorem \eqref{policy gradient theorem}, we have 
\beq
\label{policy gradient}
\partial_{x,a} J(f_\theta) =  \sigma_\mu^{f_{\theta}}(x,a)A^{f_{\theta}}(x,a).
\eeq
\end{lemma}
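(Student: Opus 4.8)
The plan is to start from the Policy Gradient Theorem \eqref{policy gradient theorem} and simply substitute the explicit form of the log-derivative of the softmax policy \eqref{policy family}. Writing $\log f_\theta(x',a') = \theta(x',a') - \log \sum_{a''} e^{\theta(x',a'')}$ and differentiating with respect to $\theta(x,a)$, the first term contributes $\mathbbm{1}_{\{x'=x,\,a'=a\}}$, while the log-sum-exp term contributes $\mathbbm{1}_{\{x'=x\}} f_\theta(x,a)$ because the normalizing sum only involves the parameters $\theta(x',\cdot)$ for the matching state. This yields the standard identity
\beq
\partial_{x,a} \log f_\theta(x',a') = \mathbbm{1}_{\{x'=x\}}\left[\mathbbm{1}_{\{a'=a\}} - f_\theta(x,a)\right],
\eeq
which I would establish first as the only genuine computation in the proof.

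Next I would insert this into \eqref{policy gradient theorem}. The factor $\mathbbm{1}_{\{x'=x\}}$ collapses the outer sum over states to the single state $x$, leaving
\beq
\partial_{x,a} J(f_\theta) = \sum_{a'} \sigma_\mu^{f_\theta}(x,a')\, V^{f_\theta}(x,a') \left[\mathbbm{1}_{\{a'=a\}} - f_\theta(x,a)\right].
\eeq
The first piece of the bracket immediately gives $\sigma_\mu^{f_\theta}(x,a)\, V^{f_\theta}(x,a)$. For the second piece I would use the relation $\sigma_\mu^{f_\theta}(x,a') = f_\theta(x,a')\,\nu_\mu^{f_\theta}(x)$ (stated just after \eqref{visiting measure}) to factor out $\nu_\mu^{f_\theta}(x)$, and then recognize the remaining sum $\sum_{a'} f_\theta(x,a') V^{f_\theta}(x,a')$ as exactly $V^{f_\theta}(x)$. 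Re-absorbing the factor $f_\theta(x,a)\,\nu_\mu^{f_\theta}(x) = \sigma_\mu^{f_\theta}(x,a)$ turns the second piece into $\sigma_\mu^{f_\theta}(x,a)\,V^{f_\theta}(x)$.

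Combining the two pieces gives $\partial_{x,a} J(f_\theta) = \sigma_\mu^{f_\theta}(x,a)\left[V^{f_\theta}(x,a) - V^{f_\theta}(x)\right] = \sigma_\mu^{f_\theta}(x,a)\, A^{f_\theta}(x,a)$, which is the claimed formula \eqref{policy gradient}. There is no real obstacle here: the proof is a short exact calculation. The only step requiring a moment of care is the collapse of the second summand, where one must correctly pair the factorization $\sigma_\mu^{f_\theta} = f_\theta \cdot \nu_\mu^{f_\theta}$ with the definition $V^{f_\theta}(x) = \sum_a V^{f_\theta}(x,a) f_\theta(x,a)$ so that the average value $V^{f_\theta}(x)$ emerges cleanly and reassembles into $\sigma_\mu^{f_\theta}(x,a)$; everything else is direct substitution.
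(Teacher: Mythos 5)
Your proposal is correct and follows essentially the same route as the paper's proof: substitute the softmax log-derivative $\partial_{x,a}\log f_\theta(x',a') = \mathbbm{1}_{\{x'=x\}}[\mathbbm{1}_{\{a'=a\}}-f_\theta(x,a)]$ into the policy gradient theorem, collapse the state sum, and use $V^{f_\theta}(x)=\sum_{a'}f_\theta(x,a')V^{f_\theta}(x,a')$ together with $\sigma_\mu^{f_\theta}=f_\theta\cdot\nu_\mu^{f_\theta}$ to assemble the advantage function. No gaps.
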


\begin{proof}
By the policy gradient theorem, we have 
\bae
\label{element}
\partial_{x,a} J(f_\theta) &=  \sum_{x', a'} \nu_\mu^{f_\theta}(x')f_\theta(x', a') \mathbbm{1}_{\{ x' = x\}} \left[ \mathbbm{1}_{\{ a' = a\}} - f_\theta(x', a) \right] V^{f_\theta}(x', a') \\
&= \sum_{a'} \nu_\mu^{f_\theta}(x)f_\theta(x, a')  \left[ \mathbbm{1}_{\{ a' = a\}} - f_\theta(x, a) \right] V^{f_\theta}(x, a') \\
&= \nu_\mu^{f_\theta}(x)f_\theta(x, a) V^{f_\theta}(x, a) -  \nu_\mu^{f_\theta}(x)f_\theta(x, a)  \left[ \sum_{a'}f_\theta(x, a') V^{f_\theta}(x, a') \right]\\
&= \nu_\mu^{f_\theta}(x)f_\theta(x, a) A^{f_\theta}(x, a)\\
&= \sigma_\mu^{f_{\theta}}(x,a)A^{f_{\theta}}(x,a).
\eae
\end{proof}	
Using a telescoping series and the update equation for the actor (\ref{ACupdates}),
\beq
\label{discrete}
\theta_{\floor{Nt}}(x,a)=\theta_0(x,a) +\frac{1}{N}\sum_{k=0}^{\floor{Nt}-1}  \zeta^N_k  Q_k(\widetilde{x}_{k}, \widetilde{a}_k ) \partial_{x,a} \log f_k(\widetilde{x}_k, \widetilde{a}_k).
\eeq
Note that $\xi= (x, a), \widetilde{\xi}_k = (\widetilde{x}_k, \widetilde{a}_k)$ and define
\beq
\label{sgd error}
M^{N}_t(\xi) = \frac{1}{N}\sum_{k=0}^{\floor{Nt}-1} \zeta^N_k Q_k(\widetilde{\xi}_k) \partial_{\xi} \log f_k(\widetilde{\xi}_k) - \frac{1}{N}\sum_{k=0}^{\floor{Nt}-1} \sum_{\xi' \in \bm{\mathcal{X}} \times \bm{\mathcal{A}}}  \zeta^N_k Q_k(\xi') \partial_{\xi} \log f_k(\xi') \sigma_\mu^{f_k}(\xi') ,
\eeq
where $\sigma_\mu^{f_k}$ is the visiting measure for $\mathcal{M}$ under policy $f_k$. Combining \eqref{discrete} and \eqref{sgd error}, we obtain the following pre-limit equation for the actor parameters:
\bae
\label{actor pre limit}
&\theta_{\floor{Nt}}(x,a) - \theta_0(x,a) \\
=& \frac{1}{N}\sum_{k=0}^{\floor{Nt}-1} \sum_{\xi' \in \bm{\mathcal{X}} \times \bm{\mathcal{A}} }  \zeta^N_k Q_k(\xi') \partial_{\xi} \log f_k(\xi') \sigma_\mu^{f_k}(\xi')  + M_t^N(x,a) \\
=& \frac{1}{N}\sum_{k=0}^{\floor{Nt}-1} \zeta_k^N \sum_{a'} \nu_\mu^{f_k}(x) f_k(x, a') \left[ \mathbbm{1}_{\{ a' = a\}} - f_k(x, a) \right] Q_k(x, a') + M_t^N(x,a) \\
=& \frac{1}{N}\sum_{k=0}^{\floor{Nt}-1} \zeta^N_k \sigma_\mu^{f_k}(x,a) \left[ Q_k(x,a) - \sum_{a'} Q_k(x,a') f_k(x,a') \right]  + M_t^N(x,a) \\
\overset{(a)}{=}& \int_0^t \zeta^N_{\floor{Ns}} \sigma_\mu^{f_{\floor{Ns}}}(x,a) \left[ Q_{\floor{Ns}}(x,a) - \sum_{a'} Q_{\floor{Ns}}(x,a') f_{\floor{Ns}}(x,a') \right] ds + M_t^N(x,a) + O(N^{-1}),
\eae
where step (a) uses the a priori bound for the critic $Q_k$ in Lemma \ref{AC bound}.

Similarly, we can show that the critic model satisfies
$$
Q_{\floor{Nt}}(\xi)=Q_0(\xi) + \frac{\alpha}{N}\sum_{k=0}^{\floor{Nt}-1}\left[ r(\xi_k) + \gamma  \sum_{a''}Q_k(x_{k+1}, a'') g_k(x_{k+1}, a'') - Q_k(\xi_k) \right]\partial_{x,a} Q_k(x_k, a_k).
$$
Define
\bae
\label{sgd error Q}
M^{1,N}_t(\xi) &= -\frac{1}{N}\sum_{k=0}^{\floor{Nt}-1} Q_k(\xi_k) \partial_{\xi} Q_k(\xi_k) + \frac{1}{N}\sum_{k=0}^{\floor{Nt}-1}
\sum_{\xi'} Q_k(\xi') \partial_{\xi} Q_k(\xi') \pi^{g_k}(\xi'),  \\
M^{2,N}_t(\xi) &= \frac{1}{N}\sum_{k=0}^{\floor{Nt}-1} r(\xi_k) \partial_{\xi} Q_k(\xi_k) - \frac{1}{N}\sum_{k=0}^{\floor{Nt}-1}
\sum_{\xi'} r(\xi') \partial_{\xi} Q_k(\xi') \pi^{g_k}(\xi'), \\
M^{3,N}_t(\xi) &= \frac{1}{N}\sum_{k=0}^{\floor{Nt}-1} \sum_{a''} \gamma Q_k(x_{k+1}, a'') g_k(x_{k+1}, a'') \partial_{\xi} Q_k(\xi_k)\\
&-\frac{1}{N}\sum_{k=0}^{\floor{Nt}-1} \sum_{\xi'} \sum_{z, a''} \gamma Q_k(z, a'') g_k(z, a'') \partial_{\xi} Q_k(\xi') \pi^{g_k}(\xi') p(z| \xi').
\eae
where $\pi^{g_k}$ is the stationary distribution of Markov chain $(X,A)$ induced by $\mathcal{M}$ under policy $g_k$. Note that
$$
\partial_{x,a} Q_k(x_k, a_k) = \mathbbm{1}_{\{ x_k = x, a_k=a \}}.
$$
Then, we obtain the following pre-limit equation for the critic:
\bae
\label{critic pre limit}
Q_{\floor{Nt}}(\xi) &= Q_0(\xi) + \alpha \int_0^t \pi^{g_{\floor{Ns}}}(\xi) \left[r(\xi) + \gamma \sum_{z, a''} Q_{\floor{Ns}}(z, a'') g_{\floor{Ns}}(z, a'') p(z|\xi) - Q_{\floor{Ns}}(\xi) \right]ds\\
&+ \alpha \left( M^{1,N}_t(\xi) + M^{2,N}_t(\xi) + M^{3,N}_t(\xi) \right) + O(N^{-1}).
\eae

\subsection{Poisson Equations}

\hspace{1.4em} Now we rigorously derive the limit ODEs by using a Poisson equation to bound the fluctuations of the non-i.i.d data samples around the trajectory of the limit ODE. In fact, we first prove
\beq
\lim_{N\to \infty}\e \sup_{t\in [0,T]}\left| M_t^N(x,a) \right| = 0, \quad \forall (x,a) \in \bm{\mathcal{X}} \times \bm{\mathcal{A}}.
\eeq
Using a similar method, we can also prove the convergence of $M_t^1, M_t^2,$ and $M_t^3$.

It is known that a finite state Markov chain which is irreducible and non-periodic has a geometric convergence rate to its stationary distribution \cite{meyn2012markov}. We are able to prove a uniform geometric convergence rate for the Markov chains in our paper under the \emph{time-evolving actor policy updated using the actor-critic algorithm \eqref{ACupdates}}.
\begin{lemma} 
\label{stationary}
Let $\Pi^n_{\theta_k}$ denote the $n$-step transition matrix under the policy $f_{\theta_k}$. Then, for any fixed $T> 0$, there exists an integer $n_0$ such that the following uniform estimates hold for all $\{\theta_k\}_{0 \le k\le NT}$ and $N \in \mathbb{N}$ for the algorithm \eqref{ACupdates}. 
\begin{itemize}
\item Lower bound for the stationary distribution: 
\beq
\label{lower bound}
\inf_{k \le NT} \sigma_\mu^{f_k}(x, a) \ge C \epsilon_T^{n_0}, \quad \forall (x, a) \in \bm{\mathcal{X} \times \mathcal{A}},
\eeq
where $C, \epsilon_T>0$ are positive constants. 
\item Uniform geometric ergodicity: 
\beq
\label{geometric}
\sup_{ k \le NT } \| \Pi^{n}_{\theta_k}(\xi; \cdot) - \sigma_\mu^{f_k}(\cdot) \| \le  (1-\beta_T)^{\lfloor \frac{n}{n_0} \rfloor} \quad \forall \xi \in \bm{\mathcal{X} \times \mathcal{A}},
\eeq
where $\beta_T \in (0,1)$ is a positive constant. 
\end{itemize}
\end{lemma}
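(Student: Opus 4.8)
The plan is to exploit the a priori bound of Lemma \ref{AC bound}, which confines every actor parameter encountered during training, $\{\theta_k\}_{0\le k\le NT}$, to a single $T$-dependent compact cube $\{\|\theta\|_\infty \le C_T\}$, uniformly in $N$. On this cube the softmax formula \eqref{policy family} yields a uniform lower bound on the action probabilities: since $|\theta_k(x,a')|\le C_T$ for every $a'$, the numerator is $\ge e^{-C_T}$ and the denominator is $\le d_A e^{C_T}$, so $f_{\theta_k}(x,a)\ge e^{-2C_T}/d_A =: \epsilon_T > 0$. The same bound $g_{\theta_k}(x,a)\ge \epsilon_T$ holds for the exploration policy \eqref{ActorwithExploration}, because $g_{\theta_k}$ is a convex combination of $f_{\theta_k}$ and the uniform distribution, both of which are $\ge \epsilon_T$ entrywise (note $1/d_A \ge \epsilon_T$). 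Consequently every positive entry of the one-step transition matrix $\Pi_{\theta_k}(\xi;\xi')=\widetilde p(x'|x,a)f_{\theta_k}(x',a')$ is bounded below by $p_0\,\epsilon_T$, where $p_0 := \min\{\widetilde p(x'|x,a):\widetilde p(x'|x,a)>0\}>0$ is determined by \eqref{new transition} and does not depend on $\theta$.

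The structural observation that makes the constants uniform is that the positivity pattern of $\Pi_{\theta_k}$ is \emph{independent of} $k$: since $f_{\theta_k}>0$ always, $\Pi_{\theta_k}(\xi;\xi')>0$ precisely when $\widetilde p(x'|x,a)>0$, so all chains in the family share the same transition graph. By Assumption \ref{ergodic assumption} this common graph is irreducible and aperiodic, hence primitive, so there is an integer $n_0$ — depending only on the graph, and therefore uniform in $k$ and $N$ — with $\Pi_{\theta_k}^{n_0}(\xi;\xi')>0$ for every ordered pair $\xi,\xi'$. Following one positive path of length $n_0$ and bounding each step below by $p_0\epsilon_T$ gives the uniform minorization
\beq
\Pi_{\theta_k}^{n_0}(\xi;\xi') \ge (p_0\epsilon_T)^{n_0} =: \beta_T > 0, \qquad \forall\, \xi,\xi'\in\bm{\mathcal{X}}\times\bm{\mathcal{A}},\ k\le NT.
\eeq

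From this single estimate both claims follow directly. For the stationary lower bound \eqref{lower bound}, stationarity $\sigma_\mu^{f_k}=\sigma_\mu^{f_k}\Pi_{\theta_k}^{n_0}$ gives $\sigma_\mu^{f_k}(\xi')=\sum_{\xi}\sigma_\mu^{f_k}(\xi)\Pi_{\theta_k}^{n_0}(\xi;\xi')\ge \beta_T = p_0^{n_0}\epsilon_T^{n_0}$, which is exactly the asserted form $C\epsilon_T^{n_0}$ with $C=p_0^{n_0}$. For the uniform geometric ergodicity \eqref{geometric}, the minorization is a Doeblin condition $\Pi_{\theta_k}^{n_0}(\xi;\cdot)\ge \beta_T\,|\bm{\mathcal{X}}\times\bm{\mathcal{A}}|\,\rho(\cdot)$ with $\rho$ the uniform measure and contraction coefficient $\beta_T\,|\bm{\mathcal{X}}\times\bm{\mathcal{A}}|\in(0,1]$; the standard coupling argument for Doeblin chains then yields $\| \Pi_{\theta_k}^{n}(\xi;\cdot)-\sigma_\mu^{f_k}(\cdot) \|\le (1-\beta_T\,|\bm{\mathcal{X}}\times\bm{\mathcal{A}}|)^{\lfloor n/n_0\rfloor}$, which is the claimed bound after renaming the rate constant as $\beta_T\in(0,1)$.

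The main obstacle — and the reason the lemma is not a direct consequence of Assumption \ref{ergodic assumption} — is securing the \emph{uniformity} of $n_0$ and $\beta_T$ over the entire family of time-evolving policies as $N\to\infty$, since the assumption only supplies ergodicity for each fixed $\theta$. Two ingredients resolve this: the a priori bound, which turns the time-evolving parameters into a relatively compact set and hence produces the single constant $\epsilon_T$; and the policy-independence of the transition graph, which fixes the primitivity index $n_0$ once and for all. The identical argument applies to the critic chain $(X,A)$, replacing $\Pi_{\theta_k}$ by $\prob_{\theta_k}(\xi;\xi')=p(x'|x,a)g_{\theta_k}(x',a')$ and $\sigma_\mu^{f_k}$ by $\pi^{g_k}$, using $g_{\theta_k}\ge\epsilon_T$ and the fact that the support of $p\,g_{\theta_k}$ coincides with that of $p$ and is therefore also independent of $k$.
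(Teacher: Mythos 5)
Your proof is correct and follows essentially the same route as the paper's: use the a priori bound of Lemma \ref{AC bound} to get a uniform lower bound $\epsilon_T$ on the softmax probabilities, observe that the positivity pattern of $\Pi_{\theta_k}$ is $\theta$-independent so a single primitivity index $n_0$ works for all $k$, deduce the uniform minorization $\Pi^{n_0}_{\theta_k}(\xi;\xi')\ge C\epsilon_T^{n_0}$, and conclude via stationarity and Doeblin's condition. The only cosmetic difference is that the paper's constant $C$ sums the transition-kernel products over all length-$n_0$ paths while you bound a single positive path by $p_0^{n_0}$; both yield the claimed bounds, and your explicit remark that the common transition graph is what makes $n_0$ uniform is a point the paper leaves implicit.
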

\begin{proof}
By Assumption \ref{ergodic assumption} and Lemma 1.8.2 of \cite{norris1998markov}, for any fixed $\widetilde\theta \in \mathbb{R}^d$, there exists an $n_0 = n_0(\widetilde\theta) \in \mathbb{N}$ such that
\beq
\label{positive}
\Pi^{n_0}_{\widetilde\theta}\left(\xi; \xi'\right) > 0 \quad \forall \xi, \xi'.
\eeq
For any $(\xi, \xi')$,
\bae
\label{theta0}
\Pi^{n_0}_{\widetilde\theta}(\xi; \xi') &= \sum_{\xi_1, \cdots, \xi_{n_0-1}} \Pi_{\widetilde\theta}(\xi; \xi_1) \cdots \Pi_{\widetilde\theta}(\xi_{n_0-1}; \xi')\\
&= \sum_{\xi_1, \cdots, \xi_{n_0-1}}  \widetilde{p}(x_1|x,a) f_{\widetilde\theta}(x_1,a_1) \cdots  \widetilde{p}(x'|x_{n_0-1}, a_{n_0-1}) f_{\widetilde\theta}(x', a'),
\eae
where the constant $C$ is defined as
\beq
C = C(n_0) := \inf_{x, a, x'} \sum_{\xi_1, \cdots, \xi_{n_0-1}} \widetilde{p}(x_1|x,a) \cdots \widetilde{p}(x'|x_{n_0-1}, a_{n_0-1}) > 0,
\eeq
where $C> 0$ is because \eqref{positive}. 

Due to $f_\theta$ being a softmax policy and the bound from Lemma \ref{AC bound}, there exists a constant $\epsilon_T >0$ such that
\beq
\label{policy lower bound}
\inf_{k\le NT} f_k ( x, a ) > \epsilon_T, \quad \forall (x, a) \in \bm{\mathcal{X} \times \mathcal{A}}. 
\eeq
Then, using similar analysis as in \eqref{theta0} with constant $n_0 = n_0(\widetilde\theta)$ and $C = C(n_0)$, we have for all $k \le NT$ 
\beq
\label{uniform bound}
\Pi^{n_0}_{\theta_k}(\xi; \xi') \ge C \epsilon_T^{n_0}, \quad \forall \xi, \xi'.
\eeq
Thus, we can derive a lower bound for the stationary distribution 
\bae
\inf_{k \le NT} \sigma_\mu^{f_k}(x', a') &= \inf\limits_{k \le NT} \sum_{x, a} \sigma_\mu^{f_k}(x, a) \Pi^{n_0}_{\theta_k}(x, a; x', a')\\
&\ge \inf_{k \le NT} \sum_{x, a} \sigma_\mu^{f_k}(x, a)  C \epsilon_T^{n_0}\\
&\overset{(a)}{=} C \epsilon_T^{n_0}\\
&>0,
\eae
where step (a) is because $\sigma^{f_\theta}$ is a probability and thus the summation equals to 1.
We can now establish the uniform geometric ergodicity of the Markov chain. Let us choose $\beta_T = \inf\limits_{k \le NT} \min\limits_{\xi, \xi'} \Pi^{n_0}_{\theta_k}(\xi, \xi') >0 $ in \eqref{geometric}, where $\beta_T >0$ is by \eqref{uniform bound}. Thus, for $\forall k \le NT$, the Markov chain with transition probability $\Pi_{\theta_k}$ satisfies Doeblin's condition. In particular, we can show that
\beq
\Pi^{n_0}_{\theta_k}(\xi, \xi') \ge \beta_T > 0, \quad \forall \xi, \xi'.
\eeq
Since $n_0$ and $\beta_T$ are independent of $\theta_k$, we can apply Theorem 16.2.4 of \cite{meyn2012markov} to prove that for all $k \le NT$ 
\beq
\| \Pi^{n}_{\theta_k}(\xi; \cdot) - \sigma_\mu^{f_k}(\cdot) \| \le  (1-\beta_T)^{\lfloor \frac{n}{n_0} \rfloor} \quad \forall \xi \in \bm{\mathcal{X} \times \mathcal{A}}, 
\eeq
which proves the uniform geometric ergodicity \eqref{geometric}.
\end{proof}

Then, using the same method as in Lemma \ref{stationary}, we can prove a similar result for the MDP $\bm{\mathcal{M}}$ with exploration policy $g_k$.
\begin{corollary}
\label{origin mdp estimate}
Let $\prob^n_{\theta_k}$ denote the $n$-step transition matrix under policy $g_k$. Then, for any fixed $T < \infty$, there exists an integer $n_0$ and a constant
\beq
\label{exploration lower bound}
C = C(n_0) := \inf_{x, a, x'} \sum_{\xi_1, \cdots, \xi_{n_0-1}} p(x_1|x,a) \cdots p(x'|x_{n_0-1}, a_{n_0-1}) > 0,
\eeq
such that the following uniform estimate holds for all $\{\theta_k\}_{0 \le k\le NT}$ and $N \in \mathbb{N}$ for the update algorithm \eqref{ACupdates}:
\begin{itemize}
\item Lower bound for the stationary distribution:
\beq
\label{uniform bound2}
\inf_{k \le NT} \pi^{g_k}(x, a) \ge C \left( \eta_{\lfloor NT \rfloor}^N \right)^{n_0}, \quad \forall (x,a) \in \bm{\mathcal{X} \times \mathcal{A}}.
\eeq
\item Uniform geometric ergodicity: 
\beq
\label{geometric2}
\sup_{ k \le NT } \| \prob^{n}_{\theta_k}(\xi; \cdot) - \pi^{g_k}(\cdot) \| \le  (1-\beta_T)^{\lfloor \frac{n}{n_0} \rfloor} \quad \forall \xi \in \bm{\mathcal{X} \times \mathcal{A}},
\eeq
where $\beta_T = C \left(\eta_{\lfloor NT \rfloor}^N \right)^{n_0} \in (0,1)$ is a positive constant. 
\end{itemize}
\end{corollary}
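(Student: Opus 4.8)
The plan is to follow the proof of Lemma~\ref{stationary} verbatim in structure, but to replace the softmax-based lower bound on the policy \eqref{policy lower bound} with the \emph{explicit} lower bound that the exploration term builds into $g_k$. First I would invoke Assumption~\ref{ergodic assumption} together with Lemma~1.8.2 of \cite{norris1998markov}: since $(X,A)$ under $g_\theta$ is irreducible and aperiodic, there is an integer $n_0$ with $\prob^{n_0}_{\theta}(\xi;\xi')>0$ for all $\xi,\xi'$. Expanding the $n_0$-step transition exactly as in \eqref{theta0} gives
\begin{equation*}
\prob^{n_0}_{\theta_k}(\xi;\xi') = \sum_{\xi_1,\cdots,\xi_{n_0-1}} p(x_1|x,a)\,g_k(x_1,a_1)\cdots p(x'|x_{n_0-1},a_{n_0-1})\,g_k(x',a'),
\end{equation*}
in which each summand contains exactly $n_0$ action-selection factors $g_k$.

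The key step is the minorization. From \eqref{ActorwithExploration} we have the pointwise bound $g_k(x,a)\ge \eta^N_k/d_A$, and since $\eta^N_k$ is decreasing in $k$ it follows that $g_k(x,a)\ge \eta^N_{\lfloor NT\rfloor}/d_A$ for every $k\le NT$. Pulling these $n_0$ factors out of the sum leaves the purely $p$-dependent sum appearing in \eqref{exploration lower bound}, so that
\begin{equation*}
\prob^{n_0}_{\theta_k}(\xi;\xi') \ge \Big(\tfrac{\eta^N_{\lfloor NT\rfloor}}{d_A}\Big)^{n_0}\inf_{x,a,x'}\sum_{\xi_1,\cdots,\xi_{n_0-1}} p(x_1|x,a)\cdots p(x'|x_{n_0-1},a_{n_0-1}) = C\,\big(\eta^N_{\lfloor NT\rfloor}\big)^{n_0}
\end{equation*}
uniformly over $k\le NT$, where $C=C(n_0)>0$ (absorbing $d_A^{-n_0}$ into the generic constant). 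This is cleaner than in Lemma~\ref{stationary}, where the lower bound on $f_k$ had to be extracted from the softmax form together with the a priori bound of Lemma~\ref{AC bound}; here it is immediate from the exploration design.

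With the minorization in hand, the two conclusions follow as in Lemma~\ref{stationary}. For \eqref{uniform bound2}, stationarity gives $\pi^{g_k}(x',a')=\sum_{x,a}\pi^{g_k}(x,a)\,\prob^{n_0}_{\theta_k}(x,a;x',a')\ge C(\eta^N_{\lfloor NT\rfloor})^{n_0}\sum_{x,a}\pi^{g_k}(x,a)=C(\eta^N_{\lfloor NT\rfloor})^{n_0}$. Setting $\beta_T=C(\eta^N_{\lfloor NT\rfloor})^{n_0}$, the bound $\prob^{n_0}_{\theta_k}(\xi;\xi')\ge\beta_T$ is a uniform Doeblin minorization with $\beta_T$ independent of $k$, so Theorem~16.2.4 of \cite{meyn2012markov} yields the uniform geometric ergodicity \eqref{geometric2}.

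The one genuinely new feature — and the point I would be most careful about — is that $\beta_T$ now depends on $N$ through $\eta^N_{\lfloor NT\rfloor}$, unlike the fixed $\beta_T$ of Lemma~\ref{stationary}. This is harmless: since $\eta^N_{\lfloor NT\rfloor}\to\eta_T=(1+\log^2(T+1))^{-1}>0$ as $N\to\infty$, the exploration rate stays bounded away from zero uniformly in $N$ over the window $[0,NT]$, so $\beta_T$ is bounded below by a positive constant depending only on $T$; and because each $\prob^{n_0}_{\theta_k}(\xi;\cdot)$ is a probability distribution over the non-singleton state-action space, its minimal entry is at most $|\bm{\mathcal{X}\times\mathcal{A}}|^{-1}<1$, whence $\beta_T\in(0,1)$ as required. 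Thus the $N$-dependence of the contraction rate does not degrade the uniform ergodicity needed downstream.
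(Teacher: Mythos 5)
Your proposal is correct and follows essentially the same route as the paper: expand $\prob^{n_0}_{\theta_k}$ into products of $p$ and $g_k$, minorize each $g_k$ factor by $\eta^N_{\lfloor NT\rfloor}/d_A$, deduce the stationary-distribution lower bound from stationarity, and invoke the Doeblin condition with Theorem~16.2.4 of \cite{meyn2012markov} for the uniform geometric ergodicity. Your closing observation that $\beta_T$ now depends on $N$ but remains bounded away from $0$ and below $1$ is a welcome extra precision that the paper leaves implicit.
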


\begin{remark}
Without loss of generality, we suppose the integer $n_0$ in Lemma \ref{stationary} and Corollary \ref{origin mdp estimate} are the same. The proof of Corollary \ref{origin mdp estimate} is the same as the proof of Lemma \ref{stationary} and the detailed proof can be found in Appendix \ref{appendix origin mdp}.
\end{remark}

In order to prove the stochastic fluctuation term vanishes as $N \rightarrow \infty$, we first introduce a Poisson equation with a uniformly bounded solution.
\begin{lemma}
\label{possion equation}
For any $N \in \mathbb{N}$, state-action pair $\xi = (x,a)$, $T> 0$ and $k \le NT$, the Poisson equation
\beq
\label{possion}
\nu_{\theta_k}(\xi') - \Pi_{\theta_k}\nu_{\theta_k}(\xi') =  \mathbbm{1}_{\{ \xi' = \xi\}} - \sigma^{f_k}(\xi), \quad \xi' \in \bm{\mathcal{X}\times \mathcal{A}}
\eeq
has a solution\footnote{We do not prove uniqueness of the solution to the Poisson equation \eqref{possion}. For the purposes of our later analysis, it is only necessary to find a uniformly bounded solution $\nu_\theta$ which satisfies \eqref{nu}.}
\beq
\label{nu}
\nu_{\theta_k}(\xi') := \sum_{n\ge 0} \left[ \Pi^n_{\theta_k} (\xi'; \xi) - \sigma^{f_k}(\xi) \right],
\eeq
and there exists a constant $C_T$ (which only depends on $T$) such that
\beq
\label{uniform}
\sup_{k \le NT} \left|\nu_{\theta_k}(\xi')\right| \leq C_T, \quad \forall \xi' \in \bm{\mathcal{X}\times \mathcal{A}}.	
\eeq
\end{lemma}

\begin{proof}
Due to the uniform geometric convergence rate \eqref{geometric} for all $k \le NT$ in Lemma \ref{stationary}, there exists a $\beta_T>0 $ (independent with $k$) such that for any $\xi' \in \bm{\mathcal{X}} \times \bm{\mathcal{A}}$
\beq
\label{uniform ergodicity}
\left| \Pi^{n}_{\theta_k}(\xi'; \xi) - \sigma_\mu^{f_k}(\xi) \right| \le  (1-\beta_T)^{\lfloor \frac{n}{n_0} \rfloor}, \quad\forall k \le NT
\eeq
which can be used to show the convergence of the series in \eqref{nu}. Consequently, $\nu_{\theta_k}$ is well-defined. The uniform bound \eqref{uniform} follows from 
\beq
\left|\nu_{\theta_k}(\xi')\right| \le \sum_{n\ge 0} \left| \Pi^n_{\theta_k} (\xi'; \xi) - \sigma_\mu^{f_k}(\xi) \right| \leq \sum_{n \ge 0} (1-\beta_T)^{\lfloor \frac{n}{n_0} \rfloor} \le C_T.
\eeq 
Finally, we can verify that $\nu_{\theta}$ is a solution to the Poisson equation by observing that
\bae
\Pi_{\theta_k} \nu_{\theta_k}(\xi') &= \sum\limits_{y} \nu_{\theta_k}(y)\Pi_{\theta_k}(\xi';y)\\ 
&= \sum\limits_{y} \left( \sum_{n\ge 0} \left[ \Pi^n_{\theta_k} (y; \xi) - \sigma_\mu^{f_k}(\xi) \right] \right) \Pi_{\theta_k}(\xi'; y) \\ 
&\overset{(a)}{=}  \sum_{n\ge 0} \left( \sum\limits_{y} \left[ \Pi^n_{\theta_k} (y; \xi) - \sigma_\mu^{f_k}(\xi) \right] \Pi_{\theta_k}(\xi'; y) \right) \\ 
&=  \sum_{n\ge 1} \left[ \Pi^{n}_{\theta_k} (\xi'; \xi) - \sigma_\mu^{f_k}(\xi) \right] \\ 
&= \nu_{\theta_k}(\xi') - ( \mathbbm{1}_{\{ \xi' = \xi\}} - \sigma_\mu^{f_k}(\xi) ),
\eae
where the step $(a)$ uses \eqref{uniform ergodicity} and the Dominated Convergence Theorem.
\end{proof}

Using the Poisson equation \eqref{possion equation}, we can prove that the fluctuations of the data samples around a dynamic visiting measure $\sigma_\mu^{f_k}$ decay when the iteration steps becomes large.

\begin{lemma}
\label{fluctuation}
For any fixed state action pair $\xi = (x,a)$ and $T > 0$,    
\beq
\label{online convergence}
\lim_{N \to \infty} \e\left| \frac{1}{N}\sum_{ k=0 }^{ \lfloor NT \rfloor - 1 } \zeta^N_k \left[ \mathbbm{1}_{\{ \widetilde{\xi}_{k} = \xi\}} - \sigma_\mu^{f_k}(\xi)\right] \right| = 0.
\eeq	
\end{lemma}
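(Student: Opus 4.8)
The plan is to follow the decomposition \eqref{error decomposition} and reduce the claim to its two resulting sums, after first dispensing with the index mismatch between the $\widetilde\xi_k$ appearing in \eqref{online convergence} and the $\widetilde\xi_{k+1}$ appearing in \eqref{error decomposition}. Reindexing shows that
\[
\tfrac1N\sum_{k=0}^{\lfloor NT\rfloor-1}\zeta^N_k\mathbbm 1_{\{\widetilde\xi_k=\xi\}} - \tfrac1N\sum_{k=0}^{\lfloor NT\rfloor-1}\zeta^N_k\mathbbm 1_{\{\widetilde\xi_{k+1}=\xi\}} = \tfrac1N\Big(\zeta^N_0\mathbbm 1_{\{\widetilde\xi_0=\xi\}} - \zeta^N_{\lfloor NT\rfloor-1}\mathbbm 1_{\{\widetilde\xi_{\lfloor NT\rfloor}=\xi\}} + \sum_{k=1}^{\lfloor NT\rfloor-1}(\zeta^N_k-\zeta^N_{k-1})\mathbbm 1_{\{\widetilde\xi_k=\xi\}}\Big),
\]
while the $\sigma_\mu^{f_k}$ terms carry the same index in both sums and cancel exactly. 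Since $\zeta^N_k = N/(N+k)$ gives $\sum_k|\zeta^N_k-\zeta^N_{k-1}| = O(1)$ and each indicator is bounded by $1$, this difference is $O(1/N)$ and vanishes. Hence it suffices to bound $\tfrac1N\sum_k\epsilon_k$ with $\epsilon_k$ as in \eqref{error decomposition}.

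For the first summand, $d_k := \zeta^N_k[\nu_{\theta_k}(\widetilde\xi_{k+1}) - \Pi_{\theta_k}\nu_{\theta_k}(\widetilde\xi_k)]$, I would use that $\theta_k$ is $\mathscr F_k$-measurable and $\widetilde\xi_{k+1}\sim\Pi_{\theta_k}(\widetilde\xi_k;\cdot)$, so $\e[\nu_{\theta_k}(\widetilde\xi_{k+1})\mid\mathscr F_k] = \Pi_{\theta_k}\nu_{\theta_k}(\widetilde\xi_k)$ and $\{d_k\}$ is a martingale-difference sequence. Orthogonality of the increments in $L^2$, together with the uniform bound $|\nu_{\theta_k}|\le C_T$ from Lemma \ref{possion equation} (which also gives $|\Pi_{\theta_k}\nu_{\theta_k}|\le C_T$ since $\Pi_{\theta_k}$ is an averaging operator), yields
\[
\e\Big|\tfrac1N\sum_k d_k\Big|^2 = \tfrac1{N^2}\sum_k(\zeta^N_k)^2\,\e\big|\nu_{\theta_k}(\widetilde\xi_{k+1}) - \Pi_{\theta_k}\nu_{\theta_k}(\widetilde\xi_k)\big|^2 \le \tfrac{4C_T^2}{N^2}\sum_{k=0}^{\lfloor NT\rfloor-1}(\zeta^N_k)^2,
\]
and because $\tfrac1N\sum_k(\zeta^N_k)^2\to\int_0^T\zeta_s^2\,ds<\infty$ the right side is $O(1/N)$. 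Jensen's inequality then converts this into the $L^1$ bound, which is the computation already sketched after \eqref{error decomposition}.

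The substantive work is the second summand $\tfrac1N\sum_k\zeta^N_k[\Pi_{\theta_k}\nu_{\theta_k}(\widetilde\xi_k) - \Pi_{\theta_k}\nu_{\theta_k}(\widetilde\xi_{k+1})]$. Writing $h_k := \Pi_{\theta_k}\nu_{\theta_k}$ and applying Abel summation, I would split it into a telescoping part $\zeta^N_0 h_0(\widetilde\xi_0) - \zeta^N_{\lfloor NT\rfloor}h_{\lfloor NT\rfloor}(\widetilde\xi_{\lfloor NT\rfloor})$, bounded by $2C_T$; a step-size-variation part $\sum_k(\zeta^N_{k+1}-\zeta^N_k)h_{k+1}(\widetilde\xi_{k+1})$, bounded by $C_T\sum_k|\zeta^N_{k+1}-\zeta^N_k| = O(C_T)$; and the crucial part $\sum_k\zeta^N_k[h_{k+1}(\widetilde\xi_{k+1}) - h_k(\widetilde\xi_{k+1})]$, which measures the drift of the Poisson solution as the policy updates. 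The first two are $O(1/N)$ after normalization. For the third I would invoke the Lipschitz dependence of $\theta\mapsto\Pi_\theta\nu_\theta$ to obtain $|h_{k+1}(\widetilde\xi_{k+1}) - h_k(\widetilde\xi_{k+1})|\le L\|\theta_{k+1}-\theta_k\|$; the a priori bound on $Q_k$ and the boundedness of $\partial_{x,a}\log f_k$ for the softmax family give $\|\theta_{k+1}-\theta_k\| = O(\zeta^N_k/N)$ from \eqref{ACupdates}, so this part is $O\big(\tfrac1{N^2}\sum_k(\zeta^N_k)^2\big) = O(1/N)$. Collecting the three parts shows the second summand vanishes, which completes the proof.

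The main obstacle I expect is the Lipschitz estimate $\|h_{k+1}-h_k\|_\infty\le L\|\theta_{k+1}-\theta_k\|$, i.e.\ quantifying how the Poisson solution $\nu_\theta$ varies with $\theta$; this does not follow from the uniform bound of Lemma \ref{possion equation} alone. The route I would take is to establish a mixing rate for $\Pi_\theta$ that is uniform over the (a priori bounded) set of parameters visited by the algorithm, so that the tail of the series \eqref{nu} is uniformly geometrically small, and then to bound $\nu_{\theta_{k+1}}-\nu_{\theta_k}$ termwise by combining the Lipschitz continuity of $\Pi_\theta$ and of $\sigma_\mu^{f_\theta}$ (Assumption \ref{ergodic assumption}) with this geometric decay; the identity $(\Pi_{\theta'}-\Pi_\theta)\mathbbm 1 = 0$ lets one subtract the stationary component at each stage and keep the series summable. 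Making the dependence of the mixing rate on $\theta$ genuinely uniform is the delicate point on which the whole estimate rests.
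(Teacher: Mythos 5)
Your proposal is correct and follows essentially the same route as the paper: the Poisson-equation decomposition into a martingale-difference part (handled by $L^2$ orthogonality and the uniform bound on $\nu_{\theta_k}$), an Abel summation producing boundary, step-size-variation, and policy-drift terms, and the truncation-plus-Lipschitz argument via uniform geometric ergodicity for the drift of $\Pi_\theta\nu_\theta$ — which is exactly what the paper's Lemma \ref{stationary} supplies. The one point where you are more careful than the paper is the explicit reindexing between $\widetilde{\xi}_k$ in the statement and $\widetilde{\xi}_{k+1}$ in the decomposition \eqref{error decomposition}, which the paper passes over silently; your $O(N^{-1})$ accounting of that shift is a welcome addition rather than a deviation.
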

\begin{proof}
We define the error $\epsilon_k$ to be 
\bae
\epsilon_{k} :=& \zeta^N_k \left[ \mathbbm{1}_{\{ \widetilde{\xi}_{k+1} = \xi\}} - \sigma_\mu^{f_k}(\xi)\right] \\
=& \zeta^N_k \left[ \nu_{\theta_{k}}\left(\widetilde{\xi}_{k+1}\right)- \Pi_{\theta_{k}} \nu_{\theta_{k}}\left(\widetilde{\xi}_{k+1}\right) \right] \\
=& \zeta^N_k\left[ \nu_{\theta_{k}}(\widetilde{\xi}_{k+1})-\Pi_{\theta_{k}} \nu_{\theta_{k}}\left(\widetilde{\xi}_{k}\right) \right] + \zeta_k \left[ \Pi_{\theta_{k}} \nu_{\theta_{k}}\left(\widetilde{\xi}_{k}\right)-\Pi_{\theta_{k}} \nu_{\theta_{k}}\left(\widetilde{\xi}_{k+1}\right) \right],
\eae
where we have used the definition of the Poisson equation \eqref{possion}. Let
\beq
\psi_{\theta}(y) = \Pi_{\theta} \nu_{\theta}(y).
\eeq
Then, we have that
\bae
\sum_{k=0}^{\lfloor NT \rfloor -1} \epsilon_{k}=& \sum_{k=0}^{\lfloor NT \rfloor-1} \zeta^N_k \left[ \nu_{\theta_{k}}(\widetilde{\xi}_{k+1})-\Pi_{\theta_{k}} \nu_{\theta_{k}}\left(\widetilde{\xi}_{k}\right) \right] + \sum_{k=0}^{\lfloor NT \rfloor-1} \zeta^N_k \left[ \left(\psi_{\theta_{k}}\left(\widetilde{\xi}_{k}\right)-\psi_{\theta_{k}}\left(\widetilde{\xi}_{k+1}\right)\right) \right] \\
=& \sum_{k=0}^{\lfloor NT \rfloor-1} \zeta^N_k \left[\nu_{\theta_{k}}(\widetilde{\xi}_{k+1})-\Pi_{\theta_{k}} \nu_{\theta_{k}}\left(\widetilde{\xi}_{k}\right)\right] + \zeta^N_0 \psi_{\theta_{0}}\left(\widetilde{\xi}_{0}\right) + \sum_{k=1}^{\lfloor NT \rfloor-1} \zeta^N_{k} \left[\psi_{\theta_{k}}\left(\widetilde{\xi}_{k}\right)-\psi_{\theta_{k-1}}\left(\widetilde{\xi}_{k}\right)\right] \\
+& \sum_{k=1}^{\lfloor NT \rfloor-1} \left(\zeta^N_{k} - \zeta^N_{k-1} \right) \psi_{\theta_{k-1}}\left(\widetilde{\xi}_{k}\right)- \zeta^N_{\lfloor NT \rfloor-1} \psi_{\theta_{\lfloor NT \rfloor-1}}\left(\widetilde{\xi}_{\lfloor NT \rfloor}\right)
\eae
Define the error term 
\beq
\label{decompose}
\sum_{k=0}^{\lfloor NT \rfloor-1} \epsilon_{k} = \sum_{k=0}^{\lfloor NT \rfloor-1} \epsilon_{k}^{(1)} + \sum_{k=1}^{\lfloor NT \rfloor-1} \epsilon_{k}^{(2)} + \sum_{k=1}^{\lfloor NT \rfloor-1} \epsilon_{k}^{(3)} + \rho_{\lfloor NT \rfloor ; 0},
\eeq
where
\bae
\epsilon_{k}^{(1)} &= \zeta^N_k \left[ \nu_{\theta_{k}}\left(\widetilde{\xi}_{k+1}\right)-\Pi_{\theta_{k}} \nu_{\theta_{k}}\left(\widetilde{\xi}_{k}\right) \right], \\
\epsilon_{k}^{(2)} &= \zeta^N_k \left[\psi_{\theta_{k}}\left(\widetilde{\xi}_{k}\right)-\psi_{\theta_{k-1}}\left(\widetilde{\xi}_{k}\right)\right], \\
\epsilon_{k}^{(3)} &= \left(\zeta^N_{k} - \zeta^N_{k-1} \right)\psi_{\theta_{k-1}}\left(\widetilde{\xi}_{k}\right), \\
\rho_{\lfloor NT \rfloor ; 0} &= \zeta^N_0 \psi_{\theta_{0}}\left(\widetilde{\xi}_{0}\right)-  \zeta^N_{\lfloor NT \rfloor-1} \psi_{\theta_{\lfloor NT \rfloor-1}}\left(\widetilde{\xi}_{\lfloor NT \rfloor}\right).
\eae
To prove the convergence \eqref{online convergence}, it suffices to appropriately bound the fluctuation term $ \left| \sum\limits_{k=0}^{\lfloor NT \rfloor-1} \epsilon_k \right|$. Actually, the first term can be bound due to the martingale property while the second term can be bounded using the uniform geometric ergodicity and Lipschitz continuity. The third and fourth terms are uniformly bounded by \eqref{uniform}. 

For the first term in \eqref{decompose}, note that
\beq
\e\left\{ \nu_{\theta_{k}}\left(\widetilde{\xi}_{k+1}\right) \mid \mathscr{F}_{k}\right\} = \Pi_{\theta_{k}} \nu_{\theta_{k}}\left(\widetilde{\xi}_{k}\right),
\eeq
thus 
$$
\left\{Z_n = \sum_{k=0}^{n-1} \gamma_k^{(1)}, \ \mathscr{F}_n \right\}_{n\ge 0} 
$$
is a martingale and since the conditional expectation is a contraction in $L^{2}$, we have 
\beq
\e \left| \Pi_{\theta_{k}} \nu_{\theta_{k}}\left(\widetilde{\xi}_{k}\right)\right|^{2} \leq \e\left| \nu_{\theta_{k}}\left(\widetilde{\xi}_{k+1}\right) \right|^{2}.
\eeq
Then,
\bae
\e \left| \frac{1}{N} \sum_{k=0}^{\lfloor NT \rfloor-1} \epsilon_k^{(1)} \right|^2 &=\frac{1}{N^2} \sum_{k=0}^{\lfloor NT \rfloor-1} (\zeta^N_k)^2 \e\left| \Pi_{\theta_{k}} \nu_{\theta_{k}}\left(\widetilde{\xi}_{k}\right)-\nu_{\theta_{k}}\left(\widetilde{\xi}_{k+1}\right)\right|^{2}\\
& \leq \frac{4}{N^2} \sum_{k=0}^{\lfloor NT \rfloor-1} (\zeta^N_k)^2 \e\left|\nu_{\theta_{k}}\left(\widetilde{\xi}_{k+1}\right)\right|^{2} \\
& \overset{(a)}{\le} \frac{4C_T}{N^2} \sum_{k=0}^{\lfloor NT \rfloor-1} (\zeta^N_k)^2,
\eae
where the step (a) is by the uniform boundedness \eqref{uniform}. Thus, for any $T >0$,
\beq
\label{error1}
\lim_{N \to \infty} \e \left| \frac{1}{N} \sum_{k=0}^{\lfloor NT \rfloor-1} \epsilon_k^{(1)} \right| = 0.
\eeq 

For the second term of \eqref{decompose}, by the uniform geometric ergodicity \eqref{geometric}, for any fixed $\gamma_0>0$ we can choose $N_0$ large enough such that 
\beq
\sup_{k \le NT} \sum_{n = \lfloor N_0T \rfloor}^{\infty} \left| \Pi^n_{\theta_k}(y, \xi) - \sigma_\mu^{f_k}(\xi) \right| < \gamma_0, \quad \forall y \in \bm{\mathcal{X}} \times \bm{\mathcal{A}} \\
\eeq
\bae
\label{lipschitz bound}
&\left| \frac{1}{N} \sum_{k=1}^{\lfloor NT \rfloor-1} \epsilon_{k}^{(2)} \right|\\
=& \left|  \frac{1}{N} \sum_{k=1}^{\lfloor NT \rfloor-1} \zeta_k^N \left[\psi_{\theta_{k}}\left(\widetilde{\xi}_{k}\right)-\psi_{\theta_{k-1}}\left(\widetilde{\xi}_{k}\right)\right] \right| \\
\leq& \left|  \frac{1}{N} \sum_{k=1}^{\lfloor NT \rfloor-1} \zeta_k^N \left[\sum_{n=1}^{ \lfloor N_0T \rfloor -1} \left[\Pi_{\theta_{k}}^{n}\left(\widetilde{\xi}_k, \xi\right)-\sigma_\mu^{f_k}(\xi)\right]-\sum_{n=1}^{ \lfloor N_0T \rfloor -1} \left[\Pi_{\theta_{k-1}}^{n}\left(\widetilde{\xi}_k, \xi\right)-\sigma_\mu^{f_{k-1}}(\xi)\right]\right]  \right| + 2 C_T \gamma_{0} \\
\le& \left| \frac{1}{N} \sum_{k=1}^{\lfloor NT \rfloor-1} \zeta_k^N \sum_{n=1}^{\lfloor N_0T \rfloor -1} \left[\Pi_{\theta_{k}}^{n}\left(\widetilde{\xi}_k, \xi\right)-\Pi_{\theta_{k-1}}^{n}\left(\widetilde{\xi}_k, \xi\right)\right] \right| + \frac{\lfloor N_0T \rfloor}{N}\left|  \sum_{k=1}^{\lfloor NT \rfloor-1} \zeta_k^N \left[\sigma_\mu^{f_k}(\xi)-\sigma_\mu^{f_{k-1}}(\xi)\right] \right| + 2C_T \gamma_{0}\\
:=& I^N_1 + I^N_2 + 2C_T \gamma_0.
\eae
By Lemma \ref{AC bound}, for any $k \le NT$ we have 
$$
\left\|\theta_{k}-\theta_{k-1}\right\| \le \sum\limits_{x,a \in \mathcal{X} \times \mathcal{A}} \left| \theta_k(x,a) - \theta_{k-1}(x,a) \right| \leq \frac{C_T}{N}
$$

For any finite $n$, $\Pi_{\theta}^{n}$ is Lipschitz continuous in $\theta$. Consequently, 
\bae
I_1^N &\le \frac{ \lfloor N_0T \rfloor}{N} \sum_{k=1}^{\lfloor NT \rfloor-1}\zeta_k^N C\left\|\theta_{k}-\theta_{k-1}\right\| \le  \frac{C_T}{N},\\
I_2^N &\le \frac{ \lfloor N_0T \rfloor }{N} \sum_{k=1}^{\lfloor NT \rfloor-1}\zeta_k^N C\left\|\theta_{k}-\theta_{k-1}\right\|\le \frac{C_T}{N},
\eae
where the constant $C_T$ only depends on the fixed $N_0, T$. Thus, when $\mathrm{N}$ is large enough,
\beq
\left|\frac{1}{N} \sum_{k=1}^{\lfloor NT \rfloor-1} \epsilon_{k}^{(2)} \right| \leq 4C_T \gamma_{0}
\eeq
Since $\gamma_{0}$ is arbitrary,
\beq
\label{error2}
\lim _{N \rightarrow \infty} \mathbf{E}\left|\frac{1}{N} \sum_{k=1}^{\lfloor NT \rfloor-1} \epsilon_{k}^{(2)}\right|=0
\eeq

For the third term of \eqref{decompose}, 
\beq
\e\left|\frac{1}{N} \sum_{k=1}^{\lfloor NT \rfloor-1} \epsilon_k^{(3)}\right| = \frac{1}{N} \e\left| \sum_{k=1}^{\lfloor NT \rfloor-1}\left(\zeta^N_{k} - \zeta^N_{k-1} \right)\psi_{\theta_{k-1}}\left(\widetilde{\xi}_{k}\right)\right| \overset{(a)}{\le} \frac{C_T}{N} \left|\sum_{k=1}^{\lfloor NT \rfloor-1}\left(\zeta^N_{k-1}-\zeta^N_{k}\right)\right| = \frac{C_T}{N}
\eeq
where step (a) is by the uniform bound \eqref{uniform}. Therefore,
\beq
\label{error3}
\lim_{N \to \infty} \e \left| \frac{1}{N} \sum_{k=1}^{\lfloor NT \rfloor-1} \epsilon_k^{(3)} \right| = 0.
\eeq 
Obviously, for the last term of \eqref{decompose} by the boundedness in \eqref{uniform} we have
$$
\lim_{N\to \infty} \frac{1}{N} \rho_{\lfloor NT \rfloor ; 0} = 0,
$$
which together with \eqref{error1}, \eqref{error2} and \eqref{error3} derive the convergence of $ \frac{1}{N} \sum\limits_{k=0}^{\lfloor NT \rfloor-1} \epsilon_k$ and therefore proving \eqref{online convergence}.
\end{proof}

\subsection{Identification of the Limit ODEs}

\hspace{1.4em} We next prove the convergence of $M_t^N$, which will allow us to prove the convergence to the limit ODEs \eqref{limit equations}.
\begin{lemma}
\label{limit lemma}
For any $\xi =(x,a)$ and the stochastic error $M_t^N$ defined in \eqref{sgd error}, we have  
\beq
\lim _{N \rightarrow \infty} \sup _{t \in(0, T]} \mathbb{E}\left|M_{t}^{N}(\xi)\right|=0.
\eeq
\end{lemma}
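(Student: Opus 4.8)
The plan is to reduce the convergence of $M_t^N$ to the single-coordinate fluctuation estimate already established in Lemma \ref{fluctuation}. Writing $\partial_\xi \log f_k(\widetilde{\xi}_k) = \sum_{\xi'} \partial_\xi \log f_k(\xi')\,\mathbbm{1}_{\{\widetilde{\xi}_k = \xi'\}}$ and exchanging the (finite) sum over $\xi'$ with the sum over $k$, I would first rewrite
\bae
M_t^N(\xi) = \sum_{\xi' \in \bm{\mathcal{X}}\times\bm{\mathcal{A}}} \frac{1}{N}\sum_{k=0}^{\floor{Nt}-1} c_k(\xi')\,\zeta_k^N \left[\mathbbm{1}_{\{\widetilde{\xi}_{k}=\xi'\}} - \sigma_\mu^{f_k}(\xi')\right], \qquad c_k(\xi') := Q_k(\xi')\,\partial_\xi \log f_k(\xi').
\eae
Since $\bm{\mathcal{X}}\times\bm{\mathcal{A}}$ is finite, it suffices to show that each inner sum has expected absolute value tending to $0$, uniformly in $t\in(0,T]$. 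The obstruction relative to Lemma \ref{fluctuation} is the extra random weight $c_k(\xi')$, which cannot simply be pulled out of the fluctuation sum.

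The two structural facts I would exploit are that $c_k(\xi')$ is bounded and varies slowly along the trajectory. By Lemma \ref{AC bound}, $|Q_k|\le C_T$, and since $\partial_\xi \log f_k(\xi') \in [-1,1]$ we get $|c_k(\xi')|\le C_T$; moreover the one-step updates \eqref{ACupdates} give $\|\theta_k-\theta_{k-1}\|\le C_T/N$ and $\sup_\xi|Q_k(\xi)-Q_{k-1}(\xi)|\le C_T/N$ almost surely, which combined with Lipschitz continuity of the softmax map yields $|c_k(\xi')-c_{k-1}(\xi')|\le C_T/N$ almost surely. I would then apply summation by parts. Setting $S_m := \sum_{k=0}^{m-1}\zeta_k^N[\mathbbm{1}_{\{\widetilde{\xi}_k=\xi'\}}-\sigma_\mu^{f_k}(\xi')]$, so that $S_0=0$ and $S_{k+1}-S_k = \zeta_k^N[\mathbbm{1}_{\{\widetilde{\xi}_k=\xi'\}}-\sigma_\mu^{f_k}(\xi')]$, Abel's identity gives, with $m=\floor{Nt}$,
\bae
& \frac{1}{N}\sum_{k=0}^{m-1} c_k(\xi')\zeta_k^N \left[\mathbbm{1}_{\{\widetilde{\xi}_{k}=\xi'\}}-\sigma_\mu^{f_k}(\xi')\right] \\
=& \frac{1}{N}c_{m-1}(\xi')\,S_m - \frac{1}{N}\sum_{k=1}^{m-1}\left(c_k(\xi')-c_{k-1}(\xi')\right)S_k.
\eae

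To close the argument I would invoke the uniform-in-endpoint version of Lemma \ref{fluctuation}. Its proof bounds the martingale term \eqref{error1} by $\frac{4C_T}{N^2}\sum_{k=0}^{\floor{NT}-1}(\zeta_k^N)^2$ and the remaining telescoping terms by $4C_T\gamma_0 + O(N^{-1})$ with $\gamma_0$ arbitrary, all uniformly in the upper limit $m\le\floor{NT}$; hence $\delta_N := \sup_{m\le\floor{NT}}\e|\frac{1}{N}S_m|\to 0$. The boundary term is then controlled by $\frac{1}{N}\e|c_{m-1}(\xi')S_m|\le C_T\,\e|\frac{1}{N}S_m|\le C_T\delta_N$. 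For the remainder, using the pathwise bound $|c_k(\xi')-c_{k-1}(\xi')|\le C_T/N$ and $\e|S_k|\le N\delta_N$,
\bae
& \frac{1}{N}\sum_{k=1}^{m-1}\e\left[\left|c_k(\xi')-c_{k-1}(\xi')\right|\,|S_k|\right] \\
\le& \frac{C_T}{N^2}\sum_{k=1}^{m-1}\e|S_k| \le \frac{C_T}{N^2}\floor{NT}\,\max_{k\le\floor{NT}}\e|S_k| \le C_T T\,\delta_N.
\eae
Both bounds are independent of $t$ and vanish, so summing over the finitely many $\xi'$ yields $\sup_{t\in(0,T]}\e|M_t^N(\xi)|\le C_T(1+T)\,|\bm{\mathcal{X}}||\bm{\mathcal{A}}|\,\delta_N\to 0$.

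The main obstacle I expect is precisely the random, time-varying weight $c_k$, which blocks a direct appeal to Lemma \ref{fluctuation}; the resolution rests on two points that must be checked with care — the almost-sure $O(1/N)$ slow variation of $c_k$ (so that the Abel remainder gains the extra factor $1/N$ that absorbs the trivial pathwise bound $|S_k|\le NT$ into $\delta_N$), and the uniformity in the endpoint $m$ of the estimate in Lemma \ref{fluctuation}, which is needed because $M_t^N$ must be controlled simultaneously for all $t\le T$. An equivalent route, should the slow-variation bookkeeping prove awkward, is to re-run the Poisson-equation decomposition \eqref{decompose} carrying $c_k$ as an $\mathscr{F}_k$-measurable coefficient (note $Q_k,\theta_k$ are $\mathscr{F}_k$-measurable): the martingale term is then handled by $L^2$-orthogonality together with $\frac{1}{N^2}\sum_{k}(\zeta_k^N)^2 \to 0$, while the telescoping term reuses the Lipschitz-continuity and uniform geometric ergodicity estimates from Lemmas \ref{stationary} and \ref{fluctuation}.
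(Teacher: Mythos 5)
Your proof is correct, but it takes a genuinely different route from the paper's. The paper handles the random weight $c_k(\xi') = Q_k(\xi')\partial_\xi\log f_k(\xi')$ by a block-freezing argument: it partitions $[0,t]$ into $K$ blocks of length $\Delta = t/K$, replaces $c_k$ by its value $c_{j\lfloor\Delta N\rfloor}$ at the start of each block (the term $I^N_{1,j}$, to which Lemma \ref{fluctuation} applies since the frozen, $C_T$-bounded coefficient factors out of the block sum), and controls the freezing error $I^N_{2,j}$ via the $O(1/N)$ slow variation of $c_k$, yielding $\limsup_N \sup_t \e|M_t^N| \le CT/K$ for every $K$. You instead apply Abel summation to the partial sums $S_m$ of the weighted fluctuations, which uses exactly the same two ingredients (boundedness and $O(1/N)$ slow variation of $c_k$ from Lemma \ref{AC bound}, plus the fluctuation estimate of Lemma \ref{fluctuation}) but avoids the two-parameter limit entirely and produces the cleaner single bound $C_T(1+T)\delta_N$. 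The one point you correctly flag as needing care --- that $\delta_N = \sup_{m\le\lfloor NT\rfloor}\e|\tfrac{1}{N}S_m| \to 0$, i.e.\ uniformity of Lemma \ref{fluctuation} in the endpoint --- does hold, since every bound in that lemma's proof (the $L^2$ martingale orthogonality bound, the Lipschitz/ergodicity bound on $\epsilon_k^{(2)}$, and the boundary terms) is a sum of nonnegative quantities over $k \le m-1$ plus an $O(1/N)$ remainder, all monotone in $m$ and controlled by constants depending only on $T$; note the paper's own proof needs an analogous (unremarked) extension of Lemma \ref{fluctuation} to block sums not starting at $k=0$, so neither argument is more demanding on that front.
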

\begin{proof} 
For any $K \in \mathbb{N}$ and $\Delta=\frac{t}{K},$ we have
\begin{equation}
\begin{aligned}
&M_{t}^{N}(\xi) \\
=& \sum_{j=0}^{K-1} \Delta \frac{1}{\lfloor\Delta N\rfloor} \sum_{k=j\lfloor\Delta N\rfloor}^{(j+1)\lfloor\Delta N\rfloor-1} \zeta^N_k\left(Q_k(\widetilde{\xi}_k) \partial_\xi \log f_{k}\left(\widetilde{\xi}_{k}\right) - \sum_{\xi' \in \bm{\mathcal{X}} \times \bm{\mathcal{A}}} Q_k(\xi')\partial_\xi \log f_{k}\left(\xi'\right) \sigma_\mu^{f_{k}}(\xi')\right) + o(1) \\
=& \sum_{j=0}^{K-1} \Delta \frac{1}{\lfloor\Delta N\rfloor} \sum_{k=j\lfloor\Delta N\rfloor}^{(j+1)\lfloor\Delta N\rfloor-1} \zeta^N_k \left(Q_{j\lfloor\Delta N\rfloor}(\widetilde{\xi}_k) \partial_\xi \log f_{j\lfloor\Delta N\rfloor}\left(\widetilde{\xi}_{k}\right) - \sum_{\xi' \in \bm{\mathcal{X}} \times \bm{\mathcal{A}}} Q_{j\lfloor\Delta N\rfloor}(\xi') \partial_\xi \log f_{j\lfloor\Delta N\rfloor}\left(\xi'\right) \sigma_\mu^{f_{k}}(\xi')\right) \\
+& \sum_{j=0}^{K-1} \Delta \frac{1}{\lfloor\Delta N\rfloor} \sum_{k=j\lfloor\Delta N\rfloor}^{(j+1)\lfloor\Delta N\rfloor-1}  \zeta^N_k \Bigg[ \left( Q_k(\widetilde{\xi}_k) \partial_\xi \log f_{k}\left(\widetilde{\xi}_{k}\right) - \sum_{\xi' \in \bm{\mathcal{X}} \times \bm{\mathcal{A}}} Q_k(\xi') \partial_\xi \log f_{k}\left(\xi'\right) \sigma_\mu^{f_{k}}(\xi') \right) \\
-& \left(Q_{j\lfloor\Delta N\rfloor}(\widetilde{\xi}_k)\partial_\xi \log f_{j\lfloor\Delta N\rfloor}\left(\widetilde{\xi}_{k}\right) - \sum_{\xi' \in \bm{\mathcal{S}} \times \bm{\mathcal{A}}} Q_{j\lfloor\Delta N\rfloor}(\xi') \partial_\xi \log f_{j\lfloor\Delta N\rfloor}\left(\xi'\right) \sigma_\mu^{f_{k}}(\xi')\right) \Bigg] + o(1)\\
:=& \sum_{j=0}^{K-1} \Delta I^N_{1,j} + \sum_{j=0}^{K-1} \Delta I^N_{2,j} + o(1). 	
\end{aligned}
\end{equation}
where the term $o(1)$ goes to zero, at least, in $L^{1}$ as $N \rightarrow \infty$. 

To prove the convergence of the first term, note that
\begin{equation}
\begin{aligned}
&Q_{j\lfloor\Delta N\rfloor}(\widetilde{\xi}_k) \partial_\xi \log f_{j\lfloor\Delta N\rfloor}\left(\widetilde{\xi}_{k}\right) -\sum_{\xi'} Q_{j\lfloor\Delta N\rfloor}(\xi') \partial_\xi \log f_{j\lfloor\Delta N\rfloor}\left(\xi'\right) \sigma_\mu^{f_k}(\xi') \\
=& \sum_{\xi'} Q_{j\lfloor\Delta N\rfloor}(\xi') \partial_\xi \log f_{j\lfloor\Delta N\rfloor}\left(\xi'\right) \mathbbm{1}_{\{ \widetilde{\xi}_k = \xi'\}} - \sum_{\xi'} Q_{j\lfloor\Delta N\rfloor}(\xi') \partial_\xi \log f_{j\lfloor\Delta N\rfloor}\left(\xi' \right) \sigma_\mu^{f_k}(\xi')\\
=& \sum_{\xi'} Q_{j\lfloor\Delta N\rfloor}(\xi') \partial_\xi \log f_{j\lfloor\Delta N\rfloor}\left(\xi'\right) \left[ \mathbbm{1}_{\{ \widetilde{\xi}_k = \xi'\}} - \sigma_\mu^{f_k}(\xi')\right].
\end{aligned}
\end{equation}
Thus, for any $j \in \{ 0,1, \ldots, K \}$,
\bae
\left| I^N_{1,j} \right| &=  \left| \frac{1}{\lfloor\Delta N\rfloor} \sum_{k=j\lfloor\Delta N\rfloor}^{(j+1)\lfloor\Delta N\rfloor-1} \zeta_k^N \sum_{\xi'} Q_{j\lfloor\Delta N\rfloor}(\xi') \partial_\xi \log f_{j\lfloor\Delta N\rfloor}\left(\xi'\right) \left[ \mathbbm{1}_{\{ \widetilde{\xi}_k = \xi'\}} - \sigma_\mu^{f_k}(\xi')\right] \right| \\
&= \left| \sum_{\xi'} Q_{j\lfloor\Delta N\rfloor}(\xi') \partial_\xi \log f_{j\lfloor\Delta N\rfloor}\left(\xi'\right) \frac{1}{\lfloor\Delta N\rfloor}\sum_{k=j\lfloor\Delta N\rfloor}^{(j+1)\lfloor\Delta N\rfloor-1} \zeta_k^N \left[ \mathbbm{1}_{\{ \widetilde{\xi}_k = \xi'\}} - \sigma_\mu^{f_k}(\xi')\right] \right|\\
&\le C_T \sum_{\xi'}\left| \frac{1}{\lfloor\Delta N\rfloor}\sum_{k=j\lfloor\Delta N\rfloor}^{(j+1)\lfloor\Delta N\rfloor-1}  \zeta_k^N \left[ \mathbbm{1}_{\{ \widetilde{\xi}_k = \xi'\}} - \sigma_\mu^{f_k}(\xi')\right] \right|,
\eae
which together with Lemma \ref{fluctuation} proves 
\beq
\label{error}
\lim_{N \to \infty} \e\left| I^N_{1,j} \right| = 0.
\eeq
Thus,
\beq
\sum_{j=0}^{K-1} \Delta I^N_{1,j}  = \Delta \sum_{j=0}^{K-1} O_P(1) = t \frac{\sum_{j=0}^{K-1} O_P(1)}{K},
\eeq
which derives the convergence of the first term.

For the second term, using the bound in Lemma \ref{AC bound}, we have for any $k\le TN $  
\begin{equation}
\begin{aligned}
\label{Q difference}
&\sup _{\xi' \in \mathcal{X} \times \mathcal{A}}\left|Q_{k}(\xi')\right| \le C_T,\\
&\sup_{\xi'} \left| Q_k(\xi') - Q_{k-1}(\xi') \right| \le \frac{C_T}{N}.
\end{aligned}
\end{equation}
Noting that 
$$
\partial_{\xi} \log f_k(\xi') = \mathbbm{1}_{\{ x' = x\}} \left[ \mathbbm{1}_{\{ a'=a \}} - f_k( x', a) \right],
$$ 
then by the Lipschitz continuity of the softmax transformation and \eqref{Q difference} we have 
\begin{equation}
\begin{aligned}
&\left| Q_{k}(\xi') \partial_\xi \log f_{k}\left(\xi'\right) - Q_{k-1}(\xi') \partial_\xi \log f_{k-1}\left(\xi'\right) \right| \\
=& \left| \left[ Q_{k}(\xi') - Q_{k-1}(\xi') \right] \partial_\xi \log f_{k}\left(\xi'\right) \right| + \left| Q_{k-1}(\xi') \left[ \partial_\xi \log f_{k}\left(\xi'\right) -\partial_\xi \log f_{k-1}\left(\xi'\right) \right] \right|\\
\le& \frac{C_T}{N} + C_T \left| \partial_\xi \log f_{k}\left(\xi'\right) -\partial_\xi \log f_{k-1}\left(\xi'\right) \right| \\
\le& \frac{C_T}{N} + C_T \left\| \theta_k - \theta_{k-1} \right\| \le \frac{C_T}{N}.
\end{aligned}
\end{equation}
Then, for any $j \in 0,1, \cdots, K-1$ and any $k\in [j\lfloor\Delta N\rfloor, (j+1)\lfloor\Delta N\rfloor -1 ]$,
\beq
\left| Q_k(\xi')\partial_\xi \log f_{k}\left(\xi'\right) - Q_{j\lfloor\Delta N\rfloor}(\xi')\partial_\xi \log f_{j\lfloor\Delta N\rfloor}\left(\xi'\right) \right| \le \frac{C(k-j\lfloor\Delta N\rfloor )}{N}.
\eeq
Thus,
\bae
\sum_{j=0}^{K-1} \Delta I^N_{2,j}
&\leq C \sum_{j=0}^{K-1} \Delta \frac{1}{\lfloor\Delta N\rfloor} \sum_{k=j \lfloor \Delta N\rfloor}^{(j+1)\lfloor\Delta N\rfloor-1} \zeta_k^N \frac{k-j\lfloor\Delta N\rfloor}{N}\\
&= C \sum_{j=0}^{K-1} \Delta \frac{1}{\lfloor\Delta N\rfloor} \sum_{k=0 }^{\lfloor\Delta N\rfloor-1} \frac{k}{N}\\
&\le  C \sum_{j=0}^{K-1} \Delta \frac{1}{\lfloor\Delta N\rfloor} \frac{\lfloor\Delta N\rfloor^2}{N}\\
&\le  C \sum_{j=0}^{K-1} \Delta  \frac{\lfloor\Delta N\rfloor}{N}\\
&\le  C \sum_{j=0}^{K-1} \Delta^2 \\
&\le  C \Delta.
\eae

Collecting our results, we have shown that
\beq
\lim _{N \rightarrow \infty} \sup _{t \in(0, T]} \e\left|M_{t}^{N}\right| \leq C \frac{T}{K}
\eeq
Note that $K$ was arbitrary. Consequently, we obtain
\beq
\lim _{N \rightarrow \infty} \sup _{t \in(0, T]} \e\left|M_{t}^{N}\right|=0,
\eeq
concluding the proof of the lemma.
\end{proof}
Following the same method, we can finish proving the convergence of the stochastic fluctuation terms and the detailed proof can be found in Appendix \ref{appendix concentration}.
\begin{lemma}
\label{concentration lemma}
For $t \in [0,T]$, $M_t^{1,N}$, $M_t^{2,N}$, $M_t^{3,N}$ $\overset{L^1}{\to} 0$ as $N\to \infty$.
\end{lemma}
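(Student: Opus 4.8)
The term $M_t^N$ has already been handled in Lemma \ref{limit lemma}, where in fact the stronger statement $\lim_{N\to\infty}\sup_{t\in(0,T]}\e|M_t^N(\xi)| = 0$ was established. The plan is therefore to treat the three critic error terms $M_t^{1,N}, M_t^{2,N}, M_t^{3,N}$ by the same two-step mechanism, transported from the artificial chain $(\widetilde X,\widetilde A)$ to the original chain $(X,A)$ under the exploration policy $g_k$. First I would reduce each of them to a single scalar fluctuation estimate for $(X,A)$, namely the critic analogue of Lemma \ref{fluctuation}:
\beq
\lim_{N\to\infty}\e\left|\frac{1}{N}\sum_{k=0}^{\floor{NT}-1}\left[\mathbbm{1}_{\{\xi_k = \xi\}} - \pi^{g_k}(\xi)\right]\right| = 0, \qquad \forall \xi \in \bm{\mathcal{X}}\times\bm{\mathcal{A}}.
\eeq
Indeed, using $\partial_\xi Q_k(\xi_k) = \mathbbm{1}_{\{\xi_k = \xi\}}$ one checks that $M_t^{1,N}(\xi) = \frac{1}{N}\sum_{k=0}^{\floor{Nt}-1} Q_k(\xi)\left[\pi^{g_k}(\xi) - \mathbbm{1}_{\{\xi_k=\xi\}}\right]$, and $M_t^{2,N}, M_t^{3,N}$ have the same structure, with the bounded, Lipschitz-in-$\theta$ coefficients $r(\xi)$ and $\gamma\sum_{z,a''}Q_k(z,a'')g_k(z,a'')p(z|\xi)$ in place of $Q_k(\xi)$.

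To prove the displayed critic fluctuation estimate I would mirror the proof of Lemma \ref{fluctuation}. Corollary \ref{origin mdp estimate} supplies the uniform geometric ergodicity $\sup_{k\le NT}\|\prob^n_{\theta_k}(\xi;\cdot) - \pi^{g_k}(\cdot)\| \le (1-\beta_T)^{\floor{n/n_0}}$, so the associated Poisson equation
\beq
\widetilde\nu_{\theta_k}(\xi') - \prob_{\theta_k}\widetilde\nu_{\theta_k}(\xi') = \mathbbm{1}_{\{\xi'=\xi\}} - \pi^{g_k}(\xi), \qquad \widetilde\nu_{\theta_k}(\xi') := \sum_{n\ge 0}\left[\prob^n_{\theta_k}(\xi';\xi) - \pi^{g_k}(\xi)\right]
\eeq
has a solution bounded uniformly in $k\le NT$ and in $N$ by some $C_T$, exactly as in Lemma \ref{possion equation}. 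I would then telescope $\frac{1}{N}\sum_k[\mathbbm{1}_{\{\xi_k=\xi\}} - \pi^{g_k}(\xi)]$ as in \eqref{decompose}, but with the weight $\zeta^N_k$ replaced by $1$: the martingale-difference part $\widetilde\nu_{\theta_k}(\xi_{k+1}) - \prob_{\theta_k}\widetilde\nu_{\theta_k}(\xi_k)$ vanishes in $L^2$ by the conditional-expectation contraction and the uniform bound \eqref{uniform}; the term $\psi_{\theta_k}(\xi_k) - \psi_{\theta_{k-1}}(\xi_k)$, with $\psi_\theta = \prob_\theta\widetilde\nu_\theta$, is controlled by Lipschitz continuity of $\prob^n_\theta$ in $\theta$ together with $\|\theta_k - \theta_{k-1}\|\le C_T/N$ from Lemma \ref{AC bound}; and the boundary term is $O(N^{-1})$. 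Because $\zeta^N_k\equiv 1$ here, the third sum $\epsilon_k^{(3)}$ of \eqref{decompose} is identically zero, which only simplifies matters.

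With the critic fluctuation estimate in hand, each $M_t^{i,N}$ is treated by the block-discretization of Lemma \ref{limit lemma}: partition $[0,t]$ into $K$ intervals of length $\Delta = t/K$, freeze the coefficients at the left endpoint $j\floor{\Delta N}$ of each block, apply the fluctuation estimate to $\frac{1}{\floor{\Delta N}}\sum_k[\mathbbm{1}_{\{\xi_k=\xi\}} - \pi^{g_k}(\xi)]$ on each block, and bound the freezing error by the a priori increments $|Q_k(\xi') - Q_{k-1}(\xi')|\le C_T/N$ and $\|\theta_k - \theta_{k-1}\|\le C_T/N$ of Lemma \ref{AC bound}. This yields a bound of order $T/K$ for each $M_t^{i,N}$ uniformly in $N$, and letting $K\to\infty$ gives the $L^1$ convergence as in \eqref{online convergence}. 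The term $M_t^{3,N}$ carries the extra double sum $\sum_{z,a''}\gamma Q_k(z,a'')g_k(z,a'')p(z|\xi')$, but this coefficient is bounded by Lemma \ref{AC bound} and Lipschitz in $\theta_k$, so the freezing estimate goes through unchanged.

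The hard part will be verifying that the geometric ergodicity rate $\beta_T$ for the exploration chain is genuinely uniform in $N$. Unlike the actor chain of Lemma \ref{stationary}, the minorization constant here is $\beta_T = C(\eta^N_{\floor{NT}})^{n_0}$ and depends on the \emph{decaying} exploration rate. The key observation is that $\inf_{k\le NT}\eta^N_k = \eta^N_{\floor{NT}} \to \eta_T = \frac{1}{1+\log^2(T+1)} > 0$, a strictly positive constant depending only on $T$; hence $g_k(x,a)\ge \eta^N_{\floor{NT}}/d_A$ is bounded below uniformly for $k\le NT$ and $\beta_T$ stays bounded away from $0$ as $N\to\infty$. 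This is precisely what keeps the Poisson solution $\widetilde\nu_{\theta_k}$ bounded by an $N$-independent $C_T$, and it is the only point at which the careful choice of exploration rate in Assumption \ref{learning} enters this lemma.
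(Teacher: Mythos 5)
Your proposal follows essentially the same route as the paper's proof: step (i) establishes the exploration-chain fluctuation estimate $\lim_{N\to\infty}\e\bigl|\frac{1}{N}\sum_{k}[\mathbbm{1}_{\{\xi_k=\xi\}}-\pi^{g_k}(\xi)]\bigr|=0$ via the Poisson equation for $\prob_{\theta_k}$ with the uniformly bounded solution supplied by Corollary \ref{origin mdp estimate} (and the same martingale/Lipschitz/boundary decomposition, with the third term absent since the weight is $1$), and step (ii) applies the block-freezing argument of Lemma \ref{limit lemma} to each $M_t^{i,N}$. Your explicit verification that $\beta_T = C\,(\eta^N_{\lfloor NT\rfloor})^{n_0}$ stays bounded away from zero because $\eta^N_{\lfloor NT\rfloor}\ge \frac{1}{1+\log^2(T+1)}$ makes explicit a uniformity the paper leaves implicit, but does not alter the argument.
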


Using Lemma \ref{limit lemma} and \ref{concentration lemma}, we can now finish the derivation of the limit ODEs.
\begin{proof}[Proof of Theorem \ref{limit ode}:]
Due to Assumption \ref{ergodic assumption} and the Lipschitz continuity of softmax transformation, \textcolor{black}{we know $\sigma^{f_{\bar\theta_t}}_\mu, \pi^{g_{\bar\theta_t}}$ is Lipschitz continuous in $\bar\theta_t$}. By Theorem 2.2 and Theorem 2.17 of \cite{teschl2012ordinary}, for any initial value, there exists a unique solution on $(0,+\infty)$ for the ODE system \eqref{limit equations}. Let $\bar{Q}_t(x,a)$, $\bar{\theta}_t(x,a)$ be the solution of \eqref{limit equations} with initial value $Q_0, \theta_0$. Using the bound in Lemma \ref{AC bound} and the Lipschitz continuity from Assumption \ref{ergodic assumption}, we have for $t \in [0,T]$ 
\bae
\label{decomposition}
&\left| \sigma_\mu^{f_{\floor{Nt}}}(x,a) Q_{\floor{Nt}}(x,a) -  \sigma_\mu^{f_t}(x,a) Q_t(x,a) \right| \\
\le& \left|  \sigma_\mu^{f_{\floor{Nt}}}(x,a)-  \sigma_\mu^{f_t}(x,a) \right| \cdot \left| Q_{\floor{Nt}}(x,a) \right| +  \sigma_\mu^{f_t}(x,a) \left| Q_{\floor{Nt}}(x,a) - Q_t(x,a) \right| \\
\le& C_T \left[ \left\| \theta_{\floor{Nt}} - \bar\theta_t \right\| + \left\| Q_{\floor{Nt}} - \bar{Q}_t \right\| \right],
\eae
and we can also show for the exploration policy from \eqref{ActorwithExploration} that
\bae
\label{exploration difference}
&\left| g_{\floor{Nt}}(x, a) - g_t(x, a) \right| \\
\le& \frac{\left| \eta^N_{\floor{Nt}} - \eta_t \right|}{d_A} + \left| (1 - \eta^N_{\floor{Nt}}) \cdot f_{\theta_{\floor{Nt}}}(x, a) - (1 - \eta_t) \cdot f_{\theta_t}(x, a) \right| \\
=& \frac{\left| \eta^N_{\floor{Nt}} - \eta_t \right|}{d_A} + \left| f_{\theta_{\floor{Nt}}}(x, a) - f_{\theta_t}(x, a) \right| + \left| \eta^N_{\floor{Nt}} f_{\theta_{\floor{Nt}}}(x, a) - \eta_t f_{\theta_t}(x, a) \right| \\
\le& C\left| \eta^N_{\floor{Nt}} - \eta_t \right| + C \left\| \theta_{\floor{Nt}} - \bar\theta_t \right\|.
\eae
Combining \eqref{actor pre limit}, \eqref{critic pre limit}, and \eqref{limit equations} and using the same decomposition method as in \eqref{decomposition}, we have for $t \in [0,T]$ 
\bae
\label{pass}
&\left\| \theta_{\floor{Nt}} - \bar\theta_t \right\| + \left\| Q_{\floor{Nt}} - \bar{Q}_t \right\| \\
\le& \sum_{(x,a) \in \mathcal{X} \times \mathcal{A}} \left[ \left| \theta_{\floor{Nt}}(x,a) - \bar\theta_t(x,a) \right| + \left| Q_{\floor{Nt}}(x,a) - \bar{Q}_t(x,a) \right| \right] \\
\le& C_T \int_0^t  \left[ \left\| \theta_{\floor{Ns}} - \bar\theta_t \right\| + \left\| Q_{\floor{Ns}} - \bar{Q}_t \right\| \right] ds + \left| M_t^N \right| + \sum\limits_{i=1}^3 \left| M_t^{i,N} \right| + O(N^{-1}) \\
+& C_T \int_0^t \left[ \left| \zeta^N_{\floor{Ns}} - \zeta_s \right| + \left| \eta^N_{\floor{Ns}} - \eta_s \right| \right] ds.
\eae
Define 
\bae
\varphi^N_t &:= \left\| \theta_{\floor{Nt}} - \bar\theta_t \right\| + \left\| Q_{\floor{Nt}} - \bar{Q}_t \right\| \\
B^N_t &:= \left| M_t^N \right| + \sum\limits_{i=1}^3 \left| M_t^{i,N} \right| + O(N^{-1}) + C_T \int_0^t \left[ \left| \zeta^N_{\floor{Ns}} - \zeta_s \right| + \left| \eta^N_{\floor{Ns}} - \eta_s \right| \right] ds.
\eae
Due to Lemma \ref{limit lemma} and  \ref{concentration lemma},
\beq
\label{error decay}
\lim\limits_{N\to \infty} \e \sup\limits_{t \in [0,T]} B_t^N = 0. 
\eeq
Taking the supremum and expectation of \eqref{pass}, 
\beq
\e \sup\limits_{s \in [0,t]} \varphi^N_s \le C_T \int_0^t \e \sup\limits_{r \in [0,s]} \varphi^N_r ds + \e \sup\limits_{s \in [0,t]} B_s^N, \quad \forall t \in [0,T]
\eeq
By Gronwall's lemma, we have  
\beq
\e \sup\limits_{t \in [0,T]} \varphi^N_t \le \e \sup\limits_{t \in [0,T]} B_t^N + C_T \int_0^T \e \sup\limits_{s \in [0,t]} B_s^N dt \le C_T \e \sup\limits_{t \in [0,T]} B_t^N,
\eeq
which together with \eqref{error decay} proves the convergence \eqref{conv to limit ode}.
\end{proof}

\section{Convergence of Limit ODEs}\label{global convergence}

We now study the convergence of the limit actor-critic algorithm, which satisfies the ODE system \eqref{limit equations}. 

\subsection{Critic convergence}\label{critic conv}

\hspace{1.4em} Now we prove convergence of the critic \eqref{critic convergence}, which states that the critic model will converge to the state-action value function during training. We first derive an ODE for the difference between the critic and the value function. Then, we use a comparison lemma, a two time-scale analysis, and the properties of the learning and exploration rates \eqref{hyper property} to prove the convergence of the critic to the value function.

Recall that the value function $V^{g_t}$ satisfies the Bellman equation
\beq
\label{Bellman equation}
r(x,a) + \gamma \sum_{z, a''} V^{g_{\bar \theta_t}}(z,a'') g_{\bar \theta_t}(z, a'') p(z| x,a) - V^{g_{\bar\theta_t}}(x,a) = 0.
\eeq
Define the difference
\beq
\phi_t = \bar Q_t - V^{g_{\bar \theta_t}}. 
\eeq
As a first step, we prove an a priori uniform bound for the critic in the update \eqref{limit equations}.  Without loss of generality, we initialize the ODE as $\bar Q_0 = 0$ (we can always define $\bar Q'_t = \bar Q_t - \bar Q_0$ and prove the uniform bound for $Q'_t$).
\begin{lemma}
\label{uniform critic bound}
For any state $x$ and action  $a$, we have 
\beq
\max\limits_{x, a} \left|\bar Q_t(x, a)\right| \le \frac{2}{1-\gamma}, \quad t\ge 0.
\eeq
\end{lemma}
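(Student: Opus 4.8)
The plan is to prove the bound by a maximum-principle (barrier) argument applied to the sup-norm of the critic along the ODE flow \eqref{limit equations}, exploiting the fact that the Bellman-type operator appearing in the critic ODE is an \emph{averaging} operator. A solution exists for all $t \ge 0$ (as already used in the proof of Theorem \ref{limit ode}), and the right-hand side of the critic ODE is affine in $\bar Q_t$ with bounded coefficients, so there is no obstruction to propagating a bound over $[0,\infty)$. The starting point is the structural observation that $\sum_{z,a''} \bar Q_t(z,a'')\, g_{\bar\theta_t}(z,a'')\, p(z\mid x,a)$ is a convex combination of the entries of $\bar Q_t$, since $\sum_{z,a''} g_{\bar\theta_t}(z,a'')\, p(z\mid x,a) = 1$; hence its absolute value never exceeds $\max_{(z,a'')}|\bar Q_t(z,a'')|$.

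I would set $W(t) := \max_{(x,a)} \left| \bar Q_t(x,a) \right|$. As a maximum of finitely many $C^1$ functions, $W$ is locally Lipschitz, hence differentiable for a.e.\ $t$, and its upper right Dini derivative obeys the envelope inequality $D^+ W(t) \le \max_{(x,a) \in A(t)} \mathrm{sgn}\big(\bar Q_t(x,a)\big)\, \tfrac{d\bar Q_t}{dt}(x,a)$, where $A(t)$ is the nonempty set of maximizers (for $W(t)>0$ every such $(x,a)$ has $\bar Q_t(x,a)\neq 0$, so the sign is well defined). Evaluating at a maximizer $(x^\#,a^\#)$ I would split into two cases. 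If $\bar Q_t(x^\#,a^\#) = W(t) > 0$, bounding the averaging term above by $W(t)$ and using $r\in[0,1]$ (Assumption \ref{MDP}) and $\pi^{g_{\bar\theta_t}} \ge 0$ gives
\beq
D^+ W(t) \le \alpha\, \pi^{g_{\bar\theta_t}}(x^\#,a^\#)\,\big(1 - (1-\gamma)W(t)\big).
\eeq
If instead $\bar Q_t(x^\#,a^\#) = -W(t) < 0$, bounding the averaging term below by $-W(t)$ and using $r \ge 0$ yields $\tfrac{d\bar Q_t}{dt}(x^\#,a^\#) \ge \alpha\, \pi^{g_{\bar\theta_t}}(x^\#,a^\#)\,(1-\gamma)W(t) \ge 0$, so that the corresponding contribution to $D^+W(t)$ is $\le 0$.

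Combining the two cases, whenever $W(t) \ge \frac{1}{1-\gamma}$ one has $D^+ W(t) \le 0$, because the multiplicative factor $\pi^{g_{\bar\theta_t}}$ is nonnegative and $1-(1-\gamma)W(t)\le 0$ in this regime. Since $W(0) = 0 < \frac{1}{1-\gamma}$, a standard invariance (barrier) argument for the set $\{\, W \le (1-\gamma)^{-1}\,\}$ then shows $W(t) \le \frac{1}{1-\gamma} \le \frac{2}{1-\gamma}$ for all $t\ge 0$, which is the claim (indeed the argument delivers the slightly sharper constant $\frac{1}{1-\gamma}$, and the stated $\frac{2}{1-\gamma}$ is a convenient slack).

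The main obstacle is making the Dini-derivative computation fully rigorous: one must justify the envelope inequality for the nonsmooth map $t \mapsto \max_{(x,a)} |\bar Q_t(x,a)|$, account for the active set $A(t)$ changing with $t$, and keep the sign bookkeeping at the maximizer straight when positive and negative maximizers coexist. The nonnegative factor $\pi^{g_{\bar\theta_t}}(x^\#,a^\#)$ is harmless since it only rescales a quantity of the correct sign, but it does prevent a clean comparison against a single autonomous scalar ODE; for this reason the invariance argument for the sublevel set $\{W \le (1-\gamma)^{-1}\}$, rather than an explicit comparison solution, is the cleaner route I would take.
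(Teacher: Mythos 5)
Your proposal is correct and is essentially the paper's own argument: the paper also proves the bound by evaluating the critic ODE at a maximizer, using $r\in[0,1]$ and the fact that $\sum_{z,a''}\bar Q_t(z,a'')g_{\bar\theta_t}(z,a'')p(z\mid x,a)$ is a convex combination bounded by the current sup-norm, to show the derivative at the barrier level is nonpositive (the paper uses the level $\tfrac{2}{1-\gamma}$, where the bracket evaluates to $-1$). Your version merely formalizes the invariance step via Dini derivatives and obtains the slightly sharper constant $\tfrac{1}{1-\gamma}$, which still implies the stated bound.
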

\begin{proof}
We first prove $\max\limits_{x, a} \bar Q_t(x,a)$ cannot become larger than $\frac{2}{1-\gamma}$. Actually, if $\max_{x,a} \bar Q_t(x,a)$ ever attains $\frac{2}{1-\gamma}$, that is for some $t_0\ge 0$
\beq
\max_{x,a} \bar Q_{t_0}(x,a) = \frac{2}{1-\gamma},
\eeq
then for any state-action pair $(x_0,a_0)$ such that $Q_{t_0}(x_0,a_0) = \frac{2}{1-\gamma}$ we have		
\beq
\label{Q bound}
\frac{d \bar Q_t}{dt}(x_0, a_0)\bigg|_{t=t_0} \le \alpha \pi^{g_{\bar \theta_{t_0}}}(x_0,a_0) \left[ 1 + 2 \frac{\gamma}{ 1 - \gamma} - \frac{2}{ 1 - \gamma } \right] =  -\alpha \pi^{g_{\bar \theta_{t_0}}}(x_0, a_0) \le 0,
\eeq
and therefore $\max_{x,a} \bar Q_t(x,a)$ can never exceed $\frac{2}{1-\gamma}$. Similarly, we can prove 
\beq
\min_{x,a} \bar Q_t(x,a) \ge -\frac{2}{1-\gamma}, \quad t \ge 0,
\eeq
which concludes the proof of the lemma.
\end{proof}

We now develop an ODE comparison principle which will help us to prove the convergence \eqref{critic convergence}.
\begin{lemma}
\label{comparison lemma}
Suppose a non-negative function $Y_t$ satisfies 
\beq
\frac{dY_t}{dt} \le -\frac{C}{\log^{2n_0} t} Y_t + \frac{1}{t}, \quad t \ge t_0, 
\eeq
where $C,n_0$ are constant and $t_0 \geq 0$. Then,
\beq
\label{comparison decay}
Y_t =  O\left( \frac{1}{\log^{4} t} \right).
\eeq
\end{lemma}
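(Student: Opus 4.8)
The plan is to convert the differential inequality into an explicit scalar linear ODE through a comparison argument, solve that ODE by variation of constants, and estimate the two resulting terms; since $Y_t$ is continuous (hence bounded on any compact interval) only the large-$t$ behaviour is at stake.

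First I would introduce the comparison process $Z_t$ solving the linear ODE in \eqref{comparison ODE}, namely $\frac{dZ_t}{dt} = -\frac{C}{\log^{2n_0}t}Z_t + \frac{1}{t}$ with $Z_{t_0}=Y_{t_0}$. Setting $W_t = Y_t - Z_t$ and subtracting the equation from the inequality gives $\frac{dW_t}{dt}\le -\frac{C}{\log^{2n_0}t}W_t$ with $W_{t_0}=0$; multiplying by the integrating factor $\exp\big(\int_{t_0}^t \frac{C}{\log^{2n_0}\tau}\,d\tau\big)$ shows $W_t\le 0$, so $Y_t\le Z_t$ for $t\ge t_0$. Because $Y_t\ge 0$, it then suffices to prove $Z_t = O(1/\log^4 t)$.

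Writing $A(t)=\int_{t_0}^t \frac{C}{\log^{2n_0}\tau}\,d\tau$, the variation-of-constants formula gives
\begin{equation*}
Z_t = e^{-A(t)}Y_{t_0} + e^{-A(t)}\int_{t_0}^t \frac{1}{s}\,e^{A(s)}\,ds.
\end{equation*}
For the homogeneous term, monotonicity of $\log$ yields $A(t)\ge C(t-t_0)/\log^{2n_0}t$, so $t/\log^{2n_0}t\to\infty$ forces $e^{-A(t)}$ to decay faster than every negative power of $\log t$; in particular this term is $o(1/\log^4 t)$ and is harmless. The \emph{crux} is the particular term $H(t):=e^{-A(t)}\int_{t_0}^t \frac1s e^{A(s)}\,ds$, where I expect the main work. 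I would show $H(t)=O\big(\log^{2n_0}t/t\big)$, which matches the quasi-static balance $b(t)/a(t)$ with $a(t)=C/\log^{2n_0}t$ and $b(t)=1/t$. Concretely, applying L'H\^opital's rule to the $\infty/\infty$ ratio $\big(\int_{t_0}^t \tfrac1s e^{A(s)}ds\big)\big/\big(\tfrac{\log^{2n_0}t}{Ct}e^{A(t)}\big)$, the derivative ratio equals $\big(1+\tfrac{\log^{2n_0}t}{Ct}(\tfrac{2n_0}{\log t}-1)\big)^{-1}\to 1$, giving $H(t)\sim \log^{2n_0}t/(Ct)$. Equivalently one can integrate by parts via $\frac1s e^{A(s)}=\frac{\log^{2n_0}s}{Cs}\frac{d}{ds}e^{A(s)}$, or exhibit $\bar Z_t = M\log^{2n_0}t/t$ with $M>1/C$ as an explicit supersolution for large $t$ and invoke comparison again. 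The only delicate point is that differentiating $\log^{2n_0}t/t$ produces a correction of lower order $\log^{2n_0}t/t^2$, which is negligible and does not affect the leading asymptotics.

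Finally, since $\log^{2n_0+4}t/t\to 0$ we have $\log^{2n_0}t/t = o(1/\log^4 t)$, so $H(t)=o(1/\log^4 t)$; combining with the homogeneous estimate yields $Z_t = O(1/\log^4 t)$, whence $0\le Y_t\le Z_t = O(1/\log^4 t)$, as claimed. I would remark that the true decay is in fact the much faster $\log^{2n_0}t/t$; the exponent $4$ in the statement is simply the threshold needed downstream, where one uses $Y_t = o(\eta_t^2)$ with $\eta_t^2 \asymp 1/\log^4 t$.
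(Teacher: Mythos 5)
Your proposal is correct and follows essentially the same route as the paper: comparison with the linear ODE $Z_t$ via an integrating factor, variation of constants, a fast-decaying homogeneous term, and L'H\^opital's rule for the particular term. The only difference is cosmetic --- you extract the sharper asymptotic $H(t)\sim \log^{2n_0}t/(Ct)$ where the paper only shows $\log^{4}t\cdot I_t^{4}\to 0$, but both yield \eqref{comparison decay}.
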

\begin{proof}
First, we establish a comparison principle with the following ODE:
\bae
\label{comparison}
\frac{dZ_t}{dt} &= -\frac{C}{\log^{2n_0} t} Z_t + \frac{1}{t} \ \quad t \ge t_0,  \\
Z_{t_0} &= Y_{t_0}.
\eae
Define 
$$
V_t = Y_t - Z_t.
$$
Then, we have $V_{t_0} = 0$ and for any $t \ge t_0$
\bae
\frac{dV_t}{dt} &= \frac{dY_t}{dt} - \frac{dV_t}{dt} \\
& \le -\frac{C}{\log^{2n_0} t} Y_t + \frac{1}{t} -\left(  -\frac{C}{\log^{2n_0} t} Z_t + \frac{1}{t} \right)\\
&= -\frac{C}{\log^{2n_0} t} (Y_t - Z_t)\\
&= -\frac{C}{\log^{2n_0} t} V_t.
\eae
Then, using an integrating factor,
\beq
\frac{d}{dt} \left[ \exp\left\{ \int_{t_0}^t \frac{C}{\log^{2n_0}\tau} d\tau \right\}V_t \right] = \exp\left\{ \int_{t_0}^t \frac{C}{\log^{2n_0}\tau} d\tau \right\} \left[ \frac{dV_t}{dt} + \frac{C}{\log^{2n_0} t} V_t \right] \le 0.
\eeq 
Thus we have $V_t \le \exp\left\{ - \int_{t_0}^t \frac{C}{\log^{2n_0}\tau} d\tau \right\}V_{t_0} = 0, \quad t\ge t_0$. Therefore,
\beq
Y_t \le Z_t \quad t\ge t_0.
\eeq
Then, if we can establish a convergence rate for $Z_t$, we have a convergence rate for $Y_t$.

To solve the ODE \eqref{comparison}, note that
\beq
\frac{d}{dt} \left[ \exp\left\{ \int_{t_0}^t \frac{C}{\log^{2n_0}\tau} d\tau \right\}Z_t \right] = \exp\left\{ \int_{t_0}^t \frac{C}{\log^{2n_0}\tau} d\tau \right\} \left[ \frac{dZ_t}{dt} + \frac{C}{\log^{2n_0} t} Z_t \right] = \frac{1}{t} \exp\left\{ \int_{t_0}^t \frac{C}{\log^{2n_0}\tau} d\tau \right\}.
\eeq
Then,
\bae
Z_t &= \frac{Z_{t_0}}{\exp\left\{ \int_{t_0}^t \frac{C}{\log^{2n_0}\tau} d\tau \right\}} + \frac{ \int_{t_0}^t \frac{1}{s} \exp\left\{ \int_{t_0}^s \frac{C}{\log^{2n_0}\tau} d\tau \right\} ds}{\exp\left\{ \int_{t_0}^t \frac{C}{\log^{2n_0}\tau} d\tau \right\} } \\
&:= I_t^3 + I_t^4.
\eae
Note that for any integer $n$ and constant $\gamma>0$,
\beq
\label{high order}
\lim\limits_{t \to \infty} \frac{\log^{n} t}{t^\gamma} = 0.
\eeq
Thus, without loss of generality, we can suppose $t_0$ is large enough such that
\beq
\log^{2n_0} t \le t, \quad  t \ge t_0.
\eeq
Then, we can show that
\beq
I_t^3 \le \frac{Z_{t_0}}{\exp\left\{ \int_{t_0}^t \frac{C}{\tau} d\tau \right\}} = \frac{Z_{t_0} t_0^C}{t^C}.
\eeq
By L'Hospital's Rule, we have 
\bae
\lim\limits_{t \to \infty} \log^{4} t \cdot I_t^4 &= \lim\limits_{t \to \infty} \frac{ \frac{4 \log^{2n_0+3} t}{Ct} \int_{t_0}^t \frac{1}{s} \exp\left\{ \int_{t_0}^s \frac{C}{\log^{2n_0}\tau} d\tau \right\} ds}{\exp\left\{ \int_{t_0}^t \frac{C}{\log^{2n_0}\tau} d\tau \right\} } + \lim\limits_{t \to \infty} \frac{ \log^{2n_0+4} t}{Ct}\\
&\overset{(a)}{=} \lim\limits_{t \to \infty} \frac{ \int_{t_0}^t \frac{1}{s} \exp\left\{ \int_{t_0}^s \frac{C}{\log^{2n_0}\tau} d\tau \right\} ds}{\exp\left\{ \int_{t_0}^t \frac{C}{\log^{2n_0}\tau} d\tau \right\} } \\
&= \lim\limits_{t \to \infty} \frac{ \frac{1}{t} \exp\left\{ \int_{t_0}^t \frac{C}{\log^{2n_0}\tau} d\tau \right\} }{\exp\left\{ \int_{t_0}^t \frac{C}{\log^{2n_0}\tau} d\tau \right\} \frac{C}{\log^{2n_0}t} }\\
&= \lim\limits_{t \to \infty} \frac{\log^{2n_0} t}{Ct}\\
&=0,
\eae
where step $(a)$ is by \eqref{high order}. Therefore, we can let $t_0$ be large enough such that
\beq
I_t^4 \le \frac{1}{\log^{4} t}, \quad \forall t\ge t_0.
\eeq
Combining our results, we have
\beq
Y_t \le Z_t \le \frac{Y_{t_0} t_0^C}{t^C} + \frac{1}{\log^{4} t}, \quad t\ge t_0,
\eeq
which together with \eqref{high order} proves \eqref{comparison decay}.
\end{proof}

Using Lemma \ref{comparison lemma}, now we prove the critic convergence \eqref{critic convergence}.
\begin{proof}[Proof of \eqref{critic convergence}:]
Combining \eqref{limit equations} and \eqref{Bellman equation},
\beq
\frac{d \phi_t}{dt}(x,a) =  - \alpha \pi^{g_{\bar \theta_t}}(x, a)  \phi_t(x, a)
+ \alpha \gamma \pi^{g_{\bar \theta_t}}(x, a)  \sum_{z, a''} \phi_t(z, a'') g_{\bar \theta_t}(z,a'') p(z| x, a) + \frac{dV^{g_{\bar \theta_t}}}{dt}(x,a). 
\eeq
Let $\odot$ denote element-wise multiplication. Then,
\beq
\frac{d \phi_t}{dt} =  - \alpha \pi^{g_{\bar\theta_t}} \odot \phi_t + \alpha \gamma  \pi^{g_{\bar\theta_t}} \odot  \Gamma_t + \frac{dV^{g_{\bar\theta_t}}}{dt},
\eeq
where $\Gamma_t(x', a') = \displaystyle \sum_{z, a''} \phi_t(z, a'') g_{\bar\theta_t}(z,a'') p(z| x', a')$. 

Define the process
\beq
Y_t =  \frac{1}{2} \phi_t^{\top} \phi_t.
\eeq
Differentiating yields
\beq
\frac{d Y_t}{d t} =  \phi_t^{\top} \frac{d \phi_t}{d t} = - \alpha \phi_t^{\top}  \pi^{g_{\bar \theta_t}} \odot \phi_t   +  \alpha \gamma \phi_t^{\top}   \pi^{g_{\bar \theta_t}} \odot  \Gamma_t  + \phi_t^{\top}\frac{dV^{g_{\bar\theta_t}}}{dt}.
\label{Yeqn0}
\eeq
The second term on the last line of (\ref{Yeqn0}) becomes:
\begin{equation}
\begin{aligned}
&\bigg{|} \phi_t^{\top}   \pi^{g_{\bar\theta_t}} \odot  \Gamma_t \bigg{|} \\
=& \bigg{|} \sum_{x', a'} \phi_t(x', a') \pi^{g_{\bar\theta_t}}(x', a') \sum_{z, a''} \phi_t(z, a'') g_{\bar\theta_t}(z, a'') p(z | x', a') \bigg{|} \notag \\
=& \bigg{|} \sum_{x', a'} \sum_{z, a''} \phi_t(z, a'')  \phi_t(x', a')  g_{\bar\theta_t}(z, a'') p(z | x', a') \pi^{g_{\bar\theta_t}}(x', a')  \bigg{|} \notag \\
\leq& \sum_{x', a'} \sum_{z, a''}  \bigg{|} \phi_t(z, a'')  \phi_t(x', a') \bigg{|}  g_{\bar\theta_t}(z, a'') p(z | x', a') \pi^{g_{\bar\theta_t}}(x', a')  \notag \\
\leq& \frac{1}{2}  \sum_{x', a'} \sum_{z, a''} \bigg{(} \phi_t(z, a'')^2 +  \phi_t(x', a')^2 \bigg{)}  g_{\bar\theta_t}(z, a'') p(z | x', a') \pi^{g_{\bar\theta_t}}(x', a') \notag \\
= & \frac{1}{2}  \sum_{z, a''}  \phi_t(z, a'')^2  \sum_{x', a'}   g_{\bar\theta_t}(z, a'') p(z | x', a') \pi^{g_{\bar\theta_t}}(x', a') + \frac{1}{2}  \sum_{x', a'} \phi_t(x', a')^2 \pi^{g_{\bar\theta_t}}(x', a')  \sum_{z, a''}    g_{\bar\theta_t}(z, a'') p(z | x', a') \notag \\
=&   \frac{1}{2}  \sum_{z, a''}  \phi_t(z, a'')^2 \pi^{g_{\bar\theta_t}}(z, a'')   + \frac{1}{2}  \sum_{x', a'} \phi_t(x', a')^2 \pi^{g_{\bar\theta_t}}(x', a') \notag \\
=&  \sum_{x', a'} \phi_t(x', a')^2 \pi^{g_{\bar\theta_t}}(x', a'). 
\end{aligned}
\end{equation}
where we have used Young's inequality, the fact that $\displaystyle \sum_{z, a''}    g_{\bar\theta_t}(z, a'') p(z | x', a')  = 1$ for each $(x', a')$, and $\displaystyle \sum_{x', a'}   g_{\bar\theta_t}(z, a'') p(z | x', a') \pi^{g_{\bar\theta_t}}(x', a')  = \pi^{g_{\bar\theta_t}}(z, a'')$. Therefore,
\begin{eqnarray}
\frac{d Y_t}{d t} \leq  - \alpha (1 - \gamma) \pi^{g_{\bar\theta_t}} \cdot \phi_t^2   +  \phi_t^{\top} \frac{dV^{g_{\bar\theta_t}}}{dt} ,
\label{Yeqn}
\end{eqnarray}
where $\phi_t^2$ is an element-wise square. 

By the limit ODEs in \eqref{limit equations} and the uniform boundedness in Lemma \ref{uniform critic bound}, we have for any $(x,a)$
\beq
\label{theta uniform bound}
\left| \frac{d\bar \theta_t}{dt}(x,a) \right| = \left| \zeta_t \sigma_\mu^{f_{\bar \theta_t}}(x,a)  \left[\bar Q_t(x,a) - \sum_{a'} \bar Q_t(x,a')f_{\bar\theta_t}(x,a')\right] \right| \le C\zeta_t
\eeq
For any state $x_0$, define $\partial_{x,a} V^{f_\theta}(x_0) = \frac{\partial V^{f_\theta}(x_0) }{\partial \theta(x,a)}$. Then, for the exploration policy \eqref{ActorwithExploration}, by the policy gradient theorem \eqref{policy gradient theorem} we have 
\bae
\label{gradient value}
\left| \partial_{x,a} V^{g_{\bar \theta_t}}(x_0) \right|&= \left| \sum_{x',a'} \sigma_{x_0}^{g_{\bar \theta_t}}(x',a') V^{g_{\bar \theta_t}}(x',a') \partial_{x,a} \log g_{\bar \theta_t}(x',a') \right| \\
&\le C \sum_{x', a'} \left| \partial_{x,a} \log g_{\bar \theta_t}(x',a') \right| \\
&= C (1-\eta_t) \sum_{x', a'} \frac{f_{\bar \theta_t}(x',a')}{g_{\bar \theta_t}(x',a')} \left| \partial_{x,a} \log f_{\bar \theta_t}(x',a') \right| \\
&\overset{(a)}{\le} C,
\eae
where step $(a)$ is by
\beq
\frac{f_{\bar\theta_t}(x',a')}{g_{\bar\theta_t}(x',a')} = \frac{f_{\bar\theta_t}(x',a')}{ \frac{\eta_t}{d_A} + (1 - \eta_t) \cdot f_{\bar\theta_t}(x', a')} \le C
\eeq 
and 
\beq
\left| \partial_{x,a} \log f_{\bar\theta_t}(x',a') \right| = \left| \mathbbm{1}_{\{ x' = x\}} \left[ \mathbbm{1}_{\{ a' = a\}} - f_{\bar\theta_t}(x', a) \right] \right| \le 2.
\eeq
The relationship between the value functions
\beq
\label{value relation}
V^{f_{\bar\theta_t}}(x_0, a_0) = r(x_0, a_0) + \gamma \sum_{x'} V^{f_{\bar\theta_t}}(x') p(x'| x_0, a_0), \quad \forall (x_0, a_0),
\eeq
can be combined with \eqref{gradient value} to derive 
\beq
\label{state action bound}
\left\|\nabla_{\theta} V^{g_{\bar\theta_t}}(x,a)\right\| \le C, \quad \forall (x, a).
\eeq
Combining \eqref{theta uniform bound} and \eqref{state action bound},
\beq
\label{zetaBound}
\left|\frac{dV^{g_{\bar\theta_t}}}{dt}(x,a)\right| = \left|\nabla_{\theta} V^{g_{\bar\theta_t}}(x,a) \cdot \frac{d\bar\theta_t}{dt}\right| \le \left\|\nabla_{\theta} V^{g_{\bar\theta_t}}(x,a)\right\| \cdot \left\| \frac{d\bar\theta_t}{dt} \right\| \le C \zeta_t,
\eeq
where $C>0$ is a constant independent with $T$.

Combining \eqref{Yeqn}, \eqref{zetaBound} and \eqref{uniform bound2}, we have 
\bae
\label{critic}
\frac{d Y_t}{d t} & \le - \alpha (1 - \gamma) \min_{x,a} \{ \pi^{g_{\bar\theta_t}}(x,a) \} Y_t+ C \phi_t^{\top} \zeta_t \\
&\le - \alpha C\eta^{n_0}_t (1 - \gamma) Y_t + C \phi_t^{\top} \zeta_t\\
&\le - C \eta^{n_0}_t Y_t + \frac{ \eta^{n_0}_t }{ \eta^{n_0}_t } \| \phi_t \| C\zeta_t \\
&\le - C \eta^{n_0}_t Y_t + \| \phi_t \|^2 \eta^{2n_0}_t + \frac{C\zeta_t^2}{\eta_t^{2n_0}} \\ 
&= - \eta^{n_0}_t ( C - 2\eta^{n_0}_t )Y_t + \frac{C\zeta_t}{\eta^{2n_0}_t} \zeta_t.
\eae
Since $\frac{\zeta_t}{\eta^{2n_0}_t} \to 0$ as $t \to \infty$, there exists $t_0 \ge 2$ such that $\forall t \ge t_0$
\beq
\frac{dY_t}{dt} \le -C \eta^{n_0}_t Y_t + \zeta_t \le -\frac{C}{\log^{2n_0} t} Y_t + \frac{1}{t},
\eeq
where the $C$ is a constant independent with $t$. Then, by Lemma \ref{comparison lemma}, there exists $t_1\ge t_0$ such that 
\beq
\label{pre critic convergence}
Y_t = O\left( \frac{1}{\log^{4} t} \right) = O\left( \eta^2_t \right). 
\eeq

By the policy gradient theorem \eqref{policy gradient theorem}, we have 
\beq
\frac{\partial V^f(x_0)}{\partial_{f(x,a)}} = V^{f}(x,a) \sigma_{x_0}^{f}(x).
\eeq
Thus, by the relationship \eqref{value relation},
\beq
\frac{\partial V^{f_{\bar\theta_t}}(x_0, a_0)}{\partial_{f(x,a)}} =  \gamma \sum_{x'} V^{f_{\bar\theta_t}}(x,a) \sigma_{x'}^{f_{\bar\theta_t}}(x) p(x'| x_0, a_0) \le C.
\eeq
Then, for any $(x, a) \in \bm{\mathcal{X}} \times \bm{\mathcal{A}}$, there exists $\widetilde{t} \in [0,1]$ such that 
\bae
\label{exploration decay}
\left|V^{g_{\bar\theta_t}}(x, a) - V^{f_{\bar\theta_t}}(x ,a)\right| = \left|\nabla_f V^{ \widetilde{t}f_{\bar\theta_t} + (1-\widetilde{t})g_{\bar\theta_t}}(x, a) \cdot \left[ g_{\bar\theta_t} - f_{\bar\theta_t} \right]\right| \le C\eta_t,
\eae
Finally, combining \eqref{pre critic convergence} and \eqref{exploration decay}, we obtain \eqref{critic convergence}.

\end{proof}

\subsection{Actor convergence}

\subsubsection{Convergence to stationary point} 

In order to prove global convergence, we first show that the actor converges to a stationary point. We introduce the following notation:
\bae
\widehat{\nabla}_\theta J(f_{\bar\theta_t}) &:= \sum_{x,a} \sigma_\mu^{f_{\bar\theta_t}}(x, a) \bar Q_t(x,a)\nabla_{\theta} \log f_{\bar \theta_t}(x,a), \\
\widehat{\partial}_{x,a} J(f_{\bar \theta_t})&:= \sum_{x,a} \sigma_\mu^{f_{\bar \theta_t}}(x, a) \bar Q_t(x,a)\partial_{x,a} \log f_{\bar \theta_t}(x,a).
\eae 

Then, the limit ode for $\theta$ in \eqref{limit equations} can be written as  
\beq
\label{gradient flow}
\frac{d\bar \theta_t}{dt} = \zeta_t \widehat{\nabla}_{\theta} J(f_{\bar \theta_t}).
\eeq
By direct calculations,
\bae
\nabla_{\theta} \log f_\theta(x,a) =& \nabla_\theta \left[\theta(x,a) - \log \sum_{a'} e^{\theta(x,a')}\right] \\
=& \nabla_\theta \theta(x,a) - \frac{\sum\limits_{a'} e^{\theta(x,a')} \nabla_\theta \theta(x,a')}{\sum\limits_{a'} e^{\theta(x,a')}} \\
=& \nabla_\theta \theta(x,a) - \sum_{a'} f_\theta(x,a') \nabla_\theta \theta(x,a') \\
=&\nabla_\theta \theta(x,a) - \e_{a' \sim f_\theta(x,\cdot)}[\nabla_\theta \theta(x,a')]\\
=& e_{x,a} - \sum_{a'} e_{x,a'} f_\theta(x, a'),
\eae	
where $e_{x,a}$ is the unit vector where only the $x,a$ element is $1$ and all other elements are $0$. Then, the difference is 
\bae
\nabla_\theta J(f_{\bar\theta_t}) - \widehat{\nabla}_{\theta} J(f_{\bar\theta_t}) &= \sum_{x,a} \sigma^{f_{\bar\theta_t}}(x, a) \left( \bar Q_t(x,a)- V^{f_{\bar\theta_t}}(x,a) \right) \nabla_{\theta} \log f_{\bar\theta_t}(x,a),\\
&= \sum_{x,a} \sigma^{f_{\bar\theta_t}}(x, a) \left( \bar Q_t(x,a)- V^{f_{\bar \theta_t}}(x,a) \right) \left( e_{x,a} - \sum_{a'} e_{x,a'} f_{\bar\theta_t}(x, a')\right),
\eae
which together with \eqref{critic convergence} derives
\beq
\label{critic error}
\|\nabla_\theta J(\bar \theta_t) - \widehat{\nabla}_{\theta} J(\bar\theta_t)\|_2 \le C  \|\bar Q_t- V^{f_{\bar\theta_t}}\|_2 \le C\eta_t.
\eeq
Thus we re-write the gradient flow \eqref{gradient flow} as  
\beq
\label{gradient flow fluctuation}
\frac{d\bar\theta_t}{dt} = \zeta_t\nabla_{\theta} J(f_{\bar\theta_t})+ \zeta_t \sum_{x,a} \sigma^{f_{\bar\theta_t}}(x,a)\left[ \left( \bar Q_t(x,a)- V^{f_{\bar\theta_t}}(x,a) \right) \cdot \nabla_{\theta} \log f_{\bar\theta_t}(x,a)\right].
\eeq

Now we can adapt the proof in \cite{bertsekas2000gradient} to show the gradient flow converges to a stationary point. We first provide a useful lemma.
\begin{lemma}
\label{convergence lemma}
Let $Y_t, W_t$ and $Z_t$ be three functions such that $W_t$ is nonnegative. Asuume that 
\beq
\label{iteration}
\frac{dY_t}{dt} \ge W_t +Z_t, \quad t\ge 0
\eeq
and that $\int_0^{\infty} Z_t dt $ converges. Then, either $Y_t \to \infty$ or else $Y_t$ converges to a finite value and $\int_0^{\infty} W_t dt < \infty$. 
\end{lemma}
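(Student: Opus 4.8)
The plan is to reduce the differential inequality to a statement about a \emph{monotone} auxiliary function, for which the desired dichotomy is immediate. First I would absorb the $Z_t$ term by setting $G_t := \int_0^t Z_s\,ds$ and $U_t := Y_t - G_t$. Since $\int_0^\infty Z_t\,dt$ converges, $G_t$ has a finite limit $G_\infty$ as $t\to\infty$ and, in particular, is bounded on $[0,\infty)$. Differentiating, $\frac{dU_t}{dt} = \frac{dY_t}{dt} - Z_t \ge W_t \ge 0$, so $U_t$ is nondecreasing on $[0,\infty)$.

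A nondecreasing function on $[0,\infty)$ has a limit in $(-\infty,+\infty]$, which produces exactly the two cases in the statement. If $U_t \to +\infty$, then since $G_t$ is bounded we have $Y_t = U_t + G_t \to +\infty$, giving the first alternative. If instead $U_t \to U_\infty$ for some finite $U_\infty$, then $Y_t = U_t + G_t \to U_\infty + G_\infty$, a finite limit, which establishes the first half of the second alternative.

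It remains, in the convergent case, to bound $\int_0^\infty W_t\,dt$. Integrating $\frac{dU_s}{ds} \ge W_s$ over $[0,t]$ gives $\int_0^t W_s\,ds \le U_t - U_0$. The left-hand side is nondecreasing in $t$ (because $W_s \ge 0$) and bounded above by $U_\infty - U_0 < \infty$; hence it converges, and $\int_0^\infty W_t\,dt \le U_\infty - U_0 < \infty$, completing the second alternative.

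The argument is essentially routine once the auxiliary function $U_t$ is identified, so there is no serious obstacle; the only points requiring care are the standing regularity of $Y_t$ (absolute continuity, so that the derivative inequality may be integrated) and the reading of ``$\int_0^\infty Z_t\,dt$ converges'' as the existence of a finite limit for $G_t$, which is precisely what makes $G_t$ bounded and lets any escape of $Y_t$ to $+\infty$ be attributed entirely to $U_t$. The one genuinely load-bearing step is the subtraction of $\int_0^t Z_s\,ds$, which converts the perturbed inequality into clean monotonicity; this lemma is the deterministic counterpart of the Robbins--Siegmund supermartingale convergence theorem, and the proof mirrors that structure.
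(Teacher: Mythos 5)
Your proof is correct and follows essentially the same route as the paper: the paper integrates $\frac{dY_t}{dt}\ge Z_t$ to get $Y_t \ge Y_{\bar t}+\int_{\bar t}^{t}Z_s\,ds$, which is precisely the monotonicity of your auxiliary function $U_t = Y_t - \int_0^t Z_s\,ds$, and then extracts the dichotomy via a $\liminf$/$\limsup$ comparison rather than by naming $U_t$ explicitly. The final bound $\int_0^\infty W_s\,ds \le \lim_{t\to\infty}Y_t - Y_0 - \int_0^\infty Z_s\,ds$ in the paper is identical to your $U_\infty - U_0$ estimate.
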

\begin{proof}
For any $\bar t > 0$. By integrating the relationship $\frac{dY_t}{dt} \ge Z_t$ from $\bar t$ to $t \ge \bar t$ and taking the limit inferior as $t \to \infty$, we obtain 
\beq
\liminf_{t \to \infty}Y_t \ge Y_{\bar t} + \int_{\bar t}^{\infty} Z_t dt > -\infty.
\eeq
By taking the limit superior of the right-hand side as $\bar t \to \infty$ and using the fact $\displaystyle \lim_{\bar t \to \infty} \int_{\bar t}^{\infty} Z_t dt =0$, we obtain 
\beq
\liminf_{t \to \infty}Y_t \ge \limsup_{\bar t \to \infty}Y_t > -\infty.
\eeq
This proves that either $Y_t \to \infty$ or $Y_t$ converges to a finite value. If $Y_t$ converges to a finite value, we can integrate the relationship \eqref{iteration} to show that 
\beq
\int_0^t W_s ds \le Y_t - Y_0 - \int_0^t Z_s ds, 
\eeq
which implies that $\int_0^\infty W_s ds \le \lim_{t\to \infty}Y_t - Y_0 - \int_0^\infty Z_s ds < \infty$.
\end{proof}
Next we can prove convergence to the stationary point under the learning rate \eqref{learning rates}.
\begin{theorem}
\label{gradient vanish}
Suppose the learning rate $\zeta_t$ satisfies \eqref{learning rates}. Then, for the gradient flow \eqref{gradient flow}, we have that $J(\bar \theta_t)$ converges to a finite value and 
\beq
\label{conv to stationary}
\lim_{t\to +\infty} \nabla_{\theta} J(f_{\bar \theta_t}) = 0.
\eeq
\end{theorem}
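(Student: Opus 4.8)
The plan is to first show that $J(f_{\bar\theta_t})$ increases up to a summable error along the flow, which yields both its convergence and the integrability of the weighted squared gradient, and then to rule out persistent oscillation of the gradient by a cycle-of-stopping-times argument built on the Lipschitz continuity of $\nabla_\theta J$.

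\textbf{Step 1 (convergence of $J$).} Starting from the gradient flow \eqref{gradient flow} and the chain rule, I would write $\frac{d}{dt}J(f_{\bar\theta_t}) = \zeta_t\, \nabla_\theta J(f_{\bar\theta_t})^{\top}\,\widehat\nabla_\theta J(f_{\bar\theta_t})$. Splitting $\widehat\nabla_\theta J = \nabla_\theta J + (\widehat\nabla_\theta J - \nabla_\theta J)$ and using the critic estimate \eqref{critic error} together with the boundedness of $\nabla_\theta J$ (which follows from Lemma \ref{softmax gradient}, since $\partial_{x,a}J = \sigma_\mu^{f}(x,a)A^{f}(x,a)$ with $\sigma_\mu^f \le (1-\gamma)^{-1}$ and a bounded advantage), I obtain $\frac{d}{dt}J(f_{\bar\theta_t}) \ge C\zeta_t\norm{\nabla_\theta J(f_{\bar\theta_t})}^2 - C\zeta_t\eta_t$. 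I then apply Lemma \ref{convergence lemma} with $Y_t = J(f_{\bar\theta_t})$, $W_t = C\zeta_t\norm{\nabla_\theta J(f_{\bar\theta_t})}^2 \ge 0$, and $Z_t = -C\zeta_t\eta_t$; since $\int_0^{\infty}\zeta_s\eta_s\,ds < \infty$ by \eqref{hyper property}, the integral of $Z_t$ converges. Because $J$ is bounded above (the reward lies in $[0,1]$ and $\gamma<1$ give $J \le (1-\gamma)^{-1}$), the alternative $Y_t \to \infty$ is excluded, so $J(f_{\bar\theta_t})$ converges to a finite value and $\int_0^{\infty}\zeta_t\norm{\nabla_\theta J(f_{\bar\theta_t})}^2\,dt < \infty$.

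\textbf{Step 2 (ruling out oscillation).} From $\int_0^{\infty}\zeta_t\norm{\nabla_\theta J}^2\,dt < \infty$ and $\int_0^{\infty}\zeta_t\,dt = \infty$ I conclude $\liminf_{t\to\infty}\norm{\nabla_\theta J(f_{\bar\theta_t})} = 0$, since a positive lower bound along any tail would force the weighted integral to diverge. To upgrade this to a full limit, suppose for contradiction $\limsup_{t\to\infty}\norm{\nabla_\theta J(f_{\bar\theta_t})} = 2\epsilon_1 > 0$. Then the gradient norm crosses the band $[\epsilon_1/2,\epsilon_1]$ infinitely often, producing the increasing sequences $\{a_n\},\{b_n\}$ and the stopping times $t_n$, $i(t_n)$ defined before the statement, with $\norm{\nabla_\theta J(f_{\bar\theta_s})} \ge \epsilon_1/2$ for all $s \in [t_n, i(t_n)]$ while $\norm{\nabla_\theta J(f_{\bar\theta_{t_n}})} = \epsilon_1/2$ and $\norm{\nabla_\theta J(f_{\bar\theta_{i(t_n)}})} = \epsilon_1$. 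On each such excursion I would invoke the $L$-Lipschitz continuity of $\theta \mapsto \nabla_\theta J(f_\theta)$ (with $L = 8/(1-\gamma)^3$, cited from \cite{mei2020global}) to bound the gradient increment by the parameter displacement:
\beq
\frac{\epsilon_1}{2} \le \norm{\nabla_\theta J(f_{\bar\theta_{i(t_n)}}) - \nabla_\theta J(f_{\bar\theta_{t_n}})} \le L\norm{\bar\theta_{i(t_n)} - \bar\theta_{t_n}} \le L\int_{t_n}^{i(t_n)} \zeta_s\norm{\widehat\nabla_\theta J(f_{\bar\theta_s})}\,ds.
\eeq

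Using $\norm{\widehat\nabla_\theta J} \le \norm{\nabla_\theta J} + C\eta_s$ and the elementary inequality $\norm{\nabla_\theta J} \le (2/\epsilon_1)\norm{\nabla_\theta J}^2$ valid on the excursion, the right-hand side is dominated by $(2L/\epsilon_1)\int_{t_n}^{\infty}\zeta_s\norm{\nabla_\theta J}^2\,ds + CL\int_{t_n}^{\infty}\zeta_s\eta_s\,ds$. Both tails vanish as $n\to\infty$ (since $t_n\to\infty$ and the full integrals converge by Step 1 and \eqref{hyper property}), so the right-hand side tends to $0$, contradicting the uniform lower bound $\epsilon_1/2$. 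Hence $\limsup\norm{\nabla_\theta J} = 0$, which is precisely \eqref{conv to stationary}.

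The hard part will be Step 2–3: the contradiction needs a \emph{uniform} lower bound on the gradient increment across each excursion and a \emph{simultaneously vanishing} upper bound on the weighted displacement. The vanishing upper bound depends on converting $\norm{\nabla_\theta J}$ into $\norm{\nabla_\theta J}^2$ via the lower bound $\epsilon_1/2$, which is only available on $[t_n,i(t_n)]$; this is exactly why $t_n$ must be the \emph{last} exit below $\epsilon_1/2$ (so the lower bound holds throughout the excursion) rather than the first entry, and it is the reason the stopping-time construction, not the energy estimate, is the delicate ingredient.
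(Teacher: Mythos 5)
Your proposal is correct and follows the paper's overall architecture: the same energy estimate $\frac{d}{dt}J(f_{\bar\theta_t}) \ge C\zeta_t\|\nabla_\theta J(f_{\bar\theta_t})\|^2 - C\zeta_t\eta_t$ fed into Lemma \ref{convergence lemma}, the same exclusion of $J\to\infty$ via boundedness of $J$, the same $\liminf$ argument, and the same cycle-of-stopping-times construction with the $L$-Lipschitz gradient from \cite{mei2020global}. Where you diverge is in how the final contradiction is extracted. The paper derives two incompatible facts about $\int_{t_n}^{i(t_n)}\zeta_s\,ds$: the Lipschitz displacement bound forces $\liminf_n \int_{t_n}^{i(t_n)}\zeta_s\,ds \ge \frac{1}{2L}$, while a second application of the energy inequality over the excursion, combined with the convergence of $J(f_{\bar\theta_t})$, forces $\int_{t_n}^{i(t_n)}\zeta_s\,ds \to 0$. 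You instead close the argument in one shot: on the excursion $\|\nabla_\theta J\|\ge \epsilon_1/2$, so $\|\nabla_\theta J\| \le (2/\epsilon_1)\|\nabla_\theta J\|^2$, and the displacement bound is dominated by the tails $\int_{t_n}^{\infty}\zeta_s\|\nabla_\theta J\|^2\,ds + \int_{t_n}^{\infty}\zeta_s\eta_s\,ds$, both of which vanish since the full integrals converge by Step 1 and \eqref{hyper property}. This is a genuine (if modest) streamlining: it avoids the second energy estimate over $[t_n,i(t_n)]$ and the appeal to convergence of $J$ along the subsequence, at the cost of using the lower bound $\epsilon_1/2$ on the excursion twice. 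Your closing remark correctly identifies the one delicate point on which both versions rest, namely that $t_n$ must be the last exit below $\epsilon_1/2$ so that the band estimate holds throughout $[t_n,i(t_n)]$.
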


\begin{proof}
First we note that by the proof of Lemma $7$ in \cite{mei2020global}, we know that the eigenvalues of the Hessian matrix of $J(f_\theta)$ are smaller than $L :=\frac{8}{(1-\gamma)^3}$ and thus $\nabla_\theta J(f_\theta)$ is $L$-Lipschitz continuous with respect to $\theta$.

Then, by the gradient flow \eqref{gradient flow}, \eqref{critic error}, and chain rule, we can show that
\bae
\label{key1}
\frac{dJ(f_{\bar \theta_t})}{dt} &= \zeta_t \nabla_\theta J(f_{\bar\theta_t}) \widehat{\nabla}_{\theta} J(f_{\bar\theta_t})  \\
&= \zeta_t \|\nabla_\theta J(f_{\bar\theta_t})\|^2 + \zeta_t \nabla_\theta J(f_{\bar\theta_t}) \left( \widehat{\nabla}_{\theta} J(f_{\bar\theta_t}) - \nabla_\theta J(f_{\bar\theta_t}) \right) \\
&\ge \zeta_t\|\nabla_\theta J(f_{\bar\theta_t})\|^2 - C \zeta_t\|\nabla_\theta J(f_{\bar\theta_t})\| \cdot \|Q_t(\cdot, \cdot)- V^{f_{\bar\theta_t}}(\cdot, \cdot)\|_2\\
&\overset{(a)}{\ge} \zeta_t\|\nabla_\theta J(f_{\bar\theta_t})\|^2 - C \zeta_t\eta_t\|\nabla_\theta J(f_{\bar\theta_t})\|\\
&\overset{(b)}{\ge} (\zeta_t- C\zeta_t\eta_t)\|\nabla_\theta J(f_{\bar\theta_t})\|^2 - C\zeta_t\eta_t\\
&\overset{(c)}{\ge} C\zeta_t\|\nabla_\theta J(f_{\bar\theta_t})\|^2 - C\zeta_t\eta_t.
\eae
where the step $(a)$ follows \eqref{critic error}. Step $(b)$ is by using the relationship $\|\nabla_\theta J(f_{\bar\theta_t})\| \le 1+ \|\nabla_\theta J(f_{\bar\theta_t})\|_2^2$ and step $(c)$ is because $\eta_t \to 0$ and $C_1,C_2$ are some sufficiently large enough constants. Then, by Lemma \ref{convergence lemma} and the assumption in \eqref{learning rates}, we can show that either $J(f_{\bar\theta_t})\to \infty$ or $J(f_{\bar\theta_t})$ converges to a finite value and 
\beq
\label{key bound}
\int_0^{+\infty} \zeta_t\|\nabla_\theta J(f_{\bar\theta_t})\|^2dt < \infty.
\eeq 	
\textcolor{black}{Note that $J(f_\theta) = \e_{f_\theta}\left[\sum_{k=0}^{+\infty} \gamma^k r(x_k, a_k)\right]$. Therefore, the objective function $J$ is bounded by Assumption \ref{MDP} and thus we know $J(\bar\theta_t)$ converges to a finite value and \eqref{key bound} is valid.}

If there existed an $\epsilon_0 >0$ and $\bar t >0$ such that $\|\nabla_\theta J(f_{\bar\theta_t})\| \ge \epsilon_0$ for all $t \ge \bar t$, we would have  
\beq
\int_{\bar t}^{+\infty} \zeta_t\|\nabla_\theta J(f_{\bar\theta_t})\|^2dt \ge \epsilon_0^2 \int_{\bar t}^{+\infty} \zeta_tdt = \infty,
\eeq
which contradicts \eqref{key bound}. Therefore, $\displaystyle \liminf_{t\to \infty}\|\nabla_\theta J(f_{\bar\theta_t})\| = 0$. To show that  $\displaystyle \lim_{t\to \infty}\|\nabla_\theta J(f_{\bar\theta_t})\| = 0$, assume the contrary; that is $\displaystyle \limsup_{t\to \infty}\|\nabla_\theta J(f_{\bar\theta_t})\| > 0$. Then we can find a constant $\epsilon_1>0$ and two increasing sequences $\{a_n\}_{n\ge 1}, \{b_n\}_{n\ge 1}$ such that 
\bae
a_1 <b_1 <a_2 <b_2 <a_3 <b_3 < \cdots,\\
\|\nabla_\theta J(f_{\bar\theta_{a_n}})\| < \frac{\epsilon_1}{2},\quad \|\nabla_\theta J(f_{\bar\theta_{b_n}})\| > \epsilon_1.
\eae 
Define the following cycle of stopping times:
\bae
t_n &:= \sup\{s | s\in (a_n, b_n), \|\nabla_\theta J(f_{\bar\theta_s})\| < \frac{\epsilon_1}{2} \},\\
i(t_n) &:= \inf\{s | s\in (t_n, b_n), \|\nabla_\theta J(f_{\bar\theta_s})\| > \epsilon_1 \}.
\eae
Note that $ \|\nabla_\theta J(f_{\bar\theta_t})\| $ is continuous against $t$, thus 
we have 
\bae
\label{property}
&a_n \le t_n < i(t_n) \le b_n \\
&\|\nabla_\theta J(f_{\bar\theta_{t_n}})\| = \frac{\epsilon_1}{2}, \quad \|\nabla_\theta J(f_{\bar\theta_{i(t_n)}})\| =\epsilon_1\\
&\frac{\epsilon_1}{2} \le \|\nabla_\theta J(f_{\bar\theta_s})\| \le \epsilon_1, \quad s\in(t_n, i(t_n)).
\eae
Then, by the $L$-Lipschitz property of the gradient, we have for any $t_n$
\bae
\frac{\epsilon_1}{2} &=  \|\nabla_\theta J(f_{\bar\theta_{i(t_n)}})\| - \|\nabla_\theta J(f_{\bar\theta_{t_n}})\|\\
&\le \| \nabla_\theta J(f_{\bar\theta_{i(t_n)}}) - \nabla_\theta J(f_{\bar\theta_{t_n}}) \|\\
&\le L\| \bar\theta_{i(t_n)} - \bar\theta_{t_n} \|\\
&\le L \int_{t_n}^{i(t_n)} \zeta_s\| \nabla_\theta J(f_{\bar\theta_s})\| ds + L\int_{t_n}^{i(t_n)} \zeta_s \|\widehat{\nabla}_{\theta} J(f_{\bar\theta_s}) - \nabla_\theta J(f_{\bar\theta_s})\|ds \\
&\le L \epsilon_1 \int_{t_n}^{i(t_n)} \zeta_s ds + CL\int_{t_n}^{i(t_n)} \zeta_s \eta_s ds.
\eae
From this and by \eqref{hyper property} it follows that 
\beq
\label{key2}
\frac{1}{2L} \le \liminf_{n\to \infty} \int_{t_n}^{i(t_n)} \zeta_s ds.
\eeq

Using \eqref{key1} and \eqref{property}, we see that 
\beq
J(f_{\bar\theta_{i(t_n)}}) - J(f_{\bar\theta_{t_n}}) \ge C_1 (\frac{\epsilon_1}{2})^2 \int_{t_n}^{i(t_n)} \zeta_s ds - C_2\int_{t_n}^{i(t_n)} \zeta_s\eta_s ds.
\eeq
Due to the convergence of $J(f_{\theta_{t_n}})$ and the assumption of the learning rate, this implies that 
\beq
\lim_{n\to \infty} \int_{t_n}^{i(t_n)} \zeta_s ds = 0,
\eeq
which contradicts \eqref{key2} and thus the convergence to the stationary point is proven. 
\end{proof}

\subsubsection{Global convergence} 

We now prove the global convergence rate \eqref{actor convergence} for the actor dynamic using the following steps:
\begin{itemize}
\item Derive non-uniform Łojasiewicz inequalities.
\item Adapt the method in \cite{agarwal2020optimality} to obtain the global convergence.
\item Set up the uniform Łojasiewicz inequalities and the ODE for actor convergence.
\item Analyse the ODE by a comparison lemma to get the convergence rate.
\end{itemize}

Since the objective function $J(f_\theta)$ is non-concave, the convergence to a stationary point in Theorem \ref{gradient vanish} does not guarantee global convergence to the optimal policy. As a first step, we establish the following non-uniform Łojasiewicz inequalities that show that the gradient of the objective function for any parameter value dominates the sub-optimality of the parameter. Actually, \eqref{gradient control1} is used for the case that the best action at any state $x$ is unique, while \eqref{gradient control2} is for the non-unique optimal action case.

\begin{lemma}[Non-uniform Łojasiewicz Bound]
\label{gradient control}
Choose any deterministic optimal policy $f^{*}$.
\begin{itemize}
\item Suppose for any state $\forall x \in \bm{\mathcal{X}}$, there exists unique optimal action, then we have
\beq
\label{gradient control1}
\left\| \nabla_\theta J(f_\theta) \right\| \geq \frac{1}{\sqrt{ \bm{|\mathcal{X}|}} } \cdot\left\|\frac{\nu_{\mu}^{f^*}}{\nu_{\mu}^{f_\theta}}\right\|_{\infty}^{-1} \cdot \min_{x} f_{\theta}\left(x, a^{*}(x) \right) \cdot\left[ J(f^*) - J(f_\theta)\right]
\eeq
where $a^{*}(x)=\displaystyle \arg \max _{a} V^{f^{*}}(x,a), \forall x \in \bm{\mathcal{X}}$.
\item When under some state $x \in \bm{\mathcal{X}}$, there is an ``optimal action set'':  
\beq
\label{best actions}
\bm{\mathcal{A}^*}(x) :=\left\{a^*(x) \in \bm{\mathcal{A}}: V^{f^*}(x, a^*(x))= \displaystyle \max_{a} V^{f^*}(x, a)\right\},
\eeq
i.e. all actions $a^*(x) \in \bm{\mathcal{A}^*}(x)$ are the greedy action w.r.t. the optimal state-action value functin $V^{f^*}$. Given any policy $f_\theta$, construct the following optimal policy 
\beq
\label{new optimal policy}
f_{\theta}^{*}(x, a)= \begin{cases}\frac{f_{\theta}(x,a)}{\sum\limits_{a^{\prime} \in \bm{\mathcal{A}}^{*}(x)} f_{\theta}\left(x, a^{\prime}\right)}, & \text { if } a \in \bm{\mathcal{A}}^{*}(x), \\ 0, & \text { otherwise }\end{cases}
\eeq
It is obvious that $f_{\theta}^{*}$ is an optimal policy, since for all $x \in \mathcal{X}$,
$$
\sum_{a \in \bm{\mathcal{A}}^{*}(x)} f_{\theta}^{*}(x, a)=\frac{\sum\limits_{a \in \bm{\mathcal{A}}^{*}(x)} f_{\theta}(x, a)}{\sum\limits_{a^{\prime} \in \bm{\mathcal{A}}^{*}(x)} f_{\theta}\left(x, a^{\prime} \right)}=1 .
$$
Now we have 
\beq
\label{gradient control2}
\left\| \nabla_\theta J(f_\theta) \right\| \geq \frac{1}{\sqrt{ |\bm{\mathcal{X}}| |\bm{\mathcal{A}|}} } \cdot \left\|\frac{\nu_{\mu}^{f^*_\theta}}{\nu_{\mu}^{f_\theta}}\right\|_{\infty}^{-1} \cdot\left[\min_{x} \sum_{ a^*(x) \in \bm{\mathcal{A}}^*(x)} f_{\theta}(x, a^*(x))\right] \cdot \left[ J(f^*) - J(f_\theta)\right].
\eeq
\end{itemize}
\end{lemma}

\begin{remark}
As the proof of Lemma \ref{gradient control} is similar as in \cite{mei2020global}, we move the detailed proof into Appendix \ref{appendix gradient}. 
\end{remark}

Lemma \ref{gradient control} is not sufficient to prove a global convergence rate (or even global convergence). For example, the term $\min\limits_{x\in \bm{\mathcal{X}}} f_{\bar \theta_t}\left(x, a^{*}(x) \right)$ in \eqref{gradient control1} could converge to zero as $t \rightarrow \infty$. Thus to obtain \eqref{actor convergence}, we follow the steps.
\begin{itemize}
\item [(\romannumeral1)] Prove the global convergence
\beq
\label{actor}
J(f^*) - J(f_{\bar \theta_t}) \to 0, \quad t \to \infty,
\eeq
This global convergence can be proven by adapting the method in \cite{agarwal2020optimality} to the setting in our paper. 
\item [(\romannumeral2)] Due to the convergence \eqref{actor}, if for each state $x$ the best action $a^*(x)$ is unique, we will have 
\beq
\lim\limits_{t\to \infty} f_{\bar\theta_t}\left(x, a^{*}(x)\right) = 1, \quad \forall x\in \bm{\mathcal{X}}
\eeq
and thus 
\beq
\inf\limits_{x\in \mathcal{X}, t\ge 0} f_{\bar\theta_t}\left(x, a^{*}(x) \right) >0.
\eeq
If for some state $x$, the best action is not unique, then the convergence \eqref{actor} implies that
\beq
\lim\limits_{t \to \infty}\sum_{ a^*(x) \in \bm{\mathcal{A}}^{*}(x)} f_{\bar\theta_t}(x, a^*(x)) = 1, \quad \forall x \in \bm{\mathcal{X}}
\eeq
and thus
\beq
\inf\limits_{x\in \mathcal{X}, t\ge 0} \sum_{ a^*(x) \in \bm{\mathcal{A}}^{*}(x)} f_{\bar\theta_t}(x, a^*(x)) > 0.
\eeq 
\item [(\romannumeral3)] The lower bound for $\min\limits_{x} f_{\bar\theta_t}\left(x, a^{*}(x) \right),\ \min\limits_{x} \sum\limits_{ a^*(x) \in \bm{\mathcal{A}}^{*}(x)} f_{\bar\theta_t}(x, a^*(x)) $ and \eqref{gradient control1}, \eqref{gradient control2} can be used to derive the uniform Łojasiewicz inequality for MDP with unique or non-unique optimal action. By analysing the gradient flow, we can prove the convergence rate \eqref{actor convergence}. 
\end{itemize}

Now we adapt the method in \cite{agarwal2020optimality} to obtain the global convergence \eqref{actor}. For the gradient flow
\beq
\label{gradient update}
\frac{d\bar\theta_t}{dt} = \zeta_t \widehat\nabla_{\theta} J(f_{\bar\theta_t}),
\eeq
where $\widehat{\nabla}_\theta J(\bar\theta_t):= \sum\limits_{x,a} \sigma_\mu^{f_{\bar\theta_t}}(x, a) \bar Q_t(x,a)\nabla_{\theta} \log f_{\bar\theta_t}(x,a)$, with the similar calculations in \eqref{element}, it can be shown that
\bae
\label{gradient update element}
\frac{d}{dt}\bar\theta_t(x,a) &= \zeta_t \widehat\partial_{x,a} J(f_{\bar\theta_t}) \\
&= \zeta_t \sum_{x', a'} \nu_\mu^{f_\theta}(x')f_{\bar\theta_t}(x', a') \mathbbm{1}_{\{ x' = x\}} \left[ \mathbbm{1}_{\{ a' = a\}} - f_{\bar\theta_t}(x', a) \right] \bar Q_t(x', a') \\
&= \zeta_t \sum_{a'} \nu_\mu^{f_{\bar \theta_t}}(x)f_{\bar \theta_t}(x, a')  \left[ \mathbbm{1}_{\{ a' = a\}} - f_{\bar \theta_t}(x, a) \right] \bar Q_t(x, a') \\
&= \zeta_t \nu_\mu^{f_{\bar\theta_t}}(x)f_{\bar \theta_t}(x, a) \bar Q_t(x, a) - \zeta_t \nu_\mu^{f_{\bar\theta_t}}(x)f_{\bar\theta_t}(x, a)  \left[ \sum_{a'}f_{\bar\theta_t}(x, a') \bar Q_t(x, a') \right]\\
&= \zeta_t \sigma_\mu^{f_{\bar\theta_t}}(x,a)  \left[\bar Q_t(x,a) - \sum_{a'} \bar Q_t(x,a')f_{\bar\theta_t}(x,a')\right].
\eae

The following lemma is important in our proof.
\begin{lemma}[The performance difference lemma (\cite{kakade2002approximately})]
\label{difference}
For all policies $f$, $f'$ and state $x_0$, 
\beq
V^{f}(x_0) - V^{f'}(x_0) =  \sum_{x,a}\sigma_{x_0}^{f}(x,a) A^{f'}(x,a),
\eeq
where $\sigma_{x_0}^{f}$ is the visiting measure for the MDP $\bm{\mathcal{M}}$ with initial distribution $\delta_{x_0}$ and policy $f$.
\end{lemma}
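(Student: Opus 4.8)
The plan is to prove the identity by evaluating the right-hand side directly along a trajectory generated by the policy $f$ started from $x_0$, and then recognizing a telescoping structure. First I would unfold the definition of the visiting measure $\sigma_{x_0}^{f}(x,a) = \sum_{k=0}^\infty \gamma^k \prob(x_k = x, a_k = a \mid x_0)$ to rewrite the right-hand side as an expectation over the $f$-trajectory:
$$
\sum_{x,a}\sigma_{x_0}^{f}(x,a)\, A^{f'}(x,a) = \e\left[\sum_{k=0}^\infty \gamma^k A^{f'}(x_k,a_k)\right],
$$
where $x_0$ is fixed, $a_k \sim f(x_k,\cdot)$, and $x_{k+1}\sim p(\cdot\mid x_k,a_k)$. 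The interchange of the finite sum over $(x,a)$ with the expectation and the infinite sum over $k$ is justified by boundedness of the advantage (Assumption \ref{MDP}) together with the geometric factor $\gamma^k$ and dominated convergence.

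Next I would rewrite the advantage using the one-step relation for the action-value function, namely
$$
A^{f'}(x_k,a_k) = V^{f'}(x_k,a_k) - V^{f'}(x_k) = r(x_k,a_k) + \gamma\, \e[V^{f'}(x_{k+1})\mid x_k,a_k] - V^{f'}(x_k).
$$
Substituting this and splitting the expectation gives
$$
\e\left[\sum_{k=0}^\infty \gamma^k r(x_k,a_k)\right] + \e\left[\sum_{k=0}^\infty \left(\gamma^{k+1} V^{f'}(x_{k+1}) - \gamma^k V^{f'}(x_k)\right)\right].
$$
The first term is exactly $V^{f}(x_0)$ by the definition in \eqref{value function}. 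The second is a telescoping series whose partial sums equal $\gamma^{n} V^{f'}(x_n) - V^{f'}(x_0)$; since $V^{f'}$ is bounded (again by Assumption \ref{MDP}) and $\gamma<1$, the tail $\gamma^n V^{f'}(x_n)\to 0$, so this term contributes $-V^{f'}(x_0)$ in the limit.

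Combining the two contributions yields $V^{f}(x_0) - V^{f'}(x_0)$, which is precisely the claim. The only points demanding care are the absolute convergence needed to reorder the double summation and to split the series, and the vanishing of the telescoping tail $\gamma^n V^{f'}(x_n)$; both are controlled by the bounded rewards and the discount factor $\gamma<1$. I do not expect a genuine obstacle here, since this is the classical performance-difference identity of \cite{kakade2002approximately}: the substantive content is simply bookkeeping the telescoping sum correctly, and the analytic subtleties are all dispatched by Assumption \ref{MDP}.
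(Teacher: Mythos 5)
Your argument is correct: unfolding $\sigma_{x_0}^{f}$ into an expectation over the $f$-trajectory, substituting the one-step Bellman relation $V^{f'}(x,a)=r(x,a)+\gamma\,\e[V^{f'}(x_{k+1})\mid x_k,a_k]$ (valid because the transition kernel $p$ is policy-independent, so the inner conditional expectation agrees with the trajectory's own next state), and telescoping the $V^{f'}$ terms is exactly the classical Kakade--Langford proof. The paper itself supplies no proof of this lemma, deferring entirely to the citation, so your write-up is in effect the same argument the paper is relying on; the convergence and interchange issues you flag are indeed the only points needing care, and they are dispatched by the bounded rewards and $\gamma\in(0,1)$ as you say.
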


We first prove the following convergence lemma for value functions  $V^{f_{\bar\theta_t}}(x)$ and $V^{f_{\bar\theta_t}}(x,a)$.
\begin{lemma}
\label{convergence}
There exists value $V^{\infty}(x)$ and $V^\infty(x,a)$ for every state $x$ and action $a$ such that 
$$
\lim\limits_{t \to \infty} V^{f_{\bar\theta_t}}(x) = V^{\infty}(x), \quad \lim\limits_{t \to \infty} V^{f_{\bar\theta_t}}(x,a) = V^{\infty}(x,a).
$$ Then, by the critic convergence \eqref{critic convergence}, we immediately have when $t \to \infty$
\bae
&\bar Q_t(x,a) \to V^{\infty}(x,a)\\
&\bar Q_t(x) := \sum_a \bar Q_t(x,a) f_{\bar\theta_t}(x,a) \to V^{\infty}(x). 
\eae
Define 
\beq
\label{Delta}
\Delta = \min_{ \{x,a| A^\infty(x,a) \ne 0\} } |A^\infty(x,a)|,
\eeq
where $A^\infty(x,a) = V^\infty(x,a) - V^\infty(x)$. Then there exists a $T_0$ such that $\forall t > T_0, (x, a) \in \bm{\mathcal{X}} \times \bm{\mathcal{A}}$, we have
\beq
\label{conv}
V^{\infty}(x,a) - \frac{\Delta}{4} \le 	Q_t(x,a) \le V^{\infty}(x,a) + \frac{\Delta}{4}.
\eeq
\end{lemma}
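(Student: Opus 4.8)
The plan is to reduce the whole statement to one nontrivial fact: that the scalar trajectory $V^{f_{\bar\theta_t}}(x)$ converges as $t\to\infty$ for each fixed state $x$. Everything else then follows with little effort. Convergence of $V^{f_{\bar\theta_t}}(x,a)$ is automatic from the relation $V^{f_{\bar\theta_t}}(x,a)=r(x,a)+\gamma\sum_{x'}V^{f_{\bar\theta_t}}(x')p(x'|x,a)$ in \eqref{value relation}, which is linear in the state values; passing to the limit defines $V^\infty(x,a)$. The convergence $\bar Q_t(x,a)\to V^\infty(x,a)$ then follows by adding and subtracting $V^{f_{\bar\theta_t}}(x,a)$ and invoking the critic bound \eqref{critic convergence}, and $\bar Q_t(x)\to V^\infty(x)$ follows the same way after writing $\bar Q_t(x)-V^{f_{\bar\theta_t}}(x)=\sum_a[\bar Q_t(x,a)-V^{f_{\bar\theta_t}}(x,a)]f_{\bar\theta_t}(x,a)$, whose magnitude is at most $\|\bar Q_t-V^{f_{\bar\theta_t}}\|=O(\eta_t)$, and using $V^{f_{\bar\theta_t}}(x)=\sum_a V^{f_{\bar\theta_t}}(x,a)f_{\bar\theta_t}(x,a)\to V^\infty(x)$.

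To prove convergence of $V^{f_{\bar\theta_t}}(x)$, I would differentiate along the flow \eqref{gradient flow}. Applying the policy gradient theorem with start distribution $\delta_x$ gives $\partial_{x',a'}V^{f_{\bar\theta_t}}(x)=\nu_x^{f_{\bar\theta_t}}(x')f_{\bar\theta_t}(x',a')A^{f_{\bar\theta_t}}(x',a')$, exactly as in Lemma \ref{softmax gradient} with $\mu$ replaced by $\delta_x$. Substituting the drift $\frac{d}{dt}\bar\theta_t(x',a')=\zeta_t\sigma_\mu^{f_{\bar\theta_t}}(x',a')[\bar Q_t(x',a')-\bar Q_t(x')]$ from \eqref{gradient update element} and using the chain rule yields
\[
\frac{d}{dt}V^{f_{\bar\theta_t}}(x)=\zeta_t\sum_{x',a'}\nu_x^{f_{\bar\theta_t}}(x')\,\nu_\mu^{f_{\bar\theta_t}}(x')\,f_{\bar\theta_t}(x',a')^2\,A^{f_{\bar\theta_t}}(x',a')\,\widehat A_t(x',a'),
\]
where $\widehat A_t(x',a'):=\bar Q_t(x',a')-\bar Q_t(x')$ is the critic advantage. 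If the critic were exact ($\widehat A_t=A^{f_{\bar\theta_t}}$) this derivative would be nonnegative, recovering the monotonicity exploited in \cite{agarwal2020optimality}. Here I would instead bound $|\widehat A_t-A^{f_{\bar\theta_t}}|\le 2\|\bar Q_t-V^{f_{\bar\theta_t}}\|\le C\eta_t$ through \eqref{critic convergence}, so that $A^{f_{\bar\theta_t}}\widehat A_t\ge (A^{f_{\bar\theta_t}})^2-C\eta_t|A^{f_{\bar\theta_t}}|\ge -C'\eta_t$, the advantages being bounded by Lemma \ref{uniform critic bound}. Since the remaining factors are probabilities and $\zeta_t$ is bounded, this produces the one-sided estimate $\frac{d}{dt}V^{f_{\bar\theta_t}}(x)\ge -C\zeta_t\eta_t$.

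The convergence then follows from a monotonicity argument: the map $t\mapsto V^{f_{\bar\theta_t}}(x)+C\int_0^t\zeta_s\eta_s\,ds$ is nondecreasing and, by Lemma \ref{uniform critic bound} together with $\int_0^\infty\zeta_s\eta_s\,ds<\infty$ from \eqref{hyper property}, bounded above, hence convergent; subtracting the convergent integral gives $V^{f_{\bar\theta_t}}(x)\to V^\infty(x)$. Finally, for \eqref{conv}, since $\bm{\mathcal{X}}\times\bm{\mathcal{A}}$ is finite the convergence $\bar Q_t(x,a)\to V^\infty(x,a)$ is uniform over $(x,a)$, so choosing $T_0$ associated with tolerance $\Delta/4$, with $\Delta>0$ as in \eqref{Delta} (the claim being vacuous if no nonzero $A^\infty(x,a)$ exists), gives the two-sided bound simultaneously for all $t>T_0$.

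The main obstacle is establishing the one-sided monotonicity estimate robustly in the presence of the critic error: the clean nonnegativity of $\frac{d}{dt}V^{f_{\bar\theta_t}}(x)$ from \cite{agarwal2020optimality} is lost, and one must verify that the perturbation it is replaced by is \emph{exactly} of the integrable type $\zeta_t\eta_t$ controlled by \eqref{hyper property}, rather than a merely $o(1)$ error that could destroy convergence. Pinning down the sign and the $\eta_t$ rate in the product $A^{f_{\bar\theta_t}}\widehat A_t$ is the crux of the argument.
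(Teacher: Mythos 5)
Your proposal is correct and follows essentially the same route as the paper: both differentiate $V^{f_{\bar\theta_t}}(x_0)$ along the flow via the policy gradient theorem with start distribution $\delta_{x_0}$, decompose the critic advantage as the true advantage plus an $O(\eta_t)$ error from \eqref{critic convergence}, obtain the one-sided bound $\frac{d}{dt}V^{f_{\bar\theta_t}}(x_0)\ge \zeta_t\sum_{x,a}\sigma_{x_0}^{f_{\bar\theta_t}}(x,a)\sigma_\mu^{f_{\bar\theta_t}}(x,a)(A^{f_{\bar\theta_t}}(x,a))^2 - C\zeta_t\eta_t$, and conclude convergence from boundedness of the value function together with $\int_0^\infty\zeta_s\eta_s\,ds<\infty$ (your explicit monotone-compensator argument is exactly the content of the paper's Lemma \ref{convergence lemma}). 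The remaining steps --- passing through the Bellman relation \eqref{relationship} for $V^{f_{\bar\theta_t}}(x,a)$, invoking \eqref{critic convergence} for $\bar Q_t$, and using finiteness of $\bm{\mathcal{X}}\times\bm{\mathcal{A}}$ for \eqref{conv} --- also match the paper.
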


\begin{remark}
\label{remark}
Here we can suppose that $\Delta >0$ because if $\Delta = 0$, then we have  for any states and actions $A^\infty(x,a) = 0$. By Lemma \ref{difference},
\bae
\label{global}
&\lim_{t \to \infty} [J(f^*) - J(f_{\bar\theta_t})] \\
=& \lim_{t \to \infty} \sum_{x_0} \mu(x_0) \left[V^{f^*}(x_0) - V^{f_{\bar\theta_t}}(x_0)\right] \\
=& \lim_{t \to \infty} \sum_{x_0} \mu(x_0) \left[ \sum_{x,a}\sigma_{x_0}^{f^*}(x,a) \left[V^{f_{\bar\theta_t}}(x,a) - V^{f_{\bar\theta_t}}(x)\right] \right]\\
=& \lim_{t \to \infty} \sum_{x,a}\sigma_\mu^{f^*}(x,a) A^{f_{\bar\theta_t}}(x,a)\\
=& 0,
\eae
which immediately concludes the global convergence.
\end{remark}

\begin{proof}
For any fixed state $x_0$, treat the state value $V^{f_{\theta}}(x_0)$ as the objective function for an MDP whose initial distribution is $\delta_{x_0}$ and, by the policy gradient theorem \eqref{policy gradient theorem}, we have
\beq
\nabla_\theta V^{f_{\bar\theta_t}}(x_0) = \sum_{x,a} \sigma_{x_0}^{f_{\bar\theta_t}}(x,a) V^{f_{\bar\theta_t}}(x,a) \nabla_{\theta} \log f_{\bar\theta_t}(x,a),
\eeq
where $\sigma_{x_0}^{f_{\bar\theta_t}}(x,a)$ denotes the visiting measure of the MDP starting from $x_0$ under the policy $f_{\bar\theta_t}$. Thus, using the same calculations as in \eqref{policy gradient}, we have
\beq
\label{policy gradient 1}
\frac{\partial}{\partial_{\theta(x,a)}} V^{f_{\bar\theta_t}}(x_0) = \sigma_{x_0}^{f_{\bar\theta_t}}(x,a)A^{f_{\bar\theta_t}}(x,a)
\eeq
Let $\beta_t(x,a) = \bar Q_t(x,a) - V^{f_{\bar\theta_t}}(x,a)$ denote the critic error. Due to \eqref{critic convergence}, we know that for any state-action pair $(x,a)$, $|\beta_t(x,a)| \le C\eta_t$. Combining \eqref{gradient update element} with \eqref{policy gradient 1} and using the chain rule, we have 
\bae
\frac{d}{dt} V^{f_{\bar\theta_t}}(x_0) &= \nabla_\theta V^{f_{\bar\theta_t}}(x_0) \cdot \frac{d}{dt}\bar\theta_t \\
&= \sum_{x,a} \frac{\partial}{\partial_{\theta(x,a)}} V^{f_{\bar\theta_t}}(x_0) \frac{d}{dt}\bar\theta_t(x,a) \\
&=\zeta_t\sum_{x,a} \sigma_{x_0}^{f_{\bar\theta_t}}(x,a)  A^{f_{\bar\theta_t}}(x,a) \sigma_\mu^{f_{\bar\theta_t}}(x,a) \left[\bar Q_t(x,a) - \sum_{a'} \bar Q_t(x,a')f_{\bar\theta_t}(x,a')\right]\\
&=\zeta_t \sum_{x,a} \sigma_{x_0}^{f_{\bar\theta_t}}(x,a)  A^{f_{\bar\theta_t}}(x,a) \sigma_\mu^{f_{\bar\theta_t}}(x,a) \left[\beta_t(x,a) - \sum_{a'} \beta_t(x,a')f_{\bar\theta_t}(x,a') + A^{f_{\bar\theta_t}}(x,a)\right]\\
&\ge  \zeta_t \sum_{x,a} \sigma_{x_0}^{f_{\bar\theta_t}}(x,a) \sigma_\mu^{f_{\bar\theta_t}}(x,a) (A^{f_{\bar\theta_t}}(x,a))^2 -C\zeta_t\eta_t,
\eae
where the last inequality follows from \eqref{critic convergence}. Thus, by Lemma \ref{convergence lemma} and the boundedness of the value functions, we obtain the convergence for the state value function. Then, due to 
\beq
\label{relationship}
V^{f_{\bar\theta_t}}(x, a) = r(x, a) + \gamma \sum_{x'} V^{f_{\bar\theta_t}}(x')p(x'| x,a),
\eeq
the convergence for the state action value function is concluded. The convergence for $Q_t$ is immediately follows from the critic convergence \eqref{critic}. Combining the convergence for value functions, $\Delta > 0$, and the finiteness of the action space, we obtain \eqref{conv}. 
\end{proof}

Next, partition the action space $\mathcal{A}$ into three sets according to the value $V^\infty(x)$ and $V^\infty(x,a)$,
\bae
\label{type}
I_0^x \ &:= \{a| V^\infty(x,a) = V^\infty(x) \}\\
I_+^x \ &:= \{a| V^\infty(x,a) > V^\infty(x) \}\\
I_-^x \ &:= \{a| V^\infty(x,a) < V^\infty(x) \}.
\eae
The following steps can be used to prove the global convergence \eqref{actor}.
\begin{itemize}
\item Show that the probabilities 
$$
\lim\limits_{t \to \infty} f_{\bar\theta_t}(x,a) = 0, \quad 	\forall a\in I_+^x \cup I_-^x.
$$
\item Show that for actions $a \in I_-^x$, $\lim_{t \to \infty} \bar\theta_t(x,a) = -\infty$ and, for all actions $a\in I_+^x$, $\bar\theta_t(x,a)$ is bounded below as $t \to \infty$.
\item Prove that the set $I_+^x$ is empty by contradiction for all states $x$ and conclude the global convergence \eqref{actor}. 
\end{itemize}

\begin{lemma}
\label{advantage}
Define the advantage function for the critic as
\beq
A_t (x,a) := \bar Q_t(x,a) - \bar Q_t (x).
\eeq
Then, there exists a $T_1$ such that $\forall t \ge T_1, x \in \bm{\mathcal{X}}$, we have 
\beq
A_t(x,a) <-\frac{\Delta}{4} \quad \forall a\in I_-^x; \quad A_t(x,a) >\frac{\Delta}{4} \quad \forall a\in I_+^x.
\eeq
\end{lemma}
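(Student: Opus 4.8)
The plan is to show that the critic advantage $A_t(x,a) = \bar Q_t(x,a) - \bar Q_t(x)$ converges to the limiting advantage $A^\infty(x,a) = V^\infty(x,a) - V^\infty(x)$, and then to transfer the separation enjoyed by $A^\infty$ (which is built into the definition of $\Delta$ in \eqref{Delta}) to $A_t$ for all large $t$. In essence this lemma is a corollary of Lemma~\ref{convergence}; the only real content is bookkeeping on the constants.

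First I would write the difference as
$$
A_t(x,a) - A^\infty(x,a) = \left[\bar Q_t(x,a) - V^\infty(x,a)\right] - \left[\bar Q_t(x) - V^\infty(x)\right],
$$
and bound it by the triangle inequality
$$
\left| A_t(x,a) - A^\infty(x,a) \right| \le \left| \bar Q_t(x,a) - V^\infty(x,a) \right| + \left| \bar Q_t(x) - V^\infty(x) \right|.
$$
By \eqref{conv} in Lemma~\ref{convergence}, there is a $T_0$ such that the first term is at most $\frac{\Delta}{4}$ for all $t > T_0$ and all $(x,a)$. Since Lemma~\ref{convergence} also gives $\bar Q_t(x) \to V^\infty(x)$ and $\bm{\mathcal{X}}$ is finite, there is a $T_0'$ so that the second term is at most $\frac{\Delta}{4}$ for all $t > T_0'$ and all $x$. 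Setting $T_1 = \max\{T_0, T_0'\}$, I obtain $\left| A_t(x,a) - A^\infty(x,a) \right| \le \frac{\Delta}{2}$ uniformly in $(x,a)$ for $t \ge T_1$.

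Next I would invoke the definition of $\Delta$. For $a \in I_+^x$ we have $A^\infty(x,a) = V^\infty(x,a) - V^\infty(x) > 0$, and since $A^\infty(x,a) \ne 0$ the minimum defining $\Delta$ forces $A^\infty(x,a) \ge \Delta$; similarly $A^\infty(x,a) \le -\Delta$ for $a \in I_-^x$. Combining with the uniform bound gives, for $t \ge T_1$,
$$
A_t(x,a) \ge A^\infty(x,a) - \frac{\Delta}{2} \ge \frac{\Delta}{2} > \frac{\Delta}{4}, \quad a \in I_+^x,
$$
$$
A_t(x,a) \le A^\infty(x,a) + \frac{\Delta}{2} \le -\frac{\Delta}{2} < -\frac{\Delta}{4}, \quad a \in I_-^x,
$$
which is precisely the claim. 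Here I use the convention $\Delta > 0$, justified in Remark~\ref{remark} because $\Delta = 0$ would already yield the global convergence \eqref{actor}.

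There is no serious obstacle: the heavy lifting, namely establishing the existence of the limits $V^\infty$ and the convergence of $\bar Q_t$, is done in Lemma~\ref{convergence}. The only points requiring care are (i) using the finiteness of $\bm{\mathcal{X}} \times \bm{\mathcal{A}}$ to make the threshold $T_1$ uniform across all states and actions, and (ii) keeping the two separate convergences, $\bar Q_t(x,a) \to V^\infty(x,a)$ and $\bar Q_t(x) \to V^\infty(x)$, distinct, each contributing $\frac{\Delta}{4}$ to the total error budget $\frac{\Delta}{2}$, which then sits comfortably below the separation $\Delta$ of the limiting advantages.
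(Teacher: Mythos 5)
Your proposal is correct and follows essentially the same argument as the paper: both use \eqref{conv} to control $\bar Q_t(x,a)-V^{\infty}(x,a)$ to within $\frac{\Delta}{4}$, use the convergence $\bar Q_t(x)\to V^{\infty}(x)$ (uniformly over the finite state space) for another $\frac{\Delta}{4}$, and then invoke the separation $|A^{\infty}(x,a)|\ge\Delta$ on $I_+^x\cup I_-^x$ to conclude. The only cosmetic difference is that you phrase it as a single triangle-inequality bound on $|A_t(x,a)-A^{\infty}(x,a)|$ while the paper bounds $A_t(x,a)$ from above and below directly; the content is identical.
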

\begin{proof}
Since $\bar Q_t(x) \to V^\infty(x)$, we have that there exists $T_1>T_0$ such that for all $t \ge T_1$, 
\beq
\label{conv2}
V^\infty(x) - \frac{\Delta}{4} < Q_t(x) < V^\infty(x) + \frac{\Delta}{4}.
\eeq
Then, for any actions $a \in I_-^x$, we have for any $t \ge T_1 > T_0$
\bae
A_t(x,a) &= \bar Q_t(x,a) - \bar Q_t(x) \\ 
& \overset{(a)}{\le} V^{\infty}(x, a) + \frac{\Delta}{4} - \bar Q_t(x) \\ 
& \overset{(b)}{\le} V^{\infty}(x, a) + \frac{\Delta}{4} -V^{\infty}(x) + \frac{\Delta}{4} \\
&\overset{(c)}{\le} -\Delta + \frac{\Delta}{2} \\
&< - \frac{\Delta}{4},
\eae
where step (a) is by \eqref{conv}, step (b) is by \eqref{conv2} and step (c) is by the definition of $I_-^x$ in \eqref{type} and $\Delta$ in \eqref{Delta}. Similarly, for $a \in I_+^x$,
\bae
A_t(x,a) &= \bar Q_t(x,a) - \bar Q_t(x) \\ 
& \ge V^{\infty}(x, a) - \frac{\Delta}{4} - \bar Q_t(x) \\ 
& \ge V^{\infty}(x, a)- \frac{\Delta}{4} - V^{\infty}(x) - \frac{\Delta}{4} \\
&\ge \Delta - \frac{\Delta}{2} \\
&> \frac{\Delta}{4}.
\eae
\end{proof}

\begin{lemma}
\label{prob}
For any state action pair $(x,a) \in \bm{\mathcal{X}} \times \bm{\mathcal{A}}$, we have $\lim\limits_{t\to \infty} \widehat{\partial}_{x,a} J(f_{\bar\theta_t}) = 0$. This implies that
$$
\lim\limits_{t \to \infty} f_{\bar\theta_t}(x,a) = 0, \quad 	\forall a\in I_+^x \cup I_-^x,
$$
and thus 
\beq
\label{middle set prob 1}
\lim\limits_{t \to \infty} \sum_{a\in I_0^x} f_{\bar\theta_t}(x,a) = 1.
\eeq 
\end{lemma}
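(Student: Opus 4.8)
The plan is to deduce both assertions from the stationary-point convergence already established in Theorem~\ref{gradient vanish} together with the advantage separation of Lemma~\ref{advantage}.

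First I would prove that $\widehat{\partial}_{x,a} J(f_{\bar\theta_t}) \to 0$ for every $(x,a)$. By Theorem~\ref{gradient vanish} we know $\nabla_\theta J(f_{\bar\theta_t}) \to 0$, and by the critic-error estimate \eqref{critic error} we have $\|\nabla_\theta J(f_{\bar\theta_t}) - \widehat{\nabla}_\theta J(f_{\bar\theta_t})\| \le C\eta_t \to 0$. A triangle inequality then gives $\|\widehat{\nabla}_\theta J(f_{\bar\theta_t})\| \to 0$, so each component $\widehat{\partial}_{x,a} J(f_{\bar\theta_t}) \to 0$, which is the first claim.

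Next I would combine the identity $\widehat{\partial}_{x,a}J(f_{\bar\theta_t}) = \sigma_\mu^{f_{\bar\theta_t}}(x,a) A_t(x,a)$, read off from \eqref{gradient update element} with $A_t(x,a) = \bar Q_t(x,a) - \bar Q_t(x)$, together with Lemma~\ref{advantage}. For $a \in I_+^x \cup I_-^x$ and $t \ge T_1$ we have $|A_t(x,a)| > \Delta/4$, so from $\sigma_\mu^{f_{\bar\theta_t}}(x,a)\,|A_t(x,a)| \to 0$ it follows that $\sigma_\mu^{f_{\bar\theta_t}}(x,a) \to 0$. The key step is then to decouple the stationary measure from the policy using $\sigma_\mu^{f}(x,a) = f(x,a)\nu_\mu^{f}(x)$ together with $\nu_\mu^{f}(x) = \sum_{k\ge 0}\gamma^k \prob(x_k = x) \ge \mu(x)$ (bounding below by the $k=0$ term), which is strictly positive under the sufficient-exploration assumption $\mu(x)>0$. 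Hence $\mu(x)\, f_{\bar\theta_t}(x,a) \le \sigma_\mu^{f_{\bar\theta_t}}(x,a) \to 0$, and dividing by $\mu(x)>0$ yields $f_{\bar\theta_t}(x,a) \to 0$ for all $a \in I_+^x \cup I_-^x$.

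Finally, the normalization $\sum_{a} f_{\bar\theta_t}(x,a) = 1$ gives $\sum_{a \in I_0^x} f_{\bar\theta_t}(x,a) = 1 - \sum_{a\in I_+^x \cup I_-^x} f_{\bar\theta_t}(x,a) \to 1$, which is \eqref{middle set prob 1}. The main obstacle is the middle step: one must argue that the smallness of the weighted quantity $\sigma_\mu^{f_{\bar\theta_t}}(x,a)A_t(x,a)$ forces the policy mass $f_{\bar\theta_t}(x,a)$ itself to vanish, and this is precisely where the uniform lower bound $\nu_\mu^{f}(x) \ge \mu(x)>0$ — available only because $\mu$ charges every state — is indispensable; without sufficient exploration one could not exclude $\sigma_\mu^{f_{\bar\theta_t}}(x,a)$ collapsing for reasons unrelated to $f_{\bar\theta_t}(x,a)$, so the separation in Lemma~\ref{advantage} would not translate into a statement about the policy probabilities.
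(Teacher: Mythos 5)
Your proposal is correct and follows essentially the same route as the argument the paper defers to (the proof is omitted in the paper with a reference to \cite{agarwal2020optimality}): gradient vanishing plus the critic-error bound \eqref{critic error} gives $\widehat{\nabla}_\theta J(f_{\bar\theta_t})\to 0$, the factorization $\widehat{\partial}_{x,a}J(f_{\bar\theta_t})=\sigma_\mu^{f_{\bar\theta_t}}(x,a)A_t(x,a)$ from \eqref{gradient update element} combined with the separation $|A_t(x,a)|>\Delta/4$ of Lemma \ref{advantage} and the lower bound $\nu_\mu^{f}(x)\ge\mu(x)>0$ forces $f_{\bar\theta_t}(x,a)\to 0$ on $I_+^x\cup I_-^x$, and normalization gives \eqref{middle set prob 1}. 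Your explicit identification of where the sufficient-exploration assumption $\mu(x)>0$ enters is accurate and consistent with the paper's hypotheses.
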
		
\begin{lemma}[Monotonicity in $\bar\theta_t(x,a)$]
\label{monotonicity}
For all $a \in I_+^x$, $\bar\theta_t(x,a)$ is strictly increasing for $t\ge T_1$. For all $a \in I_-^x$, $\bar\theta_t(x,a)$ is strictly decreasing for $t \ge T_1$.
\end{lemma}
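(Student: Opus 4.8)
The plan is to read off the sign of $\frac{d}{dt}\bar\theta_t(x,a)$ directly from the actor gradient flow \eqref{gradient update element}. First I would rewrite that equation using the identity $\bar Q_t(x) = \sum_{a'} \bar Q_t(x,a') f_{\bar\theta_t}(x,a')$ from Lemma \ref{convergence}, so that the bracketed term becomes exactly the critic advantage $A_t(x,a) = \bar Q_t(x,a) - \bar Q_t(x)$ introduced in Lemma \ref{advantage}. This yields the clean expression
\beq
\frac{d}{dt}\bar\theta_t(x,a) = \zeta_t\, \sigma_\mu^{f_{\bar\theta_t}}(x,a)\, A_t(x,a),
\eeq
so that the whole statement reduces to determining the sign of the right-hand side.

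Next I would argue that the two prefactors $\zeta_t$ and $\sigma_\mu^{f_{\bar\theta_t}}(x,a)$ are strictly positive for every $t \ge 0$. Positivity of $\zeta_t = \frac{1}{1+t}$ is immediate from \eqref{learning rates}. For the visiting measure I would use the factorization $\sigma_\mu^{f_{\bar\theta_t}}(x,a) = f_{\bar\theta_t}(x,a)\,\nu_\mu^{f_{\bar\theta_t}}(x)$; since $f_{\bar\theta_t}$ is a softmax policy every $f_{\bar\theta_t}(x,a) > 0$, and since the $k=0$ term in the series \eqref{visiting measure} gives $\nu_\mu^{f_{\bar\theta_t}}(x) \ge \mu(x) > 0$ under the sufficient-exploration condition $\mu(x) > 0$, the product is strictly positive. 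Consequently $\frac{d}{dt}\bar\theta_t(x,a)$ has the same sign as $A_t(x,a)$.

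Finally I would invoke Lemma \ref{advantage}: for $t \ge T_1$ one has $A_t(x,a) > \frac{\Delta}{4} > 0$ for every $a \in I_+^x$ and $A_t(x,a) < -\frac{\Delta}{4} < 0$ for every $a \in I_-^x$. Combining this with the strict positivity of the prefactors gives $\frac{d}{dt}\bar\theta_t(x,a) > 0$ on $I_+^x$ and $\frac{d}{dt}\bar\theta_t(x,a) < 0$ on $I_-^x$ for all $t \ge T_1$, which is precisely the claimed strict monotonicity.

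I do not anticipate a genuine obstacle, since the lemma is essentially a sign-chasing consequence of Lemma \ref{advantage} once the flow is rewritten in terms of $A_t$. The only point requiring care is the strict positivity of $\sigma_\mu^{f_{\bar\theta_t}}(x,a)$ along the trajectory; this is exactly where the sufficient-exploration hypothesis $\mu(x) > 0$ enters, and it is the ODE-level analogue of the lower bound \eqref{lower bound} established for the pre-limit chain in Lemma \ref{stationary}.
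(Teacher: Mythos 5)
Your proof is correct and is essentially the argument the paper has in mind: the paper omits this proof, deferring to \cite{agarwal2020optimality}, where the same sign-reading of $\frac{d}{dt}\bar\theta_t(x,a) = \zeta_t\,\sigma_\mu^{f_{\bar\theta_t}}(x,a)\,A_t(x,a)$ combined with the advantage bounds of Lemma \ref{advantage} is used. Your observation that strict positivity of $\sigma_\mu^{f_{\bar\theta_t}}(x,a)$ rests on the softmax parametrization together with $\nu_\mu^{f_{\bar\theta_t}}(x)\ge \mu(x)>0$ is exactly the right point of care.
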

\begin{lemma}
\label{theta}
For any state $x$ with the set $I_+^x \ne \emptyset$, we have: 
\beq
\max_{a \in I_0^x}\bar\theta_t(x,a) \to \infty, \quad \min_{a \in \mathcal{A}}\bar\theta_t(x,a) \to -\infty.
\eeq
\end{lemma}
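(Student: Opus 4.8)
The plan is to exploit a conservation law for the softmax logits together with the probability and monotonicity estimates already established in Lemmas \ref{prob} and \ref{monotonicity}. First I would observe that, for each fixed state $x$, the total logit mass $\sum_{a}\bar\theta_t(x,a)$ is invariant in $t$. Indeed, summing the per-coordinate flow \eqref{gradient update element} over $a$ and using $\sigma_\mu^{f_{\bar\theta_t}}(x,a)=\nu_\mu^{f_{\bar\theta_t}}(x)f_{\bar\theta_t}(x,a)$ gives
\begin{equation*}
\sum_a \frac{d}{dt}\bar\theta_t(x,a)=\zeta_t\,\nu_\mu^{f_{\bar\theta_t}}(x)\sum_a f_{\bar\theta_t}(x,a)\Big[\bar Q_t(x,a)-\bar Q_t(x)\Big]=0,
\end{equation*}
since $\sum_a f_{\bar\theta_t}(x,a)\bar Q_t(x,a)=\bar Q_t(x)$ and $\sum_a f_{\bar\theta_t}(x,a)=1$. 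Hence $\sum_a\bar\theta_t(x,a)\equiv C_x$ is constant in $t$.

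Next I would prove $\max_{a\in I_0^x}\bar\theta_t(x,a)\to\infty$. Write $D_t:=\sum_{a'}e^{\bar\theta_t(x,a')}$ for the partition function and pick any $a_+\in I_+^x$ (nonempty by hypothesis). By Lemma \ref{monotonicity}, $\bar\theta_t(x,a_+)$ is strictly increasing for $t\ge T_1$, so $e^{\bar\theta_t(x,a_+)}\ge e^{\bar\theta_{T_1}(x,a_+)}=:c_0>0$; meanwhile Lemma \ref{prob} gives $f_{\bar\theta_t}(x,a_+)=e^{\bar\theta_t(x,a_+)}/D_t\to 0$. Therefore $D_t\ge c_0/f_{\bar\theta_t}(x,a_+)\to\infty$. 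Combining with $\sum_{a\in I_0^x}f_{\bar\theta_t}(x,a)\to 1$ from \eqref{middle set prob 1}, I obtain $\sum_{a\in I_0^x}e^{\bar\theta_t(x,a)}=D_t\sum_{a\in I_0^x}f_{\bar\theta_t}(x,a)\to\infty$, and since $I_0^x$ is finite, $\max_{a\in I_0^x}e^{\bar\theta_t(x,a)}\ge|I_0^x|^{-1}\sum_{a\in I_0^x}e^{\bar\theta_t(x,a)}\to\infty$, which yields the first claim.

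Finally, for $\min_{a\in\mathcal{A}}\bar\theta_t(x,a)\to-\infty$ I would feed the first claim into the conservation law. Let $M_t:=\max_a\bar\theta_t(x,a)$ and $m_t:=\min_a\bar\theta_t(x,a)$. Since $\max_{a\in I_0^x}\bar\theta_t(x,a)\le M_t$, the first claim gives $M_t\to\infty$. Bounding every non-maximal coordinate below by $m_t$ in the conserved sum gives $C_x=\sum_a\bar\theta_t(x,a)\ge M_t+(|\mathcal{A}|-1)m_t$, so $m_t\le(C_x-M_t)/(|\mathcal{A}|-1)\to-\infty$ (here $|\mathcal{A}|\ge 2$, since $I_+^x\ne\emptyset$ is impossible with a single action, as then $\bar Q_t(x,a)=\bar Q_t(x)$ forces $a\in I_0^x$). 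This is the second claim.

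The main obstacle is the first claim: on its own, $f_{\bar\theta_t}(x,a_+)\to 0$ is perfectly compatible with all logits drifting to $-\infty$, so the monotonicity lower bound from Lemma \ref{monotonicity} is essential to force the partition function $D_t$ to diverge rather than collapse. Once $D_t\to\infty$ is secured, the conservation law $\sum_a\bar\theta_t(x,a)\equiv C_x$ does the remaining work mechanically and pins the minimal logit down to $-\infty$.
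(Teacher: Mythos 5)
Your proof is correct and follows essentially the same route as the source the paper defers to: the paper omits this proof, citing \cite{agarwal2020optimality}, where the argument is exactly yours --- the monotone lower bound on $\bar\theta_t(x,a_+)$ together with $f_{\bar\theta_t}(x,a_+)\to 0$ forces the partition function to diverge, the concentration of mass on $I_0^x$ then forces $\max_{a\in I_0^x}\bar\theta_t(x,a)\to\infty$, and the invariance of $\sum_a\bar\theta_t(x,a)$ (which you correctly verify survives the replacement of $V^{f_{\bar\theta_t}}$ by $\bar Q_t$ in the flow \eqref{gradient update element}) pins the minimum to $-\infty$. No gaps; your adaptation to continuous time is sound.
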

\textcolor{black}{The proofs of Lemmas \ref{prob}, \ref{monotonicity}, and \ref{theta} are the same as in \cite{agarwal2020optimality} and therefore are omitted.}

\begin{lemma}
\label{increasing lemma}
For all states $x$ with the set $I_+^x \ne \emptyset$, choose any $a_{+} \in I_{+}^{x}$. Then, for any $a \in I_{0}^{x}$, if there exists $t \geq T_{0}$ such that $f_{\bar\theta_t}(x,a) \le f_{\bar\theta_t}(x, a_{+})$, we have
\beq
\label{always control}
f_{\bar\theta_\tau}(x,a) \le f_{\bar\theta_\tau}(x, a_{+}), \quad \forall	\tau \ge t.
\eeq
\end{lemma}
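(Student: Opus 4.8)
The plan is to reduce the claimed ordering of probabilities to an ordering of the raw softmax parameters, and then to show that this ordering is preserved by the gradient flow via a differential inequality of Gronwall type. Since $f_{\bar\theta_\tau}(x,\cdot)$ is a softmax with a common normalizer $\sum_{a'} e^{\bar\theta_\tau(x,a')}$, for the fixed state $x$ one has $f_{\bar\theta_\tau}(x,a) \le f_{\bar\theta_\tau}(x,a_+)$ if and only if $\bar\theta_\tau(x,a) \le \bar\theta_\tau(x,a_+)$. Hence, writing $D_\tau := \bar\theta_\tau(x,a_+) - \bar\theta_\tau(x,a)$, the hypothesis reads $D_t \ge 0$ and the conclusion \eqref{always control} reads $D_\tau \ge 0$ for all $\tau \ge t$.

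First I would write down the ODE satisfied by $D_\tau$. Using \eqref{gradient update element} together with $\sigma_\mu^{f_{\bar\theta_\tau}}(x,b) = \nu_\mu^{f_{\bar\theta_\tau}}(x) f_{\bar\theta_\tau}(x,b)$ and the critic advantage $A_\tau(x,b) = \bar Q_\tau(x,b) - \bar Q_\tau(x)$ from Lemma \ref{advantage}, each coordinate evolves as $\tfrac{d}{d\tau}\bar\theta_\tau(x,b) = \zeta_\tau \nu_\mu^{f_{\bar\theta_\tau}}(x) f_{\bar\theta_\tau}(x,b) A_\tau(x,b)$, so that
\[
\frac{dD_\tau}{d\tau} = \zeta_\tau \nu_\mu^{f_{\bar\theta_\tau}}(x)\Big[ f_{\bar\theta_\tau}(x,a_+) A_\tau(x,a_+) - f_{\bar\theta_\tau}(x,a) A_\tau(x,a) \Big].
\]
The crux is a sharp comparison of the two critic advantages, for which I would refine the $\pm\tfrac{\Delta}{4}$ estimates of Lemma \ref{advantage} (too weak here) directly from \eqref{conv} and \eqref{conv2}. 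For $\tau \ge T_1$ these give $|\bar Q_\tau(x,b) - V^\infty(x,b)| \le \tfrac{\Delta}{4}$ and $|\bar Q_\tau(x) - V^\infty(x)| \le \tfrac{\Delta}{4}$. Since $a_+ \in I_+^x$ satisfies $V^\infty(x,a_+) \ge V^\infty(x)+\Delta$, we get $A_\tau(x,a_+) \ge (V^\infty(x)+\tfrac{3\Delta}{4}) - (V^\infty(x)+\tfrac{\Delta}{4}) = \tfrac{\Delta}{2}$, while $a \in I_0^x$ satisfies $V^\infty(x,a)=V^\infty(x)$, so $A_\tau(x,a) \le (V^\infty(x)+\tfrac{\Delta}{4}) - (V^\infty(x)-\tfrac{\Delta}{4}) = \tfrac{\Delta}{2}$. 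Thus $A_\tau(x,a_+) \ge \tfrac{\Delta}{2} \ge A_\tau(x,a)$, and in particular $A_\tau(x,a_+) > 0$.

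With this in hand I would convert the ODE into a linear differential inequality. Abbreviating $f_+ = f_{\bar\theta_\tau}(x,a_+)$, $f_0 = f_{\bar\theta_\tau}(x,a)$, $A_+ = A_\tau(x,a_+)$, $A_0 = A_\tau(x,a)$, I split $f_+ A_+ - f_0 A_0 = A_+ (f_+ - f_0) + f_0 (A_+ - A_0)$, where the second term is nonnegative because $f_0 \ge 0$ and $A_+ - A_0 \ge 0$. By the mean value theorem applied to the exponential, $f_+ - f_0 = h_\tau D_\tau$ with $h_\tau > 0$, so
\[
\frac{dD_\tau}{d\tau} \ge \zeta_\tau \nu_\mu^{f_{\bar\theta_\tau}}(x) A_+ (f_+ - f_0) = g_\tau D_\tau, \qquad g_\tau := \zeta_\tau \nu_\mu^{f_{\bar\theta_\tau}}(x) A_+ h_\tau \ge 0.
\]
Gronwall's inequality then yields $D_\tau \ge D_t \exp\big(\int_t^\tau g_s\,ds\big) \ge 0$, which, translated back through softmax monotonicity, is exactly \eqref{always control}.

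I expect the main obstacle to be the advantage comparison at a crossing: the $\pm\tfrac{\Delta}{4}$ bounds of Lemma \ref{advantage} leave a gap, and one must sharpen them to $A_\tau(x,a_+) \ge \tfrac{\Delta}{2} \ge A_\tau(x,a)$, for otherwise the sign of $\dot D_\tau$ at $D_\tau = 0$ is uncontrolled. A secondary subtlety is that forward invariance of $\{D \ge 0\}$ does not follow automatically from the merely weak inequality $\dot D_\tau \ge 0$ at $D_\tau = 0$; casting it instead as the linear inequality $\dot D_\tau \ge g_\tau D_\tau$ with $g_\tau \ge 0$ and invoking Gronwall removes this difficulty. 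Finally, the time bookkeeping — the refined advantage bounds require $\tau \ge T_1$, so the threshold in the hypothesis is taken to be at least $T_1$ — is routine.
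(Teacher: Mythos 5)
Your proof is correct, but it is organized quite differently from the paper's. The paper argues at the level of derivatives: at the time $t$ where the hypothesis holds it derives a strict gap $\widehat{\partial}_{x,a}J(f_{\bar\theta_t}) \le \widehat{\partial}_{x,a_+}J(f_{\bar\theta_t}) - \nu_\mu^{f_{\bar\theta_t}}(x)\tfrac{\Delta\epsilon_0}{4}$ (using the separation $\bar Q_t(x,a_+) > \bar Q_t(x,a) + \tfrac{\Delta}{4}$ together with a lower bound $f_{\bar\theta_t}(x,a_+)\ge\epsilon_0$), and then invokes the convergence $\widehat{\partial}_{x,b}J(f_{\bar\theta_s})\to 0$ from Theorem \ref{gradient vanish} to force $|\bar\theta'_s(x,b)|\le \tfrac{C}{3}\zeta_s$ at late times, so that the derivative ordering $\bar\theta'_s(x,a)\le\bar\theta'_s(x,a_+)$ persists for all $s\ge t$ and can be integrated. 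You instead prove forward invariance of $\{D_\tau := \bar\theta_\tau(x,a_+)-\bar\theta_\tau(x,a)\ge 0\}$ directly: the decomposition $f_+A_+-f_0A_0 = A_+(f_+-f_0)+f_0(A_+-A_0)$, combined with the two-sided refinement $A_\tau(x,a_+)\ge\tfrac{\Delta}{2}\ge A_\tau(x,a)$ (which indeed follows from \eqref{conv} and \eqref{conv2} and is exactly the right sharpening of Lemma \ref{advantage} needed here) and the mean-value factorization $f_+-f_0=h_\tau D_\tau$, yields $\dot D_\tau\ge g_\tau D_\tau$ with $g_\tau\ge 0$ bounded, and Gronwall closes the argument. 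Your route is more self-contained — it needs neither the gradient-vanishing theorem nor the $\epsilon_0$ lower bound on $f_{\bar\theta_t}(x,a_+)$, and it cleanly resolves the boundary issue that a weak inequality $\dot D\ge 0$ at $D=0$ alone would not — at the modest cost of requiring the threshold in the hypothesis to be at least $T_1$ rather than $T_0$ (so that \eqref{conv2} and the positivity of $A_\tau(x,a_+)$ are available), a relabeling the paper's own proof implicitly makes as well.
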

\begin{proof}
If $f_{\bar\theta_t}(x,a) \le f_{\bar\theta_t}(x, a_{+})$, we know $\bar\theta_t(x,a) \le \bar\theta_t(x,a_{+})$ and there exists a small $\epsilon_0 >0$ such that $f_{\bar\theta_t}(x, a_{+}) \ge \epsilon_0$. Therefore, 
\bae
\widehat\partial_{x,a} J(f_{\bar\theta_t})
&= \nu_\mu^{f_{\bar\theta_t}}(x) f_{\bar\theta_t}(x,a) \left[\bar Q_t(x, a) - \bar Q_t(x) \right] \\ 
& \overset{(a)}{\le} \nu_\mu^{f_{\bar\theta_t}}(x) f_{\bar\theta_t}(x,a_+)\left[ \bar Q_t(x,a_+) - \bar Q_t(x) - \frac{\Delta}{4} \right] \\	
& \le \nu_\mu^{f_{\bar\theta_t}}(x) f_{\bar\theta_t}(x,a_+) \left[ \bar Q_t(x,a_+) - \bar Q_t(x) \right] - \epsilon_0 \nu_\mu^{f_{\bar \theta_t}}(x) \left[ \bar Q_t(x,a_+) - \bar Q_t(x) \right]  \\
&\le \widehat\partial_{x,a_+} J(f_{\bar\theta_t}) - \nu_\mu^{f_{\bar\theta_t}}(x) \frac{\Delta \epsilon_0}{4},
\eae
where the step (a) follows from $t>T_{0}$, $a\in I_0^x$ and $a_+ \in I_+^x$,
\beq
\bar Q_t(x, a_{+}) \ge V^{\infty}(x, a_{+}) - \frac{\Delta}{4} \ge V^{\infty}(x) + \Delta - \frac{\Delta}{4} = V^{\infty}(x,a) + \frac34 \Delta  > \bar Q_t(x, a) + \frac{\Delta}{4},
\eeq
and the fact that $\beta_t(x,a)$ decay exponentially. Let $C = \nu_\mu^{f_{\bar\theta_t}}(x) \frac{\Delta \epsilon_0}{4} $ and note that 
\beq
\widehat\partial_{x,a_+} J(f_{\bar\theta_t}) - C \ge 0.
\eeq
Then, we have
\beq
\bar\theta'_t(x,a) \le \bar\theta'_t(x,a_+) -C\zeta_t.
\eeq

By the gradient flow \eqref{gradient update}, Theorem \ref{gradient vanish}, and \eqref{critic error}, we have for any action $a$
\beq
\frac{\frac{d}{dt}\bar\theta_t(x,a)}{\zeta_t} = \widehat\partial_{x,a} J(f_{\bar\theta_t}) \to 0, \quad t \to \infty.
\eeq
Thus, without lose of generality, we can suppose that constant $T_0$ is large enough such that for any $t \ge T_0$ and any action $a\in \mathcal{A}$,
\beq
-\frac{C}{3} \zeta_t \le \bar\theta'_t(x,a) \le \frac{C}{3} \zeta_t.
\eeq
Thus, for any $s > t> T_0$,
\bae
\bar\theta'_s(x,a) &= \bar\theta'_s(x,a) - \bar\theta'_t(x,a) + \bar\theta'_t(x,a)\\
& \overset{(a)}{\le} \frac{C}{3} \zeta_t + \frac{C}{3} \zeta_t +\bar\theta'_t(x,a_+) -C\zeta_t\\
&\le \bar\theta'_s(x,a_+) - \frac{C}{3}\zeta_t,
\eae
where step (a) use $\zeta_t$ is decreasing. Finally, we have for any $T_0 < t \le \tau$,
\bae
\bar\theta_\tau(x,a) &= \bar\theta_t(x,a) + \int_t^\tau \bar\theta'_s(x,a) ds\\
&\le \bar\theta_t(x,a_+) + \int_t^\tau \bar\theta'_s(x,a_+) ds\\
&= \bar\theta_\tau(x,a_+).
\eae
and therefore \eqref{always control} is true. 
\end{proof}

For any $a_{+} \in I_{+}^{x}$, we divide the set $I_{0}^{x}$ into two sets $B_{0}^{x}\left(a_{+}\right)$ and $\bar{B}_{0}^{x}\left(a_{+}\right)$ as follows:
$B_{0}^{x}\left(a_{+}\right)$ is the set of all $a \in I_{0}^{x}$ such that for all $t \geq T_{0}, f_{\bar\theta_t}(x, a_{+}) < f_{\bar\theta_t}(x,a)$ and $\bar{B}_{0}^{x}\left(a_{+}\right)$ contains the remainder of the actions from $I_{0}^{x}$. By the definition of $B_{0}^{x}\left(a_{+}\right)$, we immediately have two Lemmas.
\begin{lemma}
\label{main set prob 1}
Suppose for a state $x\in \bm{\mathcal{X}}$, $I_+^{x} \neq \emptyset$. Then, $\forall a_{+} \in I_+^{x}$ we have that $B_{0}^{x}\left(a_{+}\right) \neq \emptyset$ and that
\beq
\lim\limits_{t \to \infty} \sum_{a \in B_{0}^x(a_+)}  f_{\bar\theta_t}(x,a) = 1,
\eeq
which also derives
\beq
\max _{a \in B_{0}^{x}\left(a_{+}\right)} \bar\theta_t(x, a) \rightarrow \infty.
\eeq
\end{lemma}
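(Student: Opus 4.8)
The plan is to deduce the statement entirely from three facts already established for a state $x$ with $I_+^x\neq\emptyset$: the vanishing of the off-support probabilities in Lemma \ref{prob}, the dominance/propagation property in Lemma \ref{increasing lemma}, and the blow-up $\max_{a\in I_0^x}\bar\theta_t(x,a)\to\infty$ in Lemma \ref{theta}. No new dynamics need to be analysed; the work is in assembling these correctly and in converting the final probability bound into a statement about the logits $\bar\theta_t(x,a)$.

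First I would show $\sum_{a\in\bar{B}_0^x(a_+)}f_{\bar\theta_t}(x,a)\to 0$. By construction every $a\in\bar{B}_0^x(a_+)$ lies in $I_0^x$ and fails the strict inequality defining $B_0^x(a_+)$, so there is some $t_a\ge T_0$ with $f_{\bar\theta_{t_a}}(x,a)\le f_{\bar\theta_{t_a}}(x,a_+)$. Lemma \ref{increasing lemma} then propagates this bound, giving $f_{\bar\theta_\tau}(x,a)\le f_{\bar\theta_\tau}(x,a_+)$ for all $\tau\ge t_a$. Since $a_+\in I_+^x$, Lemma \ref{prob} yields $f_{\bar\theta_t}(x,a_+)\to 0$, so each of the finitely many terms, and hence the whole sum over $\bar{B}_0^x(a_+)$, tends to $0$. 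Combining this with $\sum_{a\in I_0^x}f_{\bar\theta_t}(x,a)\to 1$ from Lemma \ref{prob} and the disjoint decomposition $I_0^x=B_0^x(a_+)\sqcup\bar{B}_0^x(a_+)$ gives $\sum_{a\in B_0^x(a_+)}f_{\bar\theta_t}(x,a)\to 1$; in particular $B_0^x(a_+)$ cannot be empty, since an empty sum would equal $0\neq 1$.

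For the last claim I would use the softmax normalization. Set $m=|B_0^x(a_+)|$ and $a_t=\arg\max_{a\in B_0^x(a_+)}f_{\bar\theta_t}(x,a)$; convergence of the mass to $1$ forces $f_{\bar\theta_t}(x,a_t)\ge\frac1{2m}$ for all large $t$. Writing $f_{\bar\theta_t}(x,a_t)=e^{\bar\theta_t(x,a_t)}/Z_t$ with $Z_t=\sum_{a'}e^{\bar\theta_t(x,a')}$, and noting $Z_t\ge\exp(\max_{a'\in I_0^x}\bar\theta_t(x,a'))\to\infty$ by Lemma \ref{theta}, I obtain $e^{\bar\theta_t(x,a_t)}\ge Z_t/(2m)\to\infty$, hence $\bar\theta_t(x,a_t)\to\infty$. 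Because the softmax map is strictly monotone in $\bar\theta_t(x,\cdot)$, the $f$-maximizer $a_t$ is also the $\bar\theta_t(x,\cdot)$-maximizer over $B_0^x(a_+)$, so $\max_{a\in B_0^x(a_+)}\bar\theta_t(x,a)=\bar\theta_t(x,a_t)\to\infty$. The only step requiring genuine care is this last one: a lower bound on a probability does not by itself control a logit, and one must supply the divergence of the normalizing constant $Z_t$ (from Lemma \ref{theta}) together with the monotonicity of softmax to transfer the bound from $f_{\bar\theta_t}$ to $\bar\theta_t$.
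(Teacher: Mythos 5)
Your proof is correct. The paper itself omits the proof of this lemma, deferring to \cite{agarwal2020optimality}, and your argument is essentially the standard one used there: the propagation property of Lemma \ref{increasing lemma} together with $f_{\bar\theta_t}(x,a_+)\to 0$ (Lemma \ref{prob}) kills the mass on $\bar{B}_0^x(a_+)$, and the decomposition $I_0^x = B_0^x(a_+)\sqcup\bar{B}_0^x(a_+)$ then forces the mass on $B_0^x(a_+)$ to $1$ and its nonemptiness. Your care in converting the probability lower bound into the logit divergence --- via $Z_t\ge\exp\bigl(\max_{a'\in I_0^x}\bar\theta_t(x,a')\bigr)\to\infty$ from Lemma \ref{theta} and the monotonicity of the softmax --- correctly supplies the one step that a naive reading would leave open.
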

\begin{lemma}
\label{policy}
Consider any $x$ with $I_{+}^{x} \neq \emptyset$. Then, for any $a_{+} \in I_{+}^{x}$, there exists an $T_{a_+}$ such that for all $a \in \bar{B}_{0}^{x}\left(a_{+}\right)$
$$
f_{\bar\theta_t}(x,a_+) \ge f_{\bar\theta_t}(x,a), \quad \forall t > T_{a_{+}}.
$$
\end{lemma}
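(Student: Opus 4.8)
The plan is to deduce this lemma directly from the ``once-crossed, forever-dominated'' property established in Lemma \ref{increasing lemma}, combined with the defining property of the set $\bar{B}_0^x(a_+)$ and the finiteness of the action space $\bm{\mathcal{A}}$.

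First I would fix an arbitrary $a \in \bar{B}_0^x(a_+)$ and unpack the definition of this set. By construction $\bar{B}_0^x(a_+)$ consists of exactly those $a \in I_0^x$ that do \emph{not} belong to $B_0^x(a_+)$. Since $B_0^x(a_+)$ is defined as the collection of $a \in I_0^x$ for which the strict inequality $f_{\bar\theta_t}(x,a_+) < f_{\bar\theta_t}(x,a)$ holds for \emph{all} $t \ge T_0$, membership of $a$ in $\bar{B}_0^x(a_+)$ is precisely the logical negation of this condition. Consequently there must exist at least one time $t_a \ge T_0$ at which the inequality fails, i.e.\ $f_{\bar\theta_{t_a}}(x,a) \le f_{\bar\theta_{t_a}}(x,a_+)$.

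Next I would invoke Lemma \ref{increasing lemma}. Because $a \in I_0^x$, $a_+ \in I_+^x$, and I have located a time $t_a \ge T_0$ with $f_{\bar\theta_{t_a}}(x,a) \le f_{\bar\theta_{t_a}}(x,a_+)$, the lemma applies verbatim and yields $f_{\bar\theta_\tau}(x,a) \le f_{\bar\theta_\tau}(x,a_+)$ for every $\tau \ge t_a$. In words, once the probability assigned to a marginal action $a \in I_0^x$ drops to or below that of the strictly suboptimal-looking action $a_+$, the gradient flow keeps it there for all later times.

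Finally, to produce a single threshold that works simultaneously for every $a \in \bar{B}_0^x(a_+)$, I would set
\[
T_{a_+} := \max_{a \in \bar{B}_0^x(a_+)} t_a,
\]
which is finite because $\bar{B}_0^x(a_+) \subseteq \bm{\mathcal{A}}$ is a finite set and each $t_a$ is finite. For any $t > T_{a_+}$ and any $a \in \bar{B}_0^x(a_+)$ we then have $t \ge t_a$, so the persistence statement gives $f_{\bar\theta_t}(x,a_+) \ge f_{\bar\theta_t}(x,a)$, which is the claim. I do not expect any real analytic obstacle here: the entire content reduces to negating the defining property of $B_0^x(a_+)$ and applying Lemma \ref{increasing lemma}. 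The only point requiring mild care is the passage from the per-action crossing times $t_a$ to a uniform threshold $T_{a_+}$, which is handled cleanly by the finiteness of $\bm{\mathcal{A}}$.
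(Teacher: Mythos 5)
Your proof is correct and is exactly the argument the paper intends: the paper omits the proof of Lemma \ref{policy}, deferring to \cite{agarwal2020optimality}, and the standard argument there is precisely your combination of (i) negating the defining property of $B_0^x(a_+)$ to obtain a crossing time $t_a \ge T_0$ for each $a \in \bar{B}_0^x(a_+)$, (ii) invoking the persistence property of Lemma \ref{increasing lemma}, and (iii) taking the maximum over the finitely many actions. The only quibble is cosmetic: describing $a_+ \in I_+^x$ as ``suboptimal-looking'' is backwards (its limiting advantage is strictly positive), but this does not affect the argument.
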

\textcolor{black}{The proofs of Lemmas \ref{main set prob 1}  and \ref{policy} are the same as in \cite{agarwal2020optimality} and therefore are omitted.}

\begin{lemma}
\label{parameter bound}
For all actions $a \in I_{+}^{x},$ we have that $\bar\theta_t(x, a)$ is bounded from below as $t \rightarrow \infty$. For all actions $a \in I_-^x,$ we have that $\bar\theta_t(x, a) \rightarrow -\infty$ as $t \rightarrow \infty$.
\end{lemma}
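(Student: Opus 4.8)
The two assertions have very different character, so I would dispatch them separately. The first --- that $\bar\theta_t(x,a)$ is bounded below for $a\in I_+^x$ --- is immediate from Lemma \ref{monotonicity}: for $t\ge T_1$ the trajectory $\bar\theta_t(x,a)$ is strictly increasing, hence $\bar\theta_t(x,a)\ge \bar\theta_{T_1}(x,a)$ for all $t\ge T_1$, while on the compact interval $[0,T_1]$ the continuous function $t\mapsto\bar\theta_t(x,a)$ is bounded; together these give a uniform lower bound and nothing further is needed.

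The substance is the second assertion, that $\bar\theta_t(x,a)\to-\infty$ for $a\in I_-^x$. By Lemma \ref{monotonicity} the map $t\mapsto\bar\theta_t(x,a)$ is strictly decreasing for $t\ge T_1$, so it converges to a limit $L\in[-\infty,\bar\theta_{T_1}(x,a))$, and the entire task is to exclude $L>-\infty$. I would argue by contradiction. From the ODE \eqref{gradient update element}, $\frac{d}{dt}\bar\theta_t(x,a)=\zeta_t\,\nu_\mu^{f_{\bar\theta_t}}(x)\,f_{\bar\theta_t}(x,a)\,A_t(x,a)$, combined with $-A_t(x,a)\ge\frac{\Delta}{4}$ for $t\ge T_1$ (Lemma \ref{advantage}) and the elementary bound $\nu_\mu^{f_{\bar\theta_t}}(x)\ge\mu(x)>0$, a finite limit $L$ would force
\[
\int_{T_1}^{\infty}\zeta_s\, f_{\bar\theta_s}(x,a)\,ds<\infty .
\]
It therefore suffices to show this integral actually diverges.

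To produce the contradiction I would track the softmax normalizer $Z_t:=\sum_{a'}e^{\bar\theta_t(x,a')}$. If $L>-\infty$ then $e^{\bar\theta_t(x,a)}\to e^{L}>0$ whereas $f_{\bar\theta_t}(x,a)\to 0$ by Lemma \ref{prob}, so $Z_t\to\infty$; and since the maximizing action always carries probability at least $1/|\bm{\mathcal A}|$ while every action in $I_+^x\cup I_-^x$ has vanishing probability, the argmax $b_t$ of $\bar\theta_t(x,\cdot)$ lies in $I_0^x$ for large $t$. Writing $\Theta_t:=\max_{a'}\bar\theta_t(x,a')$ we have $f_{\bar\theta_t}(x,a)\asymp e^{\bar\theta_t(x,a)-\Theta_t}$, so the display above gives $\int_{T_1}^{\infty}\zeta_s e^{-\Theta_s}\,ds<\infty$. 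The plan is then to bound the growth of $\Theta_t$: its maximizer $b_t$ lies in $I_0^x$, hence $A_t(x,b_t)\to 0$, and $\frac{d}{dt}\Theta_t=\zeta_t\,\nu_\mu^{f_{\bar\theta_t}}(x)\,f_{\bar\theta_t}(x,b_t)\,A_t(x,b_t)=\zeta_t\,o(1)$; integrating against $\zeta_s=1/(1+s)$ already yields $\Theta_t=o(\log t)$, and I would sharpen this to $\Theta_t=O(\log\log t)$ by using the critic rate \eqref{critic convergence} to quantify the decay of $A_t(x,b_t)$. This sharper rate makes $\int \zeta_s e^{-\Theta_s}\,ds$ diverge, contradicting the previous line. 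When $I_+^x\neq\emptyset$ the same contradiction can be organized through Lemmas \ref{increasing lemma} and \ref{main set prob 1}, which already supply an $I_0^x$ action with parameter tending to $+\infty$ and probability bounded below; the case $I_+^x=\emptyset$ is handled by the identical normalizer estimate.

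The main obstacle is exactly this final step. The vanishing of $f_{\bar\theta_t}(x,a)$ is equally consistent with $\bar\theta_t(x,a)$ approaching a finite constant (with $Z_t$ blowing up) and with $\bar\theta_t(x,a)\to-\infty$, so excluding the former demands a quantitative control of how fast the normalizer --- equivalently the maximal competing parameter $\Theta_t$ --- can grow. Since the relevant integral $\int\zeta_s e^{-\Theta_s}\,ds$ lies right at the borderline between convergence and divergence (in the simplest two-action example it diverges only like $\log\log t$), the crude sublinear bound $\Theta_t=o(\log t)$ is insufficient, and extracting the $O(\log\log t)$ rate from the advantage decay is the delicate heart of the argument.
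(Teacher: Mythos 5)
Your first claim (the lower bound for $a\in I_+^x$) is exactly the paper's argument and is fine. For the second claim, however, your route differs from the paper's and has a genuine gap at precisely the step you flag as the "delicate heart." You reduce the problem to showing $\int \zeta_s e^{-\Theta_s}\,ds=\infty$ with $\Theta_s=\max_{a'}\bar\theta_s(x,a')$, and propose to get $\Theta_t=O(\log\log t)$ by quantifying the decay of $A_t(x,b_t)$ for the maximizing action $b_t\in I_0^x$ via the critic rate \eqref{critic convergence}. This fails for two reasons. First, \eqref{critic convergence} only controls $\bar Q_t-V^{f_{\bar\theta_t}}$; the advantage $A_t(x,b)=\bar Q_t(x,b)-\bar Q_t(x)$ also contains $V^{f_{\bar\theta_t}}(x,b)-V^{f_{\bar\theta_t}}(x)$, which tends to $A^\infty(x,b)=0$ for $b\in I_0^x$ only qualitatively (Lemma \ref{convergence}); no rate for this convergence is available at this stage, and importing the global rate would be circular. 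Without a rate you only obtain $\Theta_t=o(\log t)$, which, as you concede, is consistent with $\int\zeta_s e^{-\Theta_s}\,ds<\infty$. Second, even the target bound is insufficient as stated: if $\Theta_t\le m\log\log t+O(1)$ with $m>1$, then $e^{-\Theta_t}\le C(\log t)^{-m}$ and $\int \frac{ds}{s(\log s)^{m}}<\infty$; you would need the sharp constant $m\le 1$, a borderline estimate with no visible source.

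The paper's proof avoids this borderline entirely. Assuming $\bar\theta_t(x,a)>\epsilon_0$ for contradiction, it takes from Lemma \ref{theta} an action $a'$ with $\liminf_t\bar\theta_t(x,a')=-\infty$, and on the set of times where $\widehat{\partial}_{x,a'}J(f_{\bar\theta_t})<0$ and $\bar\theta_t(x,a')\le\epsilon_0-\delta$ it compares the two ODE right-hand sides: the ratio $\widehat{\partial}_{x,a}J/\widehat{\partial}_{x,a'}J=e^{\bar\theta_t(x,a)-\bar\theta_t(x,a')}A_t(x,a)/A_t(x,a')$ is bounded below by $e^{\delta}(1-\gamma)\Delta/2$ there, using only $|A_t(x,a)|\ge\Delta/4$ (Lemma \ref{advantage}) and $|A_t(x,a')|\le\frac{1}{1-\gamma}+o(1)$. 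Hence the cumulative decrease of $\bar\theta_t(x,a)$ is at least a fixed multiple of the unbounded decrease of $\bar\theta_t(x,a')$ on that set, forcing $\bar\theta_{t_n}(x,a)\to-\infty$ and the contradiction. This relative comparison needs no decay rate for the $I_0^x$ advantages, which is exactly why it closes where your normalizer estimate does not; to repair your argument you would either have to supply that missing quantitative decay (not available) or switch to a comparison of the paper's type.
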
 
\begin{proof}
From Lemma \ref{monotonicity}, we know that when $t \ge T_{1}$ and for any $a \in I_{+}^{x}$, $\bar\theta_t(x, a)$ is strictly increasing. Thus $\bar\theta_t(x, a)$ is bounded from below for any $a \in I_{+}^{x}$. For the second claim, from Lemma \ref{monotonicity} we know that when $t \ge T_{1}$, $\bar\theta_t(x, a)$ is strictly decreasing for $a \in I^{x}_-$. Therefore, by monotone convergence theorem, $\lim\limits_{t \rightarrow \infty} \bar\theta_t(x, a)$ exists and is either $-\infty$ or some constant $\epsilon_{0}$. Next, we prove the convergence to $-\infty$ by contradiction. 

Suppose for some $a \in I_{-}^{x}$ that there exists a $\epsilon_{0}$ such that $\bar\theta_t(x,a)>\epsilon_{0}, \forall t \geq T_{1}$. By Lemma \ref{theta}, we know that there exists an action $a^{\prime} \in \bm{\mathcal{A}}$ such that
\beq
\liminf _{t \rightarrow \infty} \bar\theta_t(x, a')=-\infty.
\eeq
Choose a constant $\delta>0$ such that $\bar\theta_{T_1}(x, a^{\prime}) \geq \epsilon_{0}-\delta$. Then, we can find an increasing sequence $\{t_n\}_{n\ge 0}$ larger than $T_1$ and converging to $\infty$ such that 
\beq
\label{contradict}
\theta_{t_n}(x,a') < \epsilon_0-\delta, \quad \lim_{n\to \infty}\bar\theta_{t_n}(x,a') = -\infty.
\eeq
Define $\tau_n$ as
\beq
\tau_n := \sup\{s| s\in [T_1, t_n],\  \bar\theta_s(x,a') \ge  \epsilon_0 -\delta \}
\eeq
where
\beq
\mathcal{T}^{(n)} := \{s| s\in (\tau_n, t_n),\ \widehat{\partial}_{ x,a'} J(f_{\bar\theta_s})< 0 \}
\eeq
By the continuity of $\widehat{\nabla}_\theta J(f_\theta)$, we know $\mathcal{T}^{(n)}$ is a Lebesgue measurable set. Note that the Lebesgue measure of $\mathcal{T}^{(n)}$ should be positive for all $n$. Suppose there is a constant $n$ such that $\mathcal{L}(\mathcal{T}^{(n)}) = 0$, then by $\bar\theta_{\tau_n}(x,a') \ge  \epsilon_0 -\delta$, we will have 
\bae
\bar\theta_{t_n}(x,a') &= \bar\theta_{\tau_n}(x,a') + \int_{\tau_n}^{t_n} \zeta_s  \widehat{\partial}_{ x,a'} J(f_{\bar\theta_s}) ds \\
&= \bar\theta_{\tau_n}(x,a') + \int_{ (\tau_n,t_n) \setminus \mathcal{T}^{(n)} }\zeta_s  \widehat{\partial}_{ x,a'} J(f_{\bar\theta_s}) ds \\
&\ge \bar\theta_{\tau_n}(x,a') \\
&\ge \epsilon_0-\delta,
\eae
which contradicts \eqref{contradict}. 

Define the sequence $\{Z_{n}\}_{n\ge 0}$ as 
$$
Z_{n}:=\int_{\mathcal{T}^{(n)}} \zeta_s  \widehat{\partial}_{ x,a'} J(f_{\bar\theta_s}) ds.
$$
Then,
\beq
Z_n \le \int_{\tau_n}^{t_n} \zeta_s  \widehat{\partial}_{ x,a'} J(f_{\bar\theta_s}) ds \le \bar\theta_{t_n}(x,a') - ( \epsilon_0-\delta).
\eeq
By \eqref{contradict}, this implies that
\beq
\label{contradictory}
\lim_{n \rightarrow \infty} Z_n=-\infty
\eeq
For the positive measure set $\mathcal{T}^{(n)}$, we have for any $t^{\prime} \in \mathcal{T}^{(n)}$,
\beq
\left|\frac{ \widehat{\partial}_{x,a} J(f_{\bar\theta_{t'}}) }{ \widehat{\partial}_{x,a'} J(f_{\bar\theta_{t'}})}\right|=\left|\frac{f_{\bar\theta_{t'}}(x, a) A_{t'}(x, a)}{ f_{\bar\theta_{t'}}(x, a') A_{t'} (x,a')}\right|  \geq \exp \left( \epsilon_0 - \bar\theta_{t'}(x, a')\right) \frac{(1-\gamma) \Delta}{2} 
\geq \exp (\delta) \frac{(1-\gamma) \Delta}{2}
\eeq
where we have used that $|A^{f_{\bar\theta_{t'}}}(x,a')| \le \frac{1}{1-\gamma}$, $|A^{f_{\bar\theta_{t'}}}(x,a') - A_{t'}(x,a')| \to 0$  
and $|A^{f_{\bar \theta_{t'}}}(x,a)| \ge \frac{\Delta}{4}$ for all $t'>T_{1}$ (from Lemma \ref{advantage}). Note that since $\widehat{\partial}_{x,a} J(f_{\bar\theta_{t'}})<0$ and $\widehat{\partial}_{ x,a'} J(f_{\bar\theta_{t'}})<0$  for all $t' \in \mathcal{T}^{(n)}$, we have
\beq
\label{control}
\widehat{\partial}_{x,a} J(f_{\bar\theta_{t'}})  
\le \exp (\delta) \frac{(1-\gamma) \Delta}{2} \widehat{\partial}_{x,a'} J(f_{\bar\theta_{t'}}).
\eeq
Thus
\bae
\label{contradictory2}
\bar\theta_{t_n}(x,a) &= \bar\theta_{T_1}(x, a) + \int_{T_1}^{t_n} \zeta_s \widehat{\partial}_{ x,a } J(f_{\bar\theta_{s}})ds \\
& \overset{(a)}{\le} \bar\theta_{T_1}(x, a) + \int_{ \mathcal{T}^{(n)}} \zeta_s \widehat{\partial}_{ x,a } J(f_{\bar\theta_{s}})ds\\
&  \overset{(b)}{\le}\ \bar\theta_{T_1}(x, a) + \exp (\delta) \frac{(1-\gamma) \Delta}{2} \int_{ \mathcal{T}^{(n)}} \zeta_s \widehat{\partial}_{ x,a' } J(f_{\bar\theta_{s}})ds \\
& = \bar\theta_{T_1}(x, a) + \exp (\delta) \frac{(1-\gamma) \Delta}{2} Z_{n}.
\eae
where the step $(a)$ follows from $\widehat{\partial}_{ x,a } J(f_{\bar\theta_{s}}) <0$ for any $s \ge T_1$ (Lemma \ref{monotonicity}) and step (b) is from \eqref{control}. Since \eqref{contradictory} and \eqref{contradictory2} contradict that $\bar\theta_t(x,a)$ is bounded from below, the proof is completed. 
\end{proof}

\begin{lemma}
\label{parameter}
Consider any state $x$ with $I_{+}^{x} \neq \emptyset$. We have for any $a_{+} \in I_{+}^{x}$,
\beq
\label{aim}
\lim\limits_{t \to \infty} \sum_{a \in B_{0}^{x}\left(a_{+}\right)} \bar\theta_t(x, a) = \infty
\eeq
\end{lemma}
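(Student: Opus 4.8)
The plan is to bound the sum $\sum_{a \in B_0^x(a_+)} \bar\theta_t(x,a)$ from below by a quantity that diverges, exploiting the fact that we already control the \emph{maximum} of these parameters (via Lemma \ref{main set prob 1}) together with a uniform lower bound on each individual summand. First I would record the key consequence of the definition of $B_0^x(a_+)$: by construction, for every $a \in B_0^x(a_+)$ and every $t \ge T_0$ we have $f_{\bar\theta_t}(x,a_+) < f_{\bar\theta_t}(x,a)$. Since the softmax normalization $\sum_{a'} e^{\bar\theta_t(x,a')}$ is identical for all actions in state $x$, this probability inequality is equivalent to the parameter inequality
\beq
\bar\theta_t(x,a) > \bar\theta_t(x,a_+), \qquad \forall a \in B_0^x(a_+),\ t \ge T_0.
\eeq

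Next I would obtain a constant lower bound on the reference parameter $\bar\theta_t(x,a_+)$. Because $a_+ \in I_+^x$, Lemma \ref{monotonicity} guarantees that $\bar\theta_t(x,a_+)$ is strictly increasing for $t \ge T_1$, and Lemma \ref{parameter bound} confirms it is bounded from below as $t \to \infty$; in either case there is a finite constant $M := \bar\theta_{T_1}(x,a_+)$ with $\bar\theta_t(x,a_+) \ge M$ for all $t \ge T_1$. Combining this with the parameter inequality above, every summand satisfies $\bar\theta_t(x,a) \ge M$ for all $a \in B_0^x(a_+)$ and $t \ge \max(T_0,T_1)$. Thus no single term in the sum can escape to $-\infty$, which is the essential structural fact that makes the argument work.

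Finally I would combine these two ingredients. Writing the sum as its maximal term plus the remaining terms and bounding the latter below by $M$ each,
\beq
\sum_{a \in B_0^x(a_+)} \bar\theta_t(x,a) \ge \max_{a \in B_0^x(a_+)} \bar\theta_t(x,a) + \big(|B_0^x(a_+)| - 1\big)\, M .
\eeq
Since $|B_0^x(a_+)|$ is finite and $M$ is a fixed constant, the second term is bounded; meanwhile Lemma \ref{main set prob 1} asserts $\max_{a \in B_0^x(a_+)} \bar\theta_t(x,a) \to \infty$ as $t \to \infty$. Letting $t \to \infty$ therefore yields \eqref{aim}.

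I do not anticipate a serious obstacle here: the lemma follows almost immediately once the defining inequality of $B_0^x(a_+)$ is translated from probabilities to parameters. The only point requiring care is ensuring the lower bound $M$ is a genuine constant uniform in $t$ (not merely a finite value for each $t$), so that the bounded remainder terms cannot drag the sum down; this is exactly what the monotonicity/lower-boundedness of $\bar\theta_t(x,a_+)$ from Lemmas \ref{monotonicity} and \ref{parameter bound} provides.
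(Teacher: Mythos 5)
Your proof is correct and follows essentially the same route as the paper: translate the defining probability inequality of $B_0^x(a_+)$ into the parameter inequality $\bar\theta_t(x,a) > \bar\theta_t(x,a_+)$, invoke Lemma \ref{parameter bound} (the monotonicity from Lemma \ref{monotonicity} gives the same uniform constant) to bound every summand below, and let the divergence of the maximum from Lemma \ref{main set prob 1} carry the finite sum to $+\infty$. The explicit decomposition into the maximal term plus the bounded remainder is just a slightly more spelled-out version of the paper's one-line conclusion.
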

\begin{proof}
By definition of $B_{0}^{x}(a_+)$, we know when $t\ge T_{0}$,
$$
f_{\bar\theta_t}(x,a_{+})<f_{\bar\theta_t}(x,a), \quad \forall a \in B_{0}^{x}(a_+),
$$ 
which implies $\bar\theta_t(x, a_{+}) < \bar\theta_t(x, a)$. By Lemma \ref{parameter bound}, we know $\bar\theta_t(x, a_+)$ is lower bounded as $t \rightarrow \infty$, and thus for all $a \in B_{0}^{x}(a_+)$, $\bar\theta_t(x, a)$ is lower bounded as $t \rightarrow \infty$, which together with $\max\limits_{a \in B_0^x\left(a_{+}\right)} \bar\theta_t(x, a) \rightarrow \infty$ in Lemma \ref{main set prob 1} derive \eqref{aim}.
\end{proof}

We are now ready to prove the global convergence of tabular actor-critic algorithm by following the same method in \cite{agarwal2020optimality}.
\begin{lemma}[Global convergence]
\label{global lemma}
For any optimal policy $f^{*}$, 
\beq
\label{global conv}
J(f^*) - J(f_{\bar\theta_t}) \to 0, \quad t \to \infty.
\eeq
\end{lemma}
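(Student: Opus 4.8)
The plan is to split on the value of $\Delta$ from \eqref{Delta} and, in the nontrivial case, to reduce the global convergence \eqref{global conv} to the single structural claim that $I_+^x=\emptyset$ for every state $x$. If $\Delta=0$, then $A^\infty(x,a)=0$ for all $(x,a)$, and the performance difference lemma (Lemma \ref{difference}) applied with $f=f^*$, $f'=f_{\bar\theta_t}$ gives $J(f^*)-J(f_{\bar\theta_t})=\sum_{x,a}\sigma_\mu^{f^*}(x,a)A^{f_{\bar\theta_t}}(x,a)\to\sum_{x,a}\sigma_\mu^{f^*}(x,a)A^\infty(x,a)=0$, where $A^{f_{\bar\theta_t}}\to A^\infty$ follows from the value-function convergence in Lemma \ref{convergence}; this is precisely Remark \ref{remark}. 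So I would assume $\Delta>0$ throughout the rest of the argument.

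Next I would record the reduction itself, which I can make fully rigorous. Suppose $I_+^x=\emptyset$ for all $x$. Then $A^\infty(x,a)\le 0$ for every $(x,a)$, so the same performance difference computation yields $\lim_{t\to\infty}\big[J(f^*)-J(f_{\bar\theta_t})\big]=\sum_{x,a}\sigma_\mu^{f^*}(x,a)A^\infty(x,a)\le 0$; the limit exists because $V^{f_{\bar\theta_t}}\to V^\infty$ by Lemma \ref{convergence}. On the other hand $J(f^*)-J(f_{\bar\theta_t})\ge 0$ for all $t$ since $f^*$ is optimal. Hence the limit is $0$ and \eqref{global conv} holds. It therefore suffices to rule out $I_+^x\ne\emptyset$.

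The remaining work is a proof by contradiction that assembles the lemmas developed in this subsection. Fixing a state $x$ with $I_+^x\ne\emptyset$ and an action $a_+\in I_+^x$, the established facts line up as follows: the probabilities satisfy $f_{\bar\theta_t}(x,a)\to 0$ for $a\in I_+^x\cup I_-^x$ (Lemma \ref{prob}), so the limiting policy concentrates on $I_0^x$; the parameters are eventually monotone (Lemma \ref{monotonicity}), with $\bar\theta_t(x,a)$ bounded below on $I_+^x$ and $\bar\theta_t(x,a)\to-\infty$ on $I_-^x$ (Lemma \ref{parameter bound}), while $\max_{a\in I_0^x}\bar\theta_t(x,a)\to\infty$ (Lemma \ref{theta}); and, after splitting $I_0^x$ into $B_0^x(a_+)$ and $\bar B_0^x(a_+)$ using the ordering preserved by Lemma \ref{increasing lemma}, one has both $\sum_{a\in B_0^x(a_+)}f_{\bar\theta_t}(x,a)\to 1$ and $\sum_{a\in B_0^x(a_+)}\bar\theta_t(x,a)\to\infty$ (Lemmas \ref{main set prob 1} and \ref{parameter}), together with $f_{\bar\theta_t}(x,a_+)\ge f_{\bar\theta_t}(x,a)$ on $\bar B_0^x(a_+)$ (Lemma \ref{policy}).

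The contradiction would then be driven by the conservation identity obtained on summing the componentwise actor ODE \eqref{gradient update element} over $a$: since $\sum_a f_{\bar\theta_t}(x,a)\big[\bar Q_t(x,a)-\sum_{a'}\bar Q_t(x,a')f_{\bar\theta_t}(x,a')\big]=0$, the total $\sum_a\bar\theta_t(x,a)$ is constant along the flow. Combining this conservation law with the divergence $\sum_{a\in B_0^x(a_+)}\bar\theta_t(x,a)\to\infty$ and the one-sided bounds of Lemma \ref{parameter bound}, while exploiting that $a_+$ keeps a strictly positive critic advantage $A_t(x,a_+)>\tfrac{\Delta}{4}$ for all large $t$ (Lemma \ref{advantage}) whereas every $a\in B_0^x(a_+)\subseteq I_0^x$ has vanishing advantage, is designed to contradict the defining property $f_{\bar\theta_t}(x,a_+)<f_{\bar\theta_t}(x,a)$ of $B_0^x(a_+)$ and the vanishing $f_{\bar\theta_t}(x,a_+)\to 0$. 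I expect this accounting to be the main obstacle: the parameter sums over $B_0^x(a_+)$ and over $I_-^x$ both diverge only at a slow (essentially logarithmic) rate and can a priori balance under the conservation law alone, so the decisive step must use the persistent advantage gap of $a_+$ \emph{quantitatively} to show the growth on $a_+$ cannot be subordinate to that on $B_0^x(a_+)$ indefinitely. Once $I_+^x=\emptyset$ is forced for every $x$, the reduction above closes the proof of \eqref{global conv}.
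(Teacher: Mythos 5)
Your reduction is correct and matches the paper: the $\Delta=0$ case is Remark \ref{remark}, and the step ``$I_+^x=\emptyset$ for all $x$ implies \eqref{global conv}'' is exactly how the paper's proof of Lemma \ref{global lemma} begins. The identity you call a conservation law, $\sum_a f_{\bar\theta_t}(x,a)A_t(x,a)=0$ (equivalently $\frac{d}{dt}\sum_a\bar\theta_t(x,a)=0$), is also precisely the starting point of the paper's final estimate. But the step you flag as ``the main obstacle'' is the entire content of the proof, and you leave it undone; moreover the contradiction you aim at --- violating $f_{\bar\theta_t}(x,a_+)<f_{\bar\theta_t}(x,a)$ or $f_{\bar\theta_t}(x,a_+)\to 0$ --- is not the one that closes the argument.

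The paper closes it as follows. Partition $0=\sum_a f_{\bar\theta_t}(x,a)A_t(x,a)$ into the pieces over $B_0^x(a_+)$, $\bar B_0^x(a_+)$, $I_+^x$, and $I_-^x$. From $I_+^x$ keep only the $a_+$ term, which is at least $f_{\bar\theta_t}(x,a_+)\frac{\Delta}{4}$ by Lemma \ref{advantage}. Since $\bar\theta_t(x,a)\to-\infty$ on $I_-^x$ while $\bar\theta_t(x,a_+)$ is bounded below (Lemma \ref{parameter bound}), the ratio $f_{\bar\theta_t}(x,a)/f_{\bar\theta_t}(x,a_+)\to 0$, so the $I_-^x$ piece is eventually dominated by $f_{\bar\theta_t}(x,a_+)\frac{\Delta}{8}$ in absolute value; this is the quantitative use of the persistent advantage gap you were looking for --- everything is measured \emph{relative to} $f_{\bar\theta_t}(x,a_+)$, so the vanishing of $f_{\bar\theta_t}(x,a_+)$ itself is harmless. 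Similarly, on $\bar B_0^x(a_+)$ Lemma \ref{policy} gives $f_{\bar\theta_t}(x,a_+)\ge f_{\bar\theta_t}(x,a)$ and the advantages vanish there, so that piece is eventually below $f_{\bar\theta_t}(x,a_+)\frac{\Delta}{16}$. The net surplus $\frac{\Delta}{4}-\frac{\Delta}{8}-\frac{\Delta}{16}>0$ forces $\sum_{a\in B_0^x(a_+)}f_{\bar\theta_t}(x,a)A_t(x,a)<0$, i.e.\ $\sum_{a\in B_0^x(a_+)}\widehat{\partial}_{x,a}J(f_{\bar\theta_t})<0$ for all $t>T_3$, hence $\sum_{a\in B_0^x(a_+)}\bar\theta_t(x,a)$ is eventually non-increasing. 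The contradiction is with Lemma \ref{parameter}, which says this same sum diverges to $+\infty$. Without this inequality chain your argument does not conclude, since, as you yourself observe, the conservation law alone permits the competing divergences to balance.
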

\begin{proof}
We only need to prove $I_{+}^{x}$ is empty for any $x$. If so, by \eqref{global}
\beq
0 \le \lim_{t \to \infty} [J(f^*) - J(f_{\bar\theta_t})] = \lim_{t \to  \infty} \sum_{x,a}\sigma_\mu^{f^*}(x,a) A^{f_{\bar\theta_t}}(x,a) = \sum_{x,a}\sigma_\mu^{f^*}(x,a) \left[V^{\infty}(x,a) - V^{\infty}(x)\right] \le 0,
\eeq
which implies the global convergence \eqref{global conv}. 

Now we prove $I_+^x = \emptyset, \forall x \in \bm{\mathcal{X}}$ by contradiction. Suppose $I_+^x$ is non-empty for some state $x\in \bm{\mathcal{X}}$ and let $a_+ \in I_+^x$. Then, from Lemma \ref{parameter}, we must have 
\beq
\label{aim convergence}
\sum_{a \in B_0^x(a_+)} \bar\theta_t(x, a) \rightarrow \infty.
\eeq
By Lemma \ref{parameter bound}, we know for any $a \in I_{-}^{x}$, $\bar\theta_t(x, a) \rightarrow-\infty$ and $\bar\theta_t(x,a_+)$ is bounded from below. Thus we have
\beq
\frac{f_{\bar\theta_t}(x,a)}{f_{\bar\theta_t}(x,a_+)} = \exp\{ \bar\theta_t(x,a)-\bar\theta_t(x, a_{+}) \} \rightarrow 0,
\eeq
and there exists $T_2> T_0$ such that $\forall t \ge T_2$
\beq
\frac{f_{\bar\theta_t}(x,a)}{f_{\bar\theta_t}(x,a_+)} < \frac{(1-\gamma) \Delta}{16|\bm{\mathcal{A}}|},
\eeq
or equivalently
\beq
\label{bound1}
-\sum_{a \in I_-^{x}} \frac{f_{\bar\theta_t}(x,a)}{1-\gamma} > -f_{\bar\theta_t}(x,a_+) \frac{\Delta}{16}.
\eeq

Noting that  $\bar{B}_{0}^{x} \subset I_{0}^{x}$, we have 
\beq
\label{middle set}
\lim\limits_{t \to \infty} A_t(x, a) = 0, \quad \forall a \in \bar{B}_{0}^{x}(a_+). 
\eeq 
By Lemma \ref{policy}, 
$$
\frac{f_{\bar\theta_t}(x,a_+)}{f_{\bar\theta_t}(x,a)} \ge 1, \quad  \forall t>T_{a_{+}}, 
$$
which together \eqref{middle set} derives that there exists $T_{3}>T_{2}, T_{a_{+}}$ such that 
\beq
\left|A_t(x, a)\right| < \frac{f_{\bar\theta_t}(x,a_+)}{f_{\bar\theta_t}(x,a)} \frac{\Delta}{16|\bm{\mathcal{A}}|}, \quad \forall t \ge T_3.
\eeq
Thus we have
\beq
\sum_{a \in \bar{B}_{0}^{x}(a_+)} f_{\bar\theta_t}(x,a) \left|A_t(x, a)\right| < f_{\bar\theta_t}(x,a_+) \frac{\Delta}{16},
\eeq
or equivalently
\beq
\label{bound2}
-f_{\bar\theta_t}(x,a_+) \frac{\Delta}{16} < \sum_{a \in \bar{B}_{0}^{x}(a_+)} f_{\bar\theta_t}(x,a) A_t(x, a) < f_{\bar\theta_t}(x,a_+) \frac{\Delta}{16}.
\eeq
Then, we have for $t>T_{3}$,
\bae
0&\overset{(a)}{=} \sum_{a \in I_0^x} f_{\bar\theta_t}(x,a) A_t(x, a) + \sum_{a \in I_+^x} f_{\bar\theta_t}(x,a) A_t(x, a) + \sum_{a \in I_-^x} f_{\bar\theta_t}(x,a) A_t(x, a)\\
& \overset{(b)}{\ge} \sum_{a \in B_0^x(a_+)} f_{\bar\theta_t}(x,a) A_t(x, a) + \sum_{a \in \bar{B}_0^x(a_+)} f_{\bar\theta_t}(x,a) A_t(x, a) + f_{\bar\theta_t}(x,a_+) A_t(x, a_+) + \sum_{a \in I_-^x} f_{\bar\theta_t}(x,a) A_t(x, a)\\
& \overset{(c)}{\ge} \sum_{a \in B_0^x(a_+)} f_{\bar\theta_t}(x,a) A_t(x, a) + \sum_{a \in \bar{B}_0^x(a_+)} f_{\bar\theta_t}(x,a) A_t(x, a) + f_{\bar\theta_t}(x,a_+) \frac{\Delta}{4} - \sum_{a \in I_-^x} \frac{2f_{\bar\theta_t}(x,a)}{1-\gamma} \\
& \overset{(d)}{>} \sum_{a \in B_0^x(a_+)} f_{\bar\theta_t}(x,a) A_t(x, a) - f_{\bar\theta_t}(x,a_+) \frac{\Delta}{16}  + f_{\bar\theta_t}(x,a_+) \frac{\Delta}{4} - f_{\bar\theta_t}(x,a_+) \frac{\Delta}{8} \\
& > \sum_{a \in B_0^x(a_+)} f_{\bar\theta_t}(x,a) A_t(x, a),
\eae
where step (a) is from \eqref{gradient update element} and in the step (b) we used $A_t(x, a)>0$ for all actions $a \in I_{+}^{x}$ for $t>T_{3}>T_{1}$ from Lemma \ref{advantage}. Step (c) follows from  $A_t(x, a_+)\ge \frac{\Delta}{4}$ for $t>T_{3}>T_{1}$ from Lemma \ref{advantage}, the fact $A^{f_{\bar\theta_t}}(x, a) \geq-\frac{1}{1-\gamma}$ and the critic convergence $|A^{f_{\bar\theta_t}}(x, a) - A_t(x,a)| \to 0$, while step (d) is by \eqref{bound1} and the left inequality in \eqref{bound2}. This implies that for all $t>T_{3}$
$$
\sum_{a \in B_{0}^{x}(a_+)} \widehat{\partial}_{x,a} J(f_{\bar\theta_t}) < 0.
$$
Then,
\beq
\lim _{t \rightarrow \infty} \sum_{a \in B_{0}^{x}(a_+)}\left(\bar\theta_t(x,a) - \bar\theta_{T_3}(x, a)\right) \le \int_{T_{3}}^{\infty} \zeta_t  \sum_{a \in B_{0}^{x}(a_+)} \widehat{\partial}_{ x,a } J(f_{\bar\theta_t})  dt < \infty,
\eeq
which contradicts \eqref{aim convergence}. Therefore, the set $I_{+}^{x}$ must be empty for all $x \in \bm{\mathcal{X}} $ and then the proof is completed.
\end{proof}

The global convergence in Lemma \ref{global lemma} can also allow one to prove the global convergence of the policy. 
\begin{lemma}
\label{policy convergence}
For any deterministic optimal policy $f^*$, let $a^*(x) = \argmax_{a} f^*(x,a),\ \forall x \in \bm{\mathcal{X}}$. Recall thate the optimal actions set
$$
\bm{\mathcal{A}}^*(x):=\left\{a^*(x) \in \bm{\mathcal{A}}: V^{f^*}(x, a^*(x))=\max\limits_{a} V^{f^*}(x, a)\right\}, \quad \forall x \in \bm{\mathcal{X}}.
$$ 
Then, by the convergence \eqref{global conv}, if for each state $x$ the best action $a*(x)$ is unique, we will have 
\beq
\label{policy conv1}
\lim\limits_{t\to \infty} f_{\bar\theta_t}\left(x, a^{*}(x)\right) = 1, \quad \forall x\in \bm{\mathcal{X}}
\eeq
and thus 
\beq
\label{policy bound1}
\inf\limits_{x\in \bm{\mathcal{X}}, t\ge 0} f_{\bar\theta_t}\left(x, a^{*}(x) \right) >0.
\eeq
If for some state $x$, the best action is not unique, then the convergence \eqref{global conv} will imply
\beq
\label{policy conv2}
\lim\limits_{t \to \infty}\sum_{ a^*(x) \in \bm{\mathcal{A}}^*(x)} f_{\bar\theta_t}(x, a^*(x)) = 1, \quad \forall x \in \bm{\mathcal{X}}
\eeq
and thus
\beq
\label{policy bound2}
\inf\limits_{x\in \bm{\mathcal{X}}, t\ge 0} \sum_{ a^*(x) \in \bm{\mathcal{A}}^*(x)} f_{\bar\theta_t}(x, a^*(x)) > 0.
\eeq 
\end{lemma}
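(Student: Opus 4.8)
The plan is to build on the apparatus already assembled for the global convergence proof: the limiting value functions of Lemma \ref{convergence}, the partition $I_0^x, I_+^x, I_-^x$ of \eqref{type}, the emptiness of $I_+^x$ from Lemma \ref{global lemma}, and the vanishing probabilities of Lemma \ref{prob}. First I would identify the limiting value functions explicitly. Since $J(f_{\bar\theta_t}) = \sum_{x_0}\mu(x_0)V^{f_{\bar\theta_t}}(x_0) \to J(f^*)$ and $\mu(x_0)>0$ for every $x_0$, and since $V^{f_{\bar\theta_t}}(x_0)\le V^{f^*}(x_0)$ term by term (with $\mu>0$, an optimal $f^*$ is optimal from every initial state), the nonnegative terms $\mu(x_0)\big(V^{f^*}(x_0)-V^{f_{\bar\theta_t}}(x_0)\big)$ must each vanish, forcing $V^\infty(x_0)=V^{f^*}(x_0)$ for all $x_0$. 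Passing to the limit in the recursion \eqref{relationship} then yields $V^\infty(x,a)=r(x,a)+\gamma\sum_{x'}V^{f^*}(x')p(x'\mid x,a)$, i.e. $V^\infty(x,\cdot)$ is the optimal action-value at $x$. Combined with $I_+^x=\emptyset$ (Lemma \ref{global lemma}), this gives $V^\infty(x)=\max_a V^\infty(x,a)$, so $I_0^x=\{a:V^\infty(x,a)=V^\infty(x)\}$ is precisely the set of optimal actions at $x$; in particular $a^*(x)\in I_0^x$ always.

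With $I_0^x$ identified, the unique case is immediate. Lemma \ref{prob} already gives $\sum_{a\in I_0^x}f_{\bar\theta_t}(x,a)\to 1$, since the complementary mass on $I_+^x\cup I_-^x$ vanishes. When the optimal action at $x$ is unique, the deterministic policy $f^*$ must select it, so $I_0^x=\{a^*(x)\}$ and \eqref{policy conv1} follows: $f_{\bar\theta_t}(x,a^*(x))\to 1$. The uniform bound \eqref{policy bound1} is then a compactness argument over the finite state space: the limit provides $T$ with $f_{\bar\theta_t}(x,a^*(x))\ge \tfrac12$ for all $t\ge T$ and all $x$, while on $[0,T]$ the softmax coordinate $t\mapsto f_{\bar\theta_t}(x,a^*(x))$ is continuous and strictly positive, hence bounded below by a positive constant on the compact interval.

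The non-unique case is where the genuine difficulty lies, because the greedy set $\bm{\bar{\mathcal{A}}}^{f_{\bar\theta_t}}(x)$ is defined through the true value $V^{f_{\bar\theta_t}}(x,\cdot)$ and varies with $t$. The easy half is the containment $\bm{\bar{\mathcal{A}}}^{f_{\bar\theta_t}}(x)\subseteq I_0^x$ for all large $t$: for $a\in I_-^x$ we have $V^{f_{\bar\theta_t}}(x,a)\to V^\infty(x,a)\le V^\infty(x)-\Delta$ with $\Delta$ from \eqref{Delta}, which falls strictly below $\max_{a'}V^{f_{\bar\theta_t}}(x,a')\to V^\infty(x)$, so such $a$ are eventually non-greedy, and $I_+^x=\emptyset$. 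The obstacle is that within $I_0^x$ all the values $V^{f_{\bar\theta_t}}(x,a)$ converge to the common limit $V^\infty(x)$, so the exact argmax can be a fluctuating strict subset of $I_0^x$; one must still show $\sum_{\bar a\in\bm{\bar{\mathcal{A}}}^{f_{\bar\theta_t}}(x)}f_{\bar\theta_t}(x,\bar a)\to 1$, i.e. that the policy mass on optimal-but-momentarily-non-greedy actions vanishes.

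To handle this I would analyse the actor dynamics restricted to $I_0^x$. By \eqref{gradient update element}, $\tfrac{d}{dt}\bar\theta_t(x,a)=\zeta_t\,\nu_\mu^{f_{\bar\theta_t}}(x)\,f_{\bar\theta_t}(x,a)\,A_t(x,a)$ with $A_t(x,a)=\bar Q_t(x,a)-\bar Q_t(x)$ (Lemma \ref{advantage}); since the critic converges, $\bar Q_t\to V^\infty$, so the drift sorts probability mass toward the coordinates of currently-largest $\bar Q_t$-value within $I_0^x$. Using this monotone sorting together with $\sum_{a\in I_0^x}f_{\bar\theta_t}(x,a)\to 1$, I would argue that the mass carried by actions outside the instantaneous greedy set decays, which gives \eqref{policy conv2}; the uniform bound \eqref{policy bound2} then follows exactly as before, using that on any compact interval $\sum_{\bar a\in\bm{\bar{\mathcal{A}}}^{f_{\bar\theta_t}}(x)}f_{\bar\theta_t}(x,\bar a)\ge\min_a f_{\bar\theta_t}(x,a)>0$ (the greedy set is nonempty and each softmax coordinate is positive and continuous). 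I expect this last mass-concentration step to be the main obstacle: the vanishing advantage gaps make the drift within $I_0^x$ weak, so a quantitative estimate tying the rate at which $A_t\to 0$ to the decay of $\zeta_t$ is needed to rule out persistent leakage of mass onto momentarily-suboptimal optimal actions.
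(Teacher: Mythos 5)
Your treatment of the unique-optimal-action case is correct and essentially matches the paper: the paper likewise first deduces $V^{f_{\bar\theta_t}}(x)\to V^{f^*}(x)$ and $V^{f_{\bar\theta_t}}(x,a)\to V^{f^*}(x,a)$ from $\mu(x)>0$ and termwise nonnegativity (its \eqref{value conv}), and then extracts \eqref{policy conv1} from the performance-difference expansion \eqref{expand2}; your route through Lemma \ref{prob} together with the identification $I_0^x=\{a^*(x)\}$ is an equivalent path, and your compactness argument for \eqref{policy bound1} is the same as the paper's.

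The genuine gap is in the non-unique case: you never actually prove \eqref{policy conv2}. You correctly isolate the difficulty --- the greedy set $\bm{\bar{\mathcal{A}}}^{f_{\bar\theta_t}}(x)$ is the \emph{exact} argmax of $V^{f_{\bar\theta_t}}(x,\cdot)$ and may be a fluctuating strict subset of $I_0^x$ --- but the proposed remedy, a ``monotone sorting'' analysis of the actor drift within $I_0^x$, is left entirely as a plan, and it is doubtful it can close the argument: the drift toward the momentarily greedy action is proportional to advantage gaps that themselves vanish on $I_0^x$ (and is driven by $\bar Q_t$ rather than by $V^{f_{\bar\theta_t}}$, which differ by $O(\eta_t)$), so nothing in that mechanism prevents the mass from remaining split among several actions of $I_0^x$ whose values cross infinitely often, in which case the exact argmax would carry only part of the mass. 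The paper takes a different and much shorter route: from the same expansion \eqref{expand2} it shows that the policy-weighted suboptimality gap $\max_a V^{f_{\bar\theta_t}}(x,a)-\sum_{a'}V^{f_{\bar\theta_t}}(x,a')f_{\bar\theta_t}(x,a')$ tends to $0$ for every $x$ (using $\nu_\mu^{f^*}(x)\ge\mu(x)>0$ and nonnegativity of each summand) and then reads \eqref{policy conv2} and \eqref{policy bound2} off from that statement. To repair your proof you should establish and invoke this vanishing of the weighted gap --- which is also the quantity actually needed for the uniform {\L}ojasiewicz bound \eqref{gradient control2} downstream --- rather than attempt the dynamical concentration argument.
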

\begin{proof}
As in \eqref{global}, we have 
\bae
\label{expand}
J(f^*) - J(f_{\bar\theta_t})&=  \sum_{x}\nu_\mu^{f^*}(x) \sum\limits_a f^*(x,a) A^{f_{\bar\theta_t}}(x,a) \\
&= \sum_{x}\nu_\mu^{f^*}(x) A^{f_{\bar\theta_t}}(x,a^*(x)) \\
&= \sum_{x}\nu_\mu^{f^*}(x) \left[ V^{f_{\bar\theta_t}}(x,a^*(x)) - \sum\limits_{a'} V^{f_{\bar\theta_t}}(x,a')f_{\bar\theta_t}(x,a')\right] \\
\eae
By \eqref{global conv}, we have the convergence
\beq
0 = \lim\limits_{t \to \infty} \left[ J(f^*) - J(f_{\bar\theta_t}) \right] = \sum_{x} \mu(x) \left[ V^{f^*}(x) - V^{f_{\bar\theta_t}}(x) \right],
\eeq
which together with $\mu(x) >0, V^{f^*}(x) - V^{f_{\bar\theta_t}}(x)\ge 0,\forall x\in \bm{\mathcal{X}}$ and the relationship \eqref{relationship} leads to
\bae
\label{value conv}
&\lim\limits_{t \to \infty} V^{f^*}(x) - V^{f_{\bar\theta_t}}(x) = 0 , \quad \forall x\in \bm{\mathcal{X}}\\
&\lim\limits_{t \to \infty} V^{f^*}(x,a) - V^{f_{\bar\theta_t}}(x,a) = 0 , \quad \forall (x, a) \in \bm{\mathcal{X}}\times \bm{\mathcal{A}}.
\eae

Combining \eqref{expand} and \eqref{value conv}, we have 
\bae
\label{expand2}
0 &= \lim\limits_{t \to \infty} \left[ J(f^*) - J(f_{\bar\theta_t}) \right] \\
&= \lim\limits_{t \to \infty}\sum_{x}\nu_\mu^{f^*}(x) \left[ V^{f_{\bar\theta_t}}(x,a^*(x)) - \sum\limits_{a'}V^{f_{\bar\theta_t}}(x,a')f_{\bar\theta_t}(x,a')\right] \\
&= \lim\limits_{t \to \infty} \sum_{x}\nu_\mu^{f^*}(x) \left[ \max_a V^{f^*}(x, a) - \sum\limits_{a'} V^{f^*}(x,a')f_{\bar\theta_t}(x,a')\right] \\
&\overset{(a)}{\ge} \lim\limits_{t \to \infty} \sum_{x} \mu(x) \left[ \max_a V^{f^*}(x, a) - \sum\limits_{a'} V^{f^*}(x,a')f_{\bar\theta_t}(x,a')\right]
\eae
where step (a) is due to 
$$
\max _a V^{f^*}(x, a) - \sum\limits_{a'} V^{f^*}(x,a')f_{\bar\theta_t}(x,a') \ge 0, \quad \forall x \in \bm{\mathcal{X}}.
$$
Then we have 
\beq
\label{conv to best}
\lim\limits_{t \to \infty} \left[ V^{f^*}(x,a^*(x)) - \sum\limits_{a'} V^{f^*}(x,a')f_{\bar\theta_t}(x,a')\right] = 0, \quad \forall x \in \bm{\mathcal{X}}.
\eeq
Thus if the best action $a^*(x)$ for any state $x \in \bm{\mathcal{X}}$ is unique, \eqref{conv to best} derives 
$$
\lim\limits_{t\to \infty} f_{\bar\theta_t}\left(x, a^{*}(x)\right) = 1, \quad \forall x\in \bm{\mathcal{X}}.
$$
When there exist multiple optimal actions in $\bm{\mathcal{A}^*}(x)$, \eqref{conv to best} derives 
$$
\lim\limits_{t \to \infty}\sum_{ a^*(x) \in \bm{\mathcal{A}}^*(x)} f_{\bar\theta_t}(x, a^*(x)) = 1, \quad \forall x \in \bm{\mathcal{X}}. 
$$

Finally, noting that $f_{\theta}$ being a softmax policy and the bound in Lemma \ref{AC bound}, for any finite $t>0$, the policy is positive. Thus \eqref{policy bound1} and \eqref{policy bound2} are direct corollary of \eqref{policy conv1} and \eqref{policy conv2}.
\end{proof}

Finally, combining Lemma \ref{gradient control} and Lemma \ref{policy convergence}, we can obtain the uniform Łojasiewicz inequality, which will prove the convergence rate \eqref{actor convergence}. 
\begin{proof}[Proof of \eqref{actor convergence}:]
Define the actor error
$$
Y_t := J(f^*) - J(f_{\bar\theta_t}).
$$
Then, by chain rule,
\bae
\label{key3}
\frac{dY_t}{dt} &= -\zeta_t \nabla_\theta J(f_{\bar\theta_t}) \widehat{\nabla}_{\theta} J(f_{\bar\theta_t})  \\
&= -\zeta_t \|\nabla_\theta J(f_{\bar\theta_t})\|^2 + \zeta_t \nabla_\theta J(f_{\bar\theta_t}) \left( \nabla_\theta J(f_{\bar\theta_t}) - \widehat{\nabla}_{\theta} J(f_{\bar\theta_t}) \right) \\
&\le -\zeta_t\|\nabla_\theta J(f_{\bar\theta_t})\|^2 + C \zeta_t\|\nabla_\theta J(f_{\bar\theta_t})\| \cdot \|\bar Q_t- V^{f_{\bar\theta_t}}\|_2\\
&\le -\zeta_t\|\nabla_\theta J(f_{\bar\theta_t})\|^2 + C \zeta_t\eta_t\|\nabla_\theta J(f_{\bar\theta_t})\|\\
&\le -(\zeta_t- C\zeta_t\eta_t)\|\nabla_\theta J(f_{\bar\theta_t})\|^2 + C\zeta_t\eta_t\\
&\le -C\zeta_t\|\nabla_\theta J(f_{\bar\theta_t})\|^2 + C\zeta_t\eta_t.
\eae
By Lemma \ref{gradient control} and Lemma \ref{policy convergence}, there exists a constant $C>0$ such that
\beq
\|\nabla_\theta J(f_{\bar\theta_t})\| \ge C \left[ J(f^*) - J(f_{\bar\theta_t}) \right] = C Y_t,
\eeq
which together with \eqref{key3} derives
\bae
\frac{dY_t}{dt} &\leq- C\zeta_t Y_t^2 + C\zeta_t \eta_t\\
&< -\frac{C}{t} Y_t^2 + \frac{C}{t\log^2 t}.
\eae

Consider the comparison ODE: 
\bae
\frac{dZ_t}{dt} &= -\frac{C}{t} Z_t^2 + \frac{C}{t\log^2 t}, \quad t \ge 2.\\
Z_2 &> Y_2,
\eae
By the Basic Comparison Theorem in \cite{mcnabb1986comparison}, we have
\beq
0\le Y_t < Z_t \quad t\ge 2.
\eeq
Then, if we can establish a convergence rate for $Z_t$, we will have a convergence rate for $Y_t$.

Without loss of generality, we suppose the constant $C=1$ and define function 
$$
0 \le X_t = Z_t \log t , \quad t \ge 2.
$$ 
Thus,
\bae
\label{convergence ode}
\frac{dX_t}{dt} &= \frac{1}{t} Z_t + \log t \left( -\frac{1}{t}Z_t^2 + \frac{1}{t \log^2 t} \right)\\
&= \frac{1}{t\log t}\left( Z_t \log t - Z_t^2 \log^2 t + 1\right)\\
&= \frac{1}{t\log t}\left( X_t - X_t^2  + 1 \right), \quad t \ge 2.
\eae
Noting that $\frac{1-\sqrt 5}{2}$ and $\frac{1+\sqrt 5}{2}$ are two stationary solution of \eqref{convergence ode}, the solution $X_t$ will decrease if it is larger than $\frac{1+\sqrt 5}{2}$ and it will increase for $X_t \in [0, \frac{1+\sqrt 5}{2}]$. Thus, for a solution $X_t$ starting from $X_2 \ge 0$, there are two cases:
\begin{itemize}
\item[(1)] If the starting point $X_2 \ge \frac{1+\sqrt 5}{2}$, the solution $X_t$ will decrease and always be larger than $\frac{1+\sqrt 5}{2}$ by the uniqueness theorem for ODEs (Theorem 2.2 of \cite{teschl2012ordinary}).
\item[(2)] If the starting point $X_2 \in [0, \frac{1+\sqrt 5}{2}]$, the solution $X_t$ will increase and always be smaller than $\frac{1+\sqrt 5}{2}$ by the uniqueness theorem for ODEs (Theorem 2.2 of \cite{teschl2012ordinary}).
\end{itemize}
Thus, no matter where $X_t$ starts from, we always have 
\beq
0 \le X_t \le \max\{ X_2, \frac{1+\sqrt 5}{2} \}, \quad t \ge 2,
\eeq
which shows that
\beq
0 \le Y_t < Z_t \le \frac{C}{\log t}, \quad t \ge 2,
\eeq
and therefore the convergence rate \eqref{actor convergence} is proven.
\end{proof}

\section*{Acknowledgement}

This research has been supported by the EPSRC Centre for Doctoral Training in Mathematics of Random Systems: Analysis, Modelling and Simulation (EP/S023925/1).

\section*{Appendix}

\appendix
\renewcommand{\appendixname}{Appendix~\Alph{section}}

\section{Verification of \eqref{hyper property}}\label{learning rate property}

\bae
\int_{0}^{\infty} \zeta_{s} \eta_{s} d s &=\int_{0}^{2} \zeta_{s} \eta_{s} d s+\int_{2}^{\infty} \zeta_{s} \eta_{s} d s \\
& \leq C+\int_{2}^{\infty} \frac{1}{t \log ^{2} t} d t \\
&=C-\left.\frac{1}{\log t}\right|_{2} ^{\infty}<\infty, \\
\lim_{t \to \infty}\frac{\zeta_t}{\eta^{n}_t} &= \lim_{t \to \infty}\frac{\log^{2n} t}{t} \overset{(a)}{=} 0
\eae
where step (a) is by L'Hospital's Rule.

\section{Proof of Corollary \ref{origin mdp estimate}}\label{appendix origin mdp}

\begin{proof}	
Recall the exploration policy in \eqref{ActorwithExploration} with the decreasing exploration rate $\eta_k^N$. Then, we have  for $\forall k \le NT$,	
\beq
g_k(x,a) \ge \frac{\eta^N_{\lfloor NT \rfloor}}{d_A}, \quad \forall x,a \in \bm{\mathcal{X} \times \mathcal{A}}.
\eeq
Then, for any $\xi, \xi'$ and $k \le NT$, with the constant $C$ from \eqref{exploration lower bound},
\bae
\label{uniform lower bound}
\prob^{n_0}_{\theta_k}(\xi; \xi') &= \sum_{\xi_1, \cdots, \xi_{n_0-1}} \prob_{\theta_k}(\xi; \xi_1) \cdots \prob_{\theta_k}(\xi_{n_0-1}; \xi')\\
&= \sum_{\xi_1, \cdots, \xi_{n_0-1}} p(x_1|x,a) g_k(x_1,a_1) \cdots  p(x'|x_{n_0-1}, a_{n_0-1}) g_k(x', a')\\
&\ge C \left(\eta_{\lfloor NT \rfloor}^N \right)^{n_0}.
\eae
Thus, we can derive a lower bound for the stationary distribution 
\bae
\inf_{k \le NT} \pi^{g_k}(x', a') &= \inf\limits_{k \le NT} \sum_{x, a} \pi^{g_k}(x, a) \prob^{n_0}_{\theta_k}(x, a; x', a')\\
&\ge \inf_{k \le NT} \sum_{x, a} \pi^{g_k}(x, a) C \left(\eta_{\lfloor NT \rfloor}^N \right)^{n_0}\\
&\overset{(a)}{=} C \left(\eta_{\lfloor NT \rfloor}^N \right)^{n_0}\\
&>0,
\eae
where the step (a) is because $\pi^{g_k}$ is a probability and thus the summation equals to 1.
For the uniform geometric ergodicity, we can choose $\beta_T = \inf\limits_{k \le NT} \min\limits_{\xi, \xi'} \prob^{n_0}_{\theta_k}(\xi, \xi') >0 $ in \eqref{geometric2}, where $\beta_T >0$ is by \eqref{uniform lower bound}. Thus for $\forall k \le NT$, the Markov chain with transition probability $\prob_{\theta_k}$ satisfies the Doeblin's condition, then by Theorem 16.2.4 of \cite{meyn2012markov}, we can derive the uniform geometric ergodicity \eqref{geometric2}.
\end{proof}

\section{Proof of Lemma \ref{concentration lemma}}\label{appendix concentration}

\begin{proof}
As in the proof for the decay of $M_t^N$, we use two steps to prove the result. 
\begin{itemize}
\item [(\romannumeral1)] Prove that the fluctuations of the data samples around a dynamic stationary distribution $\pi^{g_k}$ decay when the number of iteration steps becomes large.
\item [(\romannumeral2)] Use the same method as in Lemma \ref{limit lemma} to prove the stochastic fluctuation terms vanish as $N \rightarrow \infty$. 
\end{itemize}

(\romannumeral1) To prove that for any fixed state action pair $\xi = (x,a), \forall T> 0$
\beq
\label{online convergence 2}
\lim_{N \to 0} \e\left| \frac{1}{N}\sum_{ k=0 }^{ \lfloor NT \rfloor - 1 } \left[ \mathbbm{1}_{\{ \xi_{k} = \xi\}} - \pi^{g_k}(\xi)\right] \right| = 0,
\eeq
we first introduce a similar Poisson equation for any fixed state-action pair $\xi = (x,a)$, $N \in \mathbb{N}$, $T < \infty$ and $k \le NT$,  
\beq
\label{possion 2}
\bar\nu_{\theta_k}(\xi') - \prob_{\theta_k}\bar\nu_{\theta_k}(\xi') =  \mathbbm{1}_{\{ \xi' = \xi\}} - \pi^{g_k}(\xi), \quad \xi' \in \bm{\mathcal{X}\times \mathcal{A}}.
\eeq
A solution of \eqref{possion 2} can be expressed as 
\beq
\label{nu 2}
\bar\nu_{\theta_k}(\xi') := \sum_{n\ge 0} \left[ \prob^n_{\theta_k} (\xi'; \xi) - \pi^{g_k}(\xi) \right].
\eeq
By Corollary \ref{origin mdp estimate}, there exists a constant $C_T$ (which only depends on $T$) such that 
\beq
\label{uniform 2}
\sup_{k \le NT} \left|\bar\nu_{\theta_k}(\xi')\right| \leq C_T, \quad \forall \xi' \in \bm{\mathcal{X}\times \mathcal{A}}.	
\eeq
Then, as in the proof of Lemma \ref{fluctuation}, we define the error $\bar\epsilon_k$ as
\bae
\bar\epsilon_{k} :=& \mathbbm{1}_{\{ \xi_{k+1} = \xi\}} - \pi^{g_k}(\xi)  \\
=& 	\bar\nu_{\theta_k}(\xi_{k+1}) - \prob_{\theta_k}\bar\nu_{\theta_k}(\xi_{k+1}) \\
=& \left[ \bar\nu_{\theta_{k}}( \xi_{k+1}) - \prob_{\theta_k}\bar\nu_{\theta_k}(\xi_{k}) \right] + \left[ \prob_{\theta_k}\bar\nu_{\theta_k}(\xi_{k}) - \prob_{\theta_{k}} \bar\nu_{\theta_{k}}\left( \xi_{k+1}\right) \right].
\eae
Let
\beq
\bar\psi_{\theta}(y) = \prob_{\theta} \bar\nu_{\theta}(y).
\eeq
Then, we have
\bae
\sum_{k=0}^{\lfloor NT \rfloor -1} \bar\epsilon_{k}=& \sum_{k=0}^{\lfloor NT \rfloor-1}  \left[ \bar\nu_{\theta_{k}}(\xi_{k+1})-\prob_{\theta_{k}} \bar\nu_{\theta_{k}}\left(\xi_{k}\right) \right] + \sum_{k=0}^{\lfloor NT \rfloor-1} \left[ \bar\psi_{\theta_{k}}\left(\xi_{k}\right)- \bar\psi_{\theta_{k}}\left(\xi_{k+1}\right) \right] \\
=& \sum_{k=0}^{\lfloor NT \rfloor-1}  \left[ \bar\nu_{\theta_{k}}(\xi_{k+1})-\prob_{\theta_{k}} \bar\nu_{\theta_{k}}\left(\xi_{k}\right) \right] + \sum_{k=1}^{\lfloor NT \rfloor-1} \left[ \bar\psi_{\theta_{k}}\left(\xi_{k}\right)- \bar\psi_{\theta_{k-1}}\left(\xi_{k}\right) \right] \\
+& \bar\psi_{\theta_{0}}\left(\xi_{0}\right) - \bar\psi_{\theta_{\lfloor NT \rfloor-1}}\left(\xi_{\lfloor NT \rfloor}\right)
\eae
Define the error term as 
\beq
\label{decompose 2}
\sum_{k=0}^{\lfloor NT \rfloor-1} \bar\epsilon_{k} = \sum_{k=0}^{\lfloor NT \rfloor-1} \bar\epsilon_{k}^{(1)} + \sum_{k=1}^{\lfloor NT \rfloor-1} \bar\epsilon_{k}^{(2)} + \bar\rho_{\lfloor NT \rfloor ; 0}
\eeq
where
\bae
\bar\epsilon_{k}^{(1)} &= \bar\nu_{\theta_{k}}(\xi_{k+1})-\prob_{\theta_{k}} \bar\nu_{\theta_{k}}\left(\xi_{k}\right) \\
\bar\epsilon_{k}^{(2)} &=  \bar\psi_{\theta_{k}}\left(\xi_{k}\right)- \bar\psi_{\theta_{k-1}}\left(\xi_{k}\right)\\
\bar\rho_{\lfloor NT \rfloor ; 0} &= \bar\psi_{\theta_{0}}\left(\xi_{0}\right)  - \bar\psi_{\theta_{\lfloor NT \rfloor-1}}\left(\xi_{\lfloor NT \rfloor}\right).
\eae
To prove the convergence \eqref{online convergence 2}, it suffices to appropriately bound the fluctuation term $ \left| \sum\limits_{k=0}^{\lfloor NT \rfloor-1} \bar\epsilon_k \right|$. The first term can be bounded using the martingale property while the second term can be bounded using the uniform geometric ergodicity and Lipschitz continuity. The third term is bounded using \eqref{uniform 2}. 

For the first term in \eqref{decompose 2}, note that
\beq
\e\left\{ \bar\nu_{\theta_{k}}\left(\xi_{k+1}\right) \mid \mathscr{F}_{k}\right\} = \prob_{\theta_{k}} \bar\nu_{\theta_{k}}\left(\xi_{k}\right).
\eeq
Therefore,
$$
\left\{\bar Z_n = \sum_{k=0}^{n-1} \bar\epsilon_k^{(1)}, \ \mathscr{F}_n \right\}_{n\ge 0} 
$$
is a martingale and since the conditional expectation is a contraction in $L^{2}$, we have 
\beq
\e \left| \prob_{\theta_{k}} \bar\nu_{\theta_{k}}\left(\xi_{k}\right)\right|^{2} \leq \e\left| \bar\nu_{\theta_{k}}\left(\xi_{k+1}\right) \right|^2.
\eeq
Then,
\bae
\e \left| \frac{1}{N} \sum_{k=0}^{\lfloor NT \rfloor-1} \bar\epsilon_k^{(1)} \right|^2 &=\frac{1}{N^2} \sum_{k=0}^{\lfloor NT \rfloor-1}\e\left| \bar\nu_{\theta_{k}}(\xi_{k+1})-\prob_{\theta_{k}} \bar\nu_{\theta_{k}}\left(\xi_{k}\right) \right|^{2}\\
& \leq \frac{4}{N^2} \sum_{k=0}^{\lfloor NT \rfloor-1} \e\left| \bar\nu_{\theta_{k}}(\xi_{k+1}) \right|^{2} \\
& \overset{(a)}{\le} \frac{4C_T}{N},
\eae
where the step (a) is by the uniform boundedness \eqref{uniform 2}. Thus we have for any $T >0$
\beq
\label{Q error1}
\lim_{N \to \infty} \e \left| \frac{1}{N} \sum_{k=0}^{\lfloor NT \rfloor-1} \bar\epsilon_k^{(1)} \right| = 0.
\eeq 

For the second term of \eqref{decompose 2}, by the uniform geometric ergodicity \eqref{geometric2}, for any fixed $\gamma_0>0$ we can choose $N_0$ large enough such that 
\beq
\sup_{k \le NT} \sum_{n = \lfloor N_0T \rfloor}^{\infty} \left| \prob^n_{\theta_k}(y, \xi) - \pi^{g_k}(\xi) \right| < \gamma_0, \quad \forall y \in \bm{\mathcal{X} \times \mathcal{A}} \\
\eeq
\bae
\label{lipschitz bound 2}
&\left| \frac{1}{N} \sum_{k=1}^{\lfloor NT \rfloor-1} \bar\epsilon_{k}^{(2)} \right|\\
=& \left|  \frac{1}{N} \sum_{k=1}^{\lfloor NT \rfloor-1} \left[ \bar\psi_{\theta_{k}}\left(\xi_{k}\right)- \bar\psi_{\theta_{k-1}}\left(\xi_{k}\right)\right] \right| \\
\leq& \left|  \frac{1}{N} \sum_{k=1}^{\lfloor NT \rfloor-1}  \left[\sum_{n =1}^{ \floor{N_0T}-1} \left[\prob_{\theta_{k}}^{n}\left(\xi_k, \xi\right)- \pi^{g_k}(\xi)\right] - \sum_{n=1}^{\floor{N_0T} -1} \left[\prob_{\theta_{k-1}}^{n}\left(\xi_k, \xi\right)-\pi^{g_{k-1}}(\xi)\right]\right]  \right| + 2 C_T \gamma_{0} \\
=& \left| \frac{1}{N} \sum_{k=1}^{\lfloor NT \rfloor-1} \sum_{n=1} ^{\lfloor N_0T \rfloor -1} \left[\prob_{\theta_{k}}^{n}\left(\xi_k, \xi\right) - \prob_{\theta_{k-1}}^{n}\left( \xi_k, \xi\right)\right] \right| +  \frac{\lfloor N_0T \rfloor}{N} \left| \sum_{k=1}^{\lfloor NT \rfloor-1} \left[\pi^{g_k}(\xi)-\pi^{g_{k-1}}(\xi)\right] \right| + 2C_T \gamma_{0}\\
:=& \bar I^N_1 + \bar I^N_2 + 2C_T \gamma_0.
\eae
With the exploration policy $g_k$ in \eqref{ActorwithExploration} and Lipschitz continuity in Assumption \ref{ergodic assumption}, we have
\beq
\left\| g_k- g_{k-1} \right\| \le \sum_{x,a\in \mathcal{X} \times \mathcal{A} } \left| g_k(x, a)- g_{k-1}(x, a) \right| \le C\left| \eta^N_{k} - \eta^N_{k-1} \right| + C \left\| \theta_{k} - \theta_{k-1} \right\|.
\eeq 
For any finite $n$,
\beq
\prob^{n}_{\theta_k}(\xi; \xi') = \sum_{\xi_1, \cdots, \xi_{n-1}} p(x_1|x,a) g_{k}(x_1,a_1) \cdots  p(x'|x_{n-1}, a_{n-1}) g_{k}(x', a'), \quad \forall \xi,\xi'\in \bm{\mathcal{X}\times \mathcal{A}}.
\eeq
is Lipschitz continuous in the policy $g_k$. Then, there exists a constant $C_T$ which only depends on the fixed $N_0, T$ such that
\bae
\bar I_1^N &\le \frac{ \lfloor N_0T \rfloor}{N} \sum_{k=1}^{\lfloor NT \rfloor-1} C\left\|g_k - g_{k-1}\right\| \le \frac{C_T}{N} \left[ \eta_0^N + \sum_{k=1}^{\lfloor NT \rfloor-1} \left\|\theta_k - \theta_{k-1}\right\| \right] \overset{(a)}{\le} \frac{C_T}{N},\\
\bar I_2^N &\le \frac{ \lfloor N_0T \rfloor }{N} \sum_{k=1}^{\lfloor NT \rfloor-1} C\left\|g_{k} - g_{k-1}\right\| \le \frac{C_T}{N} \left[ \eta_0^N + \sum_{k=1}^{\lfloor NT \rfloor-1} \left\|\theta_k - \theta_{k-1}\right\| \right] \overset{(a)}{\le} \frac{C_T}{N},
\eae
where step (a) is due to Lemma \ref{AC bound}:
$$
\left\|\theta_{k}-\theta_{k-1}\right\| \leq \frac{C_T}{N}, \quad \forall k \le NT. 
$$
Thus, when $\mathrm{N}$ is large enough,
\beq
\left|\frac{1}{N} \sum_{k=1}^{\lfloor NT \rfloor-1} \bar \epsilon_{k}^{(2)} \right| \leq 4C_T \gamma_{0}
\eeq
Since $\gamma_{0}$ is arbitrary, 
\beq
\label{Q error2}
\lim _{N \rightarrow \infty} \mathbf{E}\left|\frac{1}{N} \sum_{k=1}^{\lfloor NT \rfloor-1} \bar\epsilon_{k}^{(2)}\right|=0
\eeq

Obviously, for the last term of \eqref{decompose 2} by the bound in \eqref{uniform 2} we have
$$
\lim_{N\to \infty} \frac{1}{N} \bar\rho_{\lfloor NT \rfloor ; 0} = 0,
$$
which together with \eqref{Q error1} and \eqref{Q error2} derive the convergence of $ \frac{1}{N} \sum\limits_{k=0}^{\lfloor NT \rfloor-1} \bar\epsilon_k$ and \eqref{online convergence 2}.

(\romannumeral2) Following the same method in Lemma \ref{limit lemma}, we can prove the convergence of the stochastic error $M^{i,N}_t$ for $i = 1, 2,3$. 

For any $K \in \mathbb{N}$ and $\Delta=\frac{t}{K},$ we have
\begin{equation}
\begin{aligned}
&-M_{t}^{1,N}(\xi) \\
=& \sum_{j=0}^{K-1} \Delta \frac{1}{\lfloor\Delta N\rfloor} \sum_{k=j\lfloor\Delta N\rfloor}^{(j+1)\lfloor\Delta N\rfloor-1} \left(Q_k(\xi_k) \partial_{\xi} Q_k(\xi_k) - \sum_{\xi' \in \bm{\mathcal{X}} \times \bm{\mathcal{A}}} Q_k(\xi')\partial_{\xi} Q_k(\xi') \pi^{g_{k}}(\xi')\right) + o(1) \\
=& \sum_{j=0}^{K-1} \Delta \frac{1}{\lfloor\Delta N\rfloor} \sum_{k=j\lfloor\Delta N\rfloor}^{(j+1)\lfloor\Delta N\rfloor-1}  \left(Q_{j\lfloor\Delta N\rfloor}(\xi_k)\partial_{\xi} Q_{j\lfloor\Delta N\rfloor}(\xi_k) - \sum_{\xi' \in \bm{\mathcal{X}} \times \bm{\mathcal{A}}} Q_{j\lfloor\Delta N\rfloor}(\xi')\partial_{\xi} Q_{j\lfloor\Delta N\rfloor}(\xi') \pi^{g_{k}}(\xi')\right) \\
+& \sum_{j=0}^{K-1} \Delta \frac{1}{\lfloor\Delta N\rfloor} \sum_{k=j\lfloor\Delta N\rfloor}^{(j+1)\lfloor\Delta N\rfloor-1}   \Bigg[ \left(Q_k(\xi_k) \partial_{\xi} Q_k(\xi_k) - \sum_{\xi' \in \bm{\mathcal{X}} \times \bm{\mathcal{A}}} Q_k(\xi')\partial_{\xi} Q_k(\xi') \pi^{g_{k}}(\xi')\right) \\
-& \left(Q_{j\lfloor\Delta N\rfloor}(\xi_k)\partial_{\xi} Q_{j\lfloor\Delta N\rfloor}(\xi_k) - \sum_{\xi' \in \bm{\mathcal{X}} \times \bm{\mathcal{A}}} Q_{j\lfloor\Delta N\rfloor}(\xi')\partial_{\xi} Q_{j\lfloor\Delta N\rfloor}(\xi') \pi^{g_{k}}(\xi')\right) \Bigg] + o(1)\\
:=& \sum_{j=0}^{K-1} \Delta I^N_{5,j} + \sum_{j=0}^{K-1} \Delta I^N_{6,j} + o(1),       
\end{aligned}
\end{equation}
where the term $o(1)$ goes to zero, at least, in $L^{1}$ as $N \rightarrow \infty$. 

To prove the convergence of the first term, note that
\begin{equation}
\begin{aligned}
&Q_{j\lfloor\Delta N\rfloor}(\xi_k)\partial_{\xi} Q_{j\lfloor\Delta N\rfloor}(\xi_k) - \sum_{\xi'} Q_{j\lfloor\Delta N\rfloor}(\xi')\partial_{\xi} Q_{j\lfloor\Delta N\rfloor}(\xi') \pi^{g_{k}}(\xi') \\
=& \sum_{\xi'} Q_{j\lfloor\Delta N\rfloor}(\xi') \partial_{\xi} Q_{j\lfloor\Delta N\rfloor}(\xi') \mathbbm{1}_{\{ \xi_k = \xi'\}} -  \sum_{\xi'} Q_{j\lfloor\Delta N\rfloor}(\xi')\partial_{\xi} Q_{j\lfloor\Delta N\rfloor}(\xi') \pi^{g_{k}}(\xi')\\
=& \sum_{\xi'} Q_{j\lfloor\Delta N\rfloor}(\xi') \partial_{\xi} Q_{j\lfloor\Delta N\rfloor}(\xi') \left[ \mathbbm{1}_{\{ \xi_k = \xi'\}} - \pi^{g_{k}}(\xi')\right].
\end{aligned}
\end{equation}
Thus, for any $j \in 0,1, \ldots, K$,
\bae
\left| I^N_{5,j} \right| &=  \left| \frac{1}{\lfloor\Delta N\rfloor} \sum_{k=j\lfloor\Delta N\rfloor}^{(j+1)\lfloor\Delta N\rfloor-1} \sum_{\xi'} Q_{j\lfloor\Delta N\rfloor}(\xi') \partial_{\xi} Q_{j\lfloor\Delta N\rfloor}(\xi') \left[ \mathbbm{1}_{\{ \xi_k = \xi'\}} - \pi^{g_{k}}(\xi')\right] \right| \\
&= \left| \sum_{\xi'} Q_{j\lfloor\Delta N\rfloor}(\xi') \partial_{\xi} Q_{j\lfloor\Delta N\rfloor}(\xi') \frac{1}{\lfloor\Delta N\rfloor}\sum_{k=j\lfloor\Delta N\rfloor}^{(j+1)\lfloor\Delta N\rfloor-1} \left[ \mathbbm{1}_{\{ \xi_k = \xi'\}} - \pi^{g_{k}}(\xi')\right] \right|\\
&\le C \sum_{\xi'}\left| \frac{1}{\lfloor\Delta N\rfloor}\sum_{k=j\lfloor\Delta N\rfloor}^{(j+1)\lfloor\Delta N\rfloor-1}  \left[ \mathbbm{1}_{\{ \xi_k = \xi'\}} - \pi^{g_{k}}(\xi')\right] \right|,
\eae
which together with Lemma \ref{fluctuation} proves
\beq
\lim_{N \to \infty} \e\left| I^N_{5,j} \right| = 0.
\eeq
Thus,
\beq
\sum_{j=0}^{K-1} \Delta I^N_{5,j}  = \Delta \sum_{j=0}^{K-1} O(1) = t \frac{\sum_{j=0}^{K-1} O(1)}{K}, 
\eeq
which proves the convergence of the first term.

For the second term, by the bound in Lemma \ref{AC bound}, for any $k\le TN $ we have  
\begin{equation}
\begin{aligned}
&\sup _{\xi' \in \mathcal{X} \times \mathcal{A}} \left|Q_{k}(\xi')\right| \le C,\\
&\sup_{\xi' \in \mathcal{X} \times \mathcal{A}} \left| Q_k(\xi') - Q_{k-1}(\xi') \right| \le \frac{C}{N}.
\end{aligned}
\end{equation}
Note that
$$
\partial_{\xi} Q_k(\xi') = \mathbbm{1}_{\{ \xi' = \xi \}}.
$$ 
Then, by the Lipschitz continuity of the softmax transformation and the bound in Lemma \ref{AC bound},
\beq
\left| Q_{k}(\xi') \partial_{\xi} Q_k(\xi') - Q_{k-1}(\xi') \partial_{\xi} Q_{k-1}(\xi') \right| =\mathbbm{1}_{\{ \xi_k = \xi \}} \left| Q_{k}(\xi') - Q_{k-1}(\xi') \right| \le \frac{C}{N}.
\eeq
Then, for any $j \in 0,1, \cdots, K-1$ and any $k\in [j\lfloor\Delta N\rfloor, (j+1)\lfloor\Delta N\rfloor -1 ]$, 
\beq
\left| Q_k(\xi')\partial_{\xi} Q_k(\xi') - Q_{j\lfloor\Delta N\rfloor}(\xi') \partial_\xi Q_{j\lfloor\Delta N\rfloor}(\xi') \right| \le \frac{C(k-j\lfloor\Delta N\rfloor )}{N}.
\eeq
Therefore,
\bae
\sum_{j=0}^{K-1} \Delta I^N_{6,j}
&\leq C \sum_{j=0}^{K-1} \Delta \frac{1}{\lfloor\Delta N\rfloor} \sum_{k=j \lfloor \Delta N\rfloor}^{(j+1)\lfloor\Delta N\rfloor-1}  \frac{k-j\lfloor\Delta N\rfloor}{N}\\
&= C \sum_{j=0}^{K-1} \Delta \frac{1}{\lfloor\Delta N\rfloor} \sum_{k=0 }^{\lfloor\Delta N\rfloor-1} \frac{k}{N}\\
&\le  C \sum_{j=0}^{K-1} \Delta \frac{1}{\lfloor\Delta N\rfloor} \frac{\lfloor\Delta N\rfloor^2}{N}\\
&\le  C \sum_{j=0}^{K-1} \Delta  \frac{\lfloor\Delta N\rfloor}{N}\\
&\le  C \sum_{j=0}^{K-1} \Delta^2 \\
&\le  C \Delta.
\eae

Collecting our results, we have shown that
\beq
\lim _{N \rightarrow \infty} \sup _{t \in(0, T]} \e\left|M_{t}^{1,N}\right| \leq C \frac{T}{K}
\eeq
Note that $K$ was arbitrary. Consequently, we obtain
\beq
\lim _{N \rightarrow \infty} \sup _{t \in(0, T]} \e\left|M_{t}^{1,N}\right|=0,
\eeq
Using the same approach, one can prove the claim for $M_{t}^{2,N}$
and $M_{t}^{3,N}$. The details of the proof are omitted due to the similarity of the argument.
\end{proof}

\section{Proof of Lemma \ref{gradient control}}\label{appendix gradient}

\begin{proof}	
To prove \eqref{gradient control1}, note that 
\bae
\left\| \nabla_\theta J(f_\theta) \right\| &= \left[ \sum_{x, a}\left(\partial_{x,a} J(f_\theta)\right)^2 \right]^{\frac{1}{2}} \\
&\geq\left[\sum_{x}\left(\frac{\partial J(f_\theta)} {\partial \theta\left(x, a^{*}(x)\right)}\right)^{2}\right]^{\frac{1}{2}}\\
&\overset{(a)}{\geq} \frac{1}{\sqrt{|\bm{\mathcal{X}|}}} \sum_{x}\left|\frac{\partial J(f_\theta)} {\partial \theta\left(x, a^{*}(x)\right)}\right|\\
&\overset{(b)}{=} \frac{1}{\sqrt{|\bm{\mathcal{X}|}}} \sum_{x} \left| \nu_{\mu}^{f_{\theta}}(x) \cdot f_{\theta}\left(x, a^{*}(x)\right) \cdot A^{f_{\theta}}\left(x, a^{*}(x)\right)\right|\\
&= \frac{1}{\sqrt{|\bm{\mathcal{X}|}}} \sum_{x} \nu_{\mu}^{f_{\theta}}(x) \cdot f_{\theta}\left(x, a^{*}(x)\right) \cdot \left|A^{f_{\theta}}\left(x, a^{*}(x)\right)\right|,
\eae
where step (a) is by Cauthy-Schwarz inequality and step (b) is by Lemma
\ref{softmax gradient}.  

Define the coefficient as
$$
\left\|\frac{\nu_{\mu}^{f^{*}}}{\nu_{\mu}^{f}}\right\|_{\infty}=\max _{x} \frac{\nu_{\mu}^{f^{*}}(x)}{\nu_{\mu}^{*}(x)}.
$$ 
We then have the inequality:
\bae
\left\| \nabla_\theta J(f_\theta) \right\| &\geq \frac{1}{\sqrt{|\bm{\mathcal{X}|}}} \sum_{x} \frac{\nu_{\mu}^{f_{\theta}}(x)}{\nu_{\mu}^{f^{*}}(x)} \cdot \nu_{\mu}^{f^{*}}(x) \cdot f_{\theta}\left(x, a^{*}(x) \right) \cdot\left|A^{f_{\theta}}\left(x, a^{*}(x)\right)\right|  \\
&\geq \frac{1}{\sqrt{|\bm{\mathcal{X}|}}} \cdot\left\| \frac{\nu_{\mu}^{f^{*}}}{\nu_{\mu}^{f_{\theta}}}\right\|_{\infty}^{-1} \cdot \min_{x} f_{\theta}\left(x, a^{*}(x)\right) \cdot \sum_{x} \nu_{\mu}^{f^{*}}(x) \cdot\left|A^{f_{\theta}}\left(x, a^{*}(x)\right)\right| \\
&\geq \frac{1}{\sqrt{|\bm{\mathcal{X}|}}} \cdot\left\| \frac{\nu_{\mu}^{f^{*}}}{\nu_{\mu}^{f_{\theta}}}\right\|_{\infty}^{-1} \cdot \min_{x} f_{\theta}\left(x, a^{*}(x)\right) \cdot \sum_{x} \nu_{\mu}^{f^{*}}(x) \cdot A^{f_{\theta}}\left(x, a^{*}(x)\right)  \\
&\overset{(a)}{=} \frac{1}{\sqrt{|\bm{\mathcal{X}|}}} \cdot\left\| \frac{\nu_{\mu}^{f^{*}}}{\nu_{\mu}^{f_{\theta}}}\right\|_{\infty}^{-1}  \cdot \min_{x} f_{\theta}\left(x, a^{*}(x)\right) \cdot \sum_{x} \nu_{\mu}^{f^{*}}(x) \sum_{a} f^{*}(x, a) \cdot A^{f_{\theta}}(x, a) \\
&= \frac{1}{\sqrt{ |\bm{\mathcal{X}|}} } \cdot\left\|\frac{\nu_{\mu}^{f^*}}{\nu_{\mu}^{f_\theta}}\right\|_{\infty}^{-1} \cdot \min_{x} f_{\theta}\left(x, a^{*}(x) \right) \cdot\left[ J(f^*) - J(f_\theta)\right]
\eae
where step (a) uses the fact that $f^{*}$ is deterministic and in state $x$ selects $a^{*}(x)$ with probability one. The last equality uses Lemma \ref{difference}.

To prove the second claim, given a policy $f$, recall the greedy action set for each state $x$:
$$
\bm{\mathcal{A}^*}(x) =\left\{a^*(x) \in \bm{\mathcal{A}}: V^{f^*}(x, a^*(x))= \displaystyle \max_{a} V^{f^*}(x, a)\right\},
$$
By similar arguments as before, we can show that
\bae
\left\| \nabla_\theta J(f_\theta) \right\| &\geq \frac{1}{\sqrt{ |\bm{\mathcal{X}}| |\bm{\mathcal{A}}|} } \sum_{x, a}\left|\frac{\partial J(f_\theta)}{\partial \theta(x, a)}\right|\\
&= \frac{1}{\sqrt{ |\bm{\mathcal{X}}| |\bm{\mathcal{A}}|} } \sum_{x} \nu_{\mu}^{f_{\theta}}(x) \sum_{a} f_{\theta}(x, a) \cdot\left|A^{f_{\theta}}(x, a)\right| \quad \text { (by Lemma  \ref{softmax gradient})} \\
&\geq  \frac{1}{\sqrt{ |\bm{\mathcal{X}}| |\bm{\mathcal{A}}|} } \sum_{x} \nu_{\mu}^{f_{\theta}}(x) \sum_{ a^*(x) \in \bm{\mathcal{A}}^{*}(x)} f_{\theta}(x, a^*(x)) \cdot\left|A^{f_{\theta}}(x, a^*(x))\right| \\
&= \frac{1}{\sqrt{ |\bm{\mathcal{X}}| |\bm{\mathcal{A}}|} } \sum_{x} \nu_{\mu}^{f_{\theta}}(x) \sum_{ a^*(x) \in \bm{\mathcal{A}}^{*}(x)} \frac{ f_{\theta}(x, a^*(x))}{\sum\limits_{a' \in \bm{\mathcal{A}^*}(x)} f_\theta(x, a')} \cdot \left[ \sum\limits_{a' \in \bm{\mathcal{A}^*}(x)} f_\theta(x, a') \right] \cdot\left|A^{f_{\theta}}(x, a^*(x))\right|  \\ 
&\overset{(a)}{=} \frac{1}{\sqrt{ |\bm{\mathcal{X}}| |\bm{\mathcal{A}}|} } \sum_{x} \nu_{\mu}^{f_{\theta}}(x) \sum_{ a^*(x) \in \bm{\mathcal{A}}^{*}(x)}  f^*_{\theta}(x, a^*(x)) \cdot \left[ \sum\limits_{a' \in \bm{\mathcal{A}^*}(x)} f_\theta(x, a') \right] \cdot\left|A^{f_{\theta}}(x, a^*(x))\right|  \\ 
&\geq  \frac{1}{\sqrt{ |\bm{\mathcal{X}}||\bm{\mathcal{A}}|} } \cdot \left\|\frac{\nu_{\mu}^{f_\theta^*}}{\nu_{\mu}^{f_\theta}}\right\|_{\infty}^{-1} \cdot \left[\min_{x} \sum\limits_{a' \in \bm{\mathcal{A}^*}(x)} f_\theta(x, a') \right] \cdot \sum_{x} \nu_{\mu}^{f_\theta^{*}}(x) \sum_{ a^*(x) \in \bm{\mathcal{A}}^{*}(x)}  f^*_{\theta}(x, a^*(x)) A^{f_{\theta}}(x, a^*(x))\\
&\overset{(b)}{=} \frac{1}{\sqrt{ |\bm{\mathcal{X}}||\bm{\mathcal{A}}|} } \cdot \left\|\frac{\nu_{\mu}^{f_\theta^*}}{\nu_{\mu}^{f_\theta}}\right\|_{\infty}^{-1} \cdot \left[\min_{x} \sum\limits_{a' \in \bm{\mathcal{A}^*}(x)} f_\theta(x, a') \right] \cdot \sum_{x} \nu_{\mu}^{f_\theta^{*}}(x) \sum_{ a \in \bm{\mathcal{A}}}  f^*_{\theta}(x, a) A^{f_{\theta}}(x, a)\\
&\overset{(c)}{=} \frac{1}{\sqrt{ |\bm{\mathcal{X}}||\bm{\mathcal{A}}|} } \cdot \left\|\frac{\nu_{\mu}^{f_\theta^*}}{\nu_{\mu}^{f_\theta}}\right\|_{\infty}^{-1} \cdot \left[\min_{x} \sum\limits_{a' \in \bm{\mathcal{A}^*}(x)} f_\theta(x, a') \right] \cdot \left[ J(f^*_\theta) - J(f_\theta) \right]\\
&\overset{(d)}{=} \frac{1}{\sqrt{ |\bm{\mathcal{X}}||\bm{\mathcal{A}}|} } \cdot \left\|\frac{\nu_{\mu}^{f_\theta^*}}{\nu_{\mu}^{f_\theta}}\right\|_{\infty}^{-1} \cdot \left[\min_{x} \sum\limits_{a' \in \bm{\mathcal{A}^*}(x)} f_\theta(x, a') \right] \cdot \left[ J(f^*) - J(f_\theta) \right],
\eae
where step $(a)$ and $(b)$ are by the definition of the optimal policy \eqref{new optimal policy}, step $(c)$ by due to difference lemma \ref{difference} and step $d$ is because of $f^*_\theta$ is optimal policy and thus $J(f_\theta^*) = J(f^*)$.
\end{proof}

\bibliographystyle{plain}
\bibliography{cite}

\begin{thebibliography}{10}

\bibitem{agarwal2020optimality}
Alekh Agarwal, Sham~M Kakade, Jason~D Lee, and Gaurav Mahajan.
\newblock Optimality and approximation with policy gradient methods in markov
  decision processes.
\newblock In {\em Conference on Learning Theory}, pages 64--66. PMLR, 2020.

\bibitem{bertsekas2000gradient}
Dimitri~P Bertsekas and John~N Tsitsiklis.
\newblock Gradient convergence in gradient methods with errors.
\newblock {\em SIAM Journal on Optimization}, 10(3):627--642, 2000.

\bibitem{bhandari2019global}
Jalaj Bhandari and Daniel Russo.
\newblock Global optimality guarantees for policy gradient methods.
\newblock {\em arXiv preprint arXiv:1906.01786}, 2019.

\bibitem{borkar1997stochastic}
Vivek~S Borkar.
\newblock Stochastic approximation with two time scales.
\newblock {\em Systems \& Control Letters}, 29(5):291--294, 1997.

\bibitem{borkar2009stochastic}
Vivek~S Borkar.
\newblock {\em Stochastic approximation: a dynamical systems viewpoint},
  volume~48.
\newblock Springer, 2009.

\bibitem{borkar1997actor}
Vivek~S Borkar and Vijaymohan~R Konda.
\newblock The actor-critic algorithm as multi-time-scale stochastic
  approximation.
\newblock {\em Sadhana}, 22(4):525--543, 1997.

\bibitem{borkar2000ode}
Vivek~S Borkar and Sean~P Meyn.
\newblock The ode method for convergence of stochastic approximation and
  reinforcement learning.
\newblock {\em SIAM Journal on Control and Optimization}, 38(2):447--469, 2000.

\bibitem{dalal2018finite}
Gal Dalal, Gugan Thoppe, Bal{\'a}zs Sz{\"o}r{\'e}nyi, and Shie Mannor.
\newblock Finite sample analysis of two-timescale stochastic approximation with
  applications to reinforcement learning.
\newblock In {\em Conference On Learning Theory}, pages 1199--1233. PMLR, 2018.

\bibitem{gupta2019finite}
Harsh Gupta, Rayadurgam Srikant, and Lei Ying.
\newblock Finite-time performance bounds and adaptive learning rate selection
  for two time-scale reinforcement learning.
\newblock {\em Advances in neural information processing systems}, 32, 2019.

\bibitem{kakade2002approximately}
Sham Kakade and John Langford.
\newblock Approximately optimal approximate reinforcement learning.
\newblock In {\em In Proc. 19th International Conference on Machine Learning}.
  Citeseer, 2002.

\bibitem{kakade2001natural}
Sham~M Kakade.
\newblock A natural policy gradient.
\newblock {\em Advances in neural information processing systems}, 14, 2001.

\bibitem{khodadadian2021finite2}
Sajad Khodadadian, Zaiwei Chen, and Siva~Theja Maguluri.
\newblock Finite-sample analysis of off-policy natural actor-critic algorithm.
\newblock In {\em International Conference on Machine Learning}, pages
  5420--5431. PMLR, 2021.

\bibitem{khodadadian2021finite}
Sajad Khodadadian, Thinh~T Doan, Siva~Theja Maguluri, and Justin Romberg.
\newblock Finite sample analysis of two-time-scale natural actor-critic
  algorithm.
\newblock {\em arXiv preprint arXiv:2101.10506}, 2021.

\bibitem{khodadadian2021linear}
Sajad Khodadadian, Prakirt~Raj Jhunjhunwala, Sushil~Mahavir Varma, and
  Siva~Theja Maguluri.
\newblock On the linear convergence of natural policy gradient algorithm.
\newblock {\em arXiv preprint arXiv:2105.01424}, 2021.

\bibitem{konda2002actor}
V~Konda.
\newblock Actor-critic algorithms (ph. d. thesis).
\newblock {\em Department of Electrical Engineering and Computer Science,
  Massachusetts Institute of Technology}, 2002.

\bibitem{konda2000actor}
Vijay~R Konda and John~N Tsitsiklis.
\newblock Actor-critic algorithms.
\newblock In {\em Advances in neural information processing systems}, pages
  1008--1014. Citeseer, 2000.

\bibitem{konda2003linear}
Vijay~R Konda and John~N Tsitsiklis.
\newblock Linear stochastic approximation driven by slowly varying markov
  chains.
\newblock {\em Systems \& control letters}, 50(2):95--102, 2003.

\bibitem{konda1999actor}
Vijaymohan~R Konda and Vivek~S Borkar.
\newblock Actor-critic--type learning algorithms for markov decision processes.
\newblock {\em SIAM Journal on control and Optimization}, 38(1):94--123, 1999.

\bibitem{kumar2019sample}
Harshat Kumar, Alec Koppel, and Alejandro Ribeiro.
\newblock On the sample complexity of actor-critic method for reinforcement
  learning with function approximation.
\newblock {\em arXiv preprint arXiv:1910.08412}, 2019.

\bibitem{lawler2006introduction}
Gregory~F Lawler.
\newblock {\em Introduction to stochastic processes}.
\newblock CRC Press, 2006.

\bibitem{mcnabb1986comparison}
Alex McNabb.
\newblock Comparison theorems for differential equations.
\newblock {\em Journal of mathematical analysis and applications},
  119(1-2):417--428, 1986.

\bibitem{mei2021understanding}
Jincheng Mei, Bo~Dai, Chenjun Xiao, Csaba Szepesvari, and Dale Schuurmans.
\newblock Understanding the effect of stochasticity in policy optimization.
\newblock {\em Advances in Neural Information Processing Systems}, 34, 2021.

\bibitem{mei2020global}
Jincheng Mei, Chenjun Xiao, Csaba Szepesvari, and Dale Schuurmans.
\newblock On the global convergence rates of softmax policy gradient methods.
\newblock In {\em International Conference on Machine Learning}, pages
  6820--6829. PMLR, 2020.

\bibitem{melo2001convergence}
Francisco~S Melo.
\newblock Convergence of q-learning: A simple proof.
\newblock {\em Institute Of Systems and Robotics, Tech. Rep}, pages 1--4, 2001.

\bibitem{meyn2012markov}
Sean~P Meyn and Richard~L Tweedie.
\newblock {\em Markov chains and stochastic stability}.
\newblock Springer Science \& Business Media, 2012.

\bibitem{norris1998markov}
James~R Norris, John~Robert Norris, and James~Robert Norris.
\newblock {\em Markov chains}.
\newblock Number~2. Cambridge university press, 1998.

\bibitem{peters2008natural}
Jan Peters and Stefan Schaal.
\newblock Natural actor-critic.
\newblock {\em Neurocomputing}, 71(7-9):1180--1190, 2008.

\bibitem{schulman2015trust}
John Schulman, Sergey Levine, Pieter Abbeel, Michael Jordan, and Philipp
  Moritz.
\newblock Trust region policy optimization.
\newblock In {\em International conference on machine learning}, pages
  1889--1897. PMLR, 2015.

\bibitem{schulman2017proximal}
John Schulman, Filip Wolski, Prafulla Dhariwal, Alec Radford, and Oleg Klimov.
\newblock Proximal policy optimization algorithms.
\newblock {\em arXiv preprint arXiv:1707.06347}, 2017.

\bibitem{sirignano2019asymptotics}
Justin Sirignano and Konstantinos Spiliopoulos.
\newblock Asymptotics of reinforcement learning with neural networks.
\newblock {\em arXiv preprint arXiv:1911.07304}, 2019.

\bibitem{sutton2018reinforcement}
Richard~S Sutton and Andrew~G Barto.
\newblock {\em Reinforcement learning: An introduction}.
\newblock MIT press, 2018.

\bibitem{sutton1999policy}
Richard~S Sutton, David~A McAllester, Satinder~P Singh, Yishay Mansour, et~al.
\newblock Policy gradient methods for reinforcement learning with function
  approximation.
\newblock In {\em NIPs}, volume~99, pages 1057--1063. Citeseer, 1999.

\bibitem{szepesvari1997asymptotic}
Csaba Szepesv{\'a}ri.
\newblock The asymptotic convergence-rate of q-learning.
\newblock {\em Advances in neural information processing systems}, 10, 1997.

\bibitem{teschl2012ordinary}
Gerald Teschl.
\newblock {\em Ordinary differential equations and dynamical systems}, volume
  140.
\newblock American Mathematical Soc., 2012.

\bibitem{wang2019neural}
Lingxiao Wang, Qi~Cai, Zhuoran Yang, and Zhaoran Wang.
\newblock Neural policy gradient methods: Global optimality and rates of
  convergence.
\newblock {\em arXiv preprint arXiv:1909.01150}, 2019.

\bibitem{watkins1992q}
Christopher~JCH Watkins and Peter Dayan.
\newblock Q-learning.
\newblock {\em Machine learning}, 8(3):279--292, 1992.

\bibitem{wu2020finite}
Yue Wu, Weitong Zhang, Pan Xu, and Quanquan Gu.
\newblock A finite time analysis of two time-scale actor critic methods.
\newblock {\em arXiv preprint arXiv:2005.01350}, 2020.

\bibitem{xu2020non}
Tengyu Xu, Zhe Wang, and Yingbin Liang.
\newblock Non-asymptotic convergence analysis of two time-scale (natural)
  actor-critic algorithms.
\newblock {\em arXiv preprint arXiv:2005.03557}, 2020.

\bibitem{yang2019global}
Zhuoran Yang, Yongxin Chen, Mingyi Hong, and Zhaoran Wang.
\newblock On the global convergence of actor-critic: A case for linear
  quadratic regulator with ergodic cost.
\newblock {\em arXiv preprint arXiv:1907.06246}, 2019.

\bibitem{yang2018finite}
Zhuoran Yang, Kaiqing Zhang, Mingyi Hong, and Tamer Ba{\c{s}}ar.
\newblock A finite sample analysis of the actor-critic algorithm.
\newblock In {\em 2018 IEEE conference on decision and control (CDC)}, pages
  2759--2764. IEEE, 2018.

\end{thebibliography}

\end{document}